\documentclass{article}
\usepackage[a4paper,margin=3cm]{geometry}

\usepackage{cmap}             
\usepackage[T1]{fontenc}      
\usepackage[utf8]{inputenc}
\usepackage[ngerman,english,shorthands=none]{babel}

\usepackage{amsmath,amssymb}
\usepackage[table]{xcolor}
\definecolor{themecolor}{rgb}{0,0.57,0.76}       
\definecolor{subthemecolor}{rgb}{0.64,0,0.48}    
\definecolor{gray}{rgb}{0.6,0.6,0.6}

\usepackage[most]{tcolorbox}
\tcbuselibrary{breakable, skins}
\newtcolorbox[auto counter, number within=section]{definitionbox}[2][]{%
    colback=themecolor!10!white, 
    colframe=themecolor, 
    coltitle=white, 
    fonttitle=\bfseries, 
    title=#2,
    #1 
}

\tcbuselibrary{listings}
\tcbuselibrary{xparse}

\usepackage{cite}  

\usepackage{url}            
\usepackage{booktabs}       
\usepackage{dcolumn}
\usepackage{makecell}
\usepackage{multirow}
\usepackage{amsfonts}       
\usepackage{nicefrac}       
\usepackage{microtype}      
\usepackage{graphicx}
\usepackage{doi}
\usepackage[labelfont=bf]{subcaption}
\usepackage{algorithm}
\usepackage{algpseudocode}
\algrenewcommand{\algorithmiccomment}[1]{\hfill\textcolor{gray}{$\triangleright$ #1}}

\usepackage{placeins}        
\usepackage{fontawesome5}    

\usepackage[
	colorinlistoftodos,		
	\languagename,			
	textsize=footnotesize,	
]{todonotes}

\DeclareCaptionFont{themesf}{\color{themecolor}\sffamily}
\DeclareCaptionFont{themebf}{\color{themecolor}\bfseries}
\usepackage{standalone}
\usepackage{tikz,pgfplots,pgfplotstable}
\pgfplotsset{compat=newest}
\usetikzlibrary{positioning, fit, backgrounds, shapes, calc}

\usepackage{amsthm}
\usepackage{aliascnt}
\usepackage{hyperref}
\hypersetup{%
   colorlinks = true,
   linkcolor = subthemecolor,
   citecolor = subthemecolor,
   urlcolor = subthemecolor}

\newtheorem{definition}{Definition}[section]

\newaliascnt{lemma}{definition}
\newtheorem{lemma}[lemma]{Lemma}
\aliascntresetthe{lemma}

\newaliascnt{theorem}{definition}
\newtheorem{theorem}[theorem]{Theorem}
\aliascntresetthe{theorem}

\newaliascnt{corollary}{definition}
\newtheorem{corollary}[corollary]{Corollary}
\aliascntresetthe{corollary}

\addto\extrasenglish{%

}
   
\newcommand{\solver}[1]{\texttt{#1}}
\newcommand{\Fld}{\texttt{fld}}
\newcommand{\Move}{\texttt{move}}
\newcommand{\TchFld}{\texttt{touch}}
\newcommand{\Tmin}{\texttt{T}_\text{min}}
\newcommand{\cupdot}{\mathbin{\mathaccent\cdot\cup}}

\title{Solving Flood-It Puzzles: A Comparison of Tailored A* Search with Declarative Solver Approaches}

\author{Philipp von Manteuffel \\[0.2em]
        \small \faEnvelope\ \texttt{philipp.von.manteuffel@komail.eu} \\[0.8em]
        Benjamin Blankertz \\[0.2em]
        \small \faEnvelope\ \texttt{benjamin.blankertz@tu-berlin.de} \\[0.8em]
        \small Chair for Neurotechnology \\
        \small Technische Universität Berlin
        }

\hypersetup{
pdftitle={Flood-It},
pdfauthor={Philipp von Manteuffel, Benjamin Blankertz},
pdfkeywords={Flood-It, Puzzles, A* Search, Heuristics, SAT, Integer Programming, Constraint Programming, Minimization Problem},
}

\begin{document}
\maketitle

\begin{abstract}
We present an optimized approach for finding minimal move sequences to solve Flood-It puzzles using A* search and compare it with approaches using SAT, Integer Programming, and Constraint Programming solvers. The performance of A* search is considerably improved by using an appropriate heuristic and sophisticated equivalences of game states to limit the search space. Each aspect is systematically evaluated, providing insights that generalize to other problems. By comparing the best variant of each approach, the A* search is the fastest in finding a minimal solution in about 75\% of the Flood-It instances tested. This result is in line with our expectation since A* specifically looks for a minimum solution. In contrast, the SAT-based approach is designed to solve the decision problem of whether a solution with a given length exists. An iterative search is required to determine a minimum solution length using a SAT-based approach. However, with increasing complexity, quantified by the minimum solution length, the SAT-based approach outperforms A*. We also compared the different approaches to the decision problem, where A* again performed best for the largest number of puzzles. Here, again, SAT supersedes A* for higher complexities, albeit with a lower crossing point.
\end{abstract}

\maketitle

\newpage

\section{Introduction}
\label{sec:intro}
Flood-It is a combinatorial puzzle game played on an undirected colored graph $G=(V,E)$. Given a pivot element $p \in V$, the goal of the game is to turn the graph monochromatic with the smallest possible number of moves. A move consists of choosing a color and recoloring the monochromatic component of $G$ that contains $p$ with the chosen color. So, the game can be thought of as a series of flood-filling operations, where the monochromatic component that contains $p$ is considered to be the set of flooded vertices. 
The game is often played on a square $n\times n$ board where the top left cell is the pivot element, and the player is asked to solve it within a given number of moves, see \autoref{figure:game_introduction}.

\begin{figure}[ht]
    \centering
    \resizebox{\textwidth}{!}{
        	\begin{tikzpicture} [every node/.style={fill opacity=0.5, text opacity=100},
        proces/.style={outer sep=0pt,
            text width=0.3cm,
            align=center, anchor=center},
        square/.style={regular polygon,regular polygon sides=4},
		arrow/.style={-latex, shorten >=1ex, shorten <=1ex},
		graphnode/.style={fill opacity=0.5, text opacity=100},
		innernode/.style={square, proces, outer sep=0pt , draw, draw opacity=1, fill opacity=1, text opacity=0, scale=1},
		rednode/.style={innernode, draw=red, fill=red, node contents={1}},
		cyannode/.style={innernode,draw=cyan, fill=cyan, node contents={2}},
		greennode/.style={innernode,draw=green, fill=green, node contents={3}},
		yellownode/.style={innernode,draw=yellow, fill=yellow, node contents={4}},
		edgenode/.style={draw opacity=0, scale=0.5, midway, above=2pt, midway},
		rededgenode/.style={rednode, edgenode},
		cyanedgenode/.style={cyannode, edgenode},
		greenedgenode/.style={greennode, edgenode},
		yellowedgenode/.style={yellownode, edgenode}]
		\node (11)  at (0, 1){\begin{tikzpicture}[]
				\matrix[]{
						\node [rednode]  ; & \node [greennode] ; &  \node [cyannode] ; \\
						\node [greennode]  ; & \node [yellownode] ; &  \node [rednode] ; \\
						\node [greennode]  ; & \node [cyannode] ; &  \node [yellownode] ; \\
				};

		\end{tikzpicture}};
		\node  (12)  [right=of 11]  {\begin{tikzpicture}[]
				\matrix[]{
					\node [greennode]  ; & \node [greennode] ; &  \node [cyannode] ; \\
					\node [greennode]  ; & \node [yellownode] ; &  \node [rednode] ; \\
					\node [greennode]  ; & \node [cyannode] ; &  \node [yellownode] ; \\
				};
		\end{tikzpicture}};
		\node (13)  [right=of 12] {\begin{tikzpicture}[]
			\matrix[]{
				\node [cyannode]  ; & \node [cyannode] ; &  \node [cyannode] ; \\
				\node [cyannode]  ; & \node [yellownode] ; &  \node [rednode] ; \\
				\node [cyannode]  ; & \node [cyannode] ; &  \node [yellownode] ; \\
			};
		\end{tikzpicture}};
		\node (14)  [right=of 13]  {    \begin{tikzpicture}[]
			\matrix[]{
				\node [rednode]  ; & \node [rednode] ; &  \node [rednode] ; \\
				\node [rednode]  ; & \node [yellownode] ; &  \node [rednode] ; \\
				\node [rednode]  ; & \node [rednode] ; &  \node [yellownode] ; \\
			};
		\end{tikzpicture}};
		\node (15)  [right=of 14]  {    \begin{tikzpicture}[]
			\matrix[]{
				\node [yellownode]  ; & \node [yellownode] ; &  \node [yellownode] ; \\
				\node [yellownode]  ; & \node [yellownode] ; &  \node [yellownode] ; \\
				\node [yellownode]  ; & \node [yellownode] ; &  \node [yellownode] ; \\
			};
		\end{tikzpicture}};

		\draw [arrow]  (11) -- (12)  node [greenedgenode];
		\draw [arrow]  (12) -- (13) node [cyanedgenode];
		\draw [arrow]  (13) -- (14) node [rededgenode];
		\draw [arrow]  (14) -- (15) node [yellowedgenode];

	\end{tikzpicture}
    }

    \caption{A $3 \times 3$ Flood-It board on which a sequence of $4$ moves is played that solves the board.}
    \label{figure:game_introduction}
\end{figure}

Finding \emph{some} solution to a Flood-It instance is trivial: Choosing any 'playable' color, that is, a color of a cell that is adjacent to the flooded region, increases the number of flooded cells. Playing an arbitrary sequence of playable colors will result in a completely flooded graph in a number of moves that is less than the number of vertices of the graph. 

In this paper, we consider the following two problems:

\begin{itemize}
    \item \textbf{Decision Problem:} Given a Flood-It instance and an integer $T$, decide whether there is a solution in at most $T$ moves (and provide such a solution).
    \item \textbf{Minimization Problem:} Given a Flood-It instance, determine the minimum number of moves required to solve it (and provide such a solution).
\end{itemize}

Finding a \emph{shortest} solution is trivial for instances with just two colors. In this case, there is only one playable move at each step. But already for three colors, the Flood-It decision problem is in general NP-complete and the minimization problem is NP-hard in the size of the graph, see \cite{Lagoutte2014FloodingGO} which also discusses certain edge cases that are not NP-hard.

\subsection{Literature Review on Flood-It Solving Approaches}
A solving approach based on SAT encoding for the decision problem was suggested in \cite{vanstiphoutSATEncodingsFloodIt2020}. For the minimization problem, an approach based on integer programming has been investigated in \cite{dasilvaEfficientApproachesFlooding2020}. To our knowledge, there are no publications that systematically compare different variants and approaches for either the decision or the minimization problem. Moreover, we are not aware of any scientific publication discussing the use of A* search for Flood-It. While there is an implementation available on Github, \cite{puchert_floodit}, an evaluation of its performance and comparison to other solving approaches is missing. That implementation does not exploit equivalences of game states for pruning, but uses a heuristic that was found to be efficient in our analysis.

\subsection{Structure of the Paper}
\autoref{sec:game_states} models the Flood-It game formally to derive properties that are important for the proposed optimizations of A* search in \autoref{sec:astar}. The two sections are interconnected in the following way: A classification of the vertices in \autoref{sec:game_states|vertex_categories} is introduced to derive an equivalence of game states in \autoref{sec:game_states|equiv}. This equivalence is used for pruning in A* search, see \autoref{sec:astar|equivalent_game_states}. The virtue of our equivalence is that it even allows identifying some game states that differ in the sequence of moves that lead to minimal solutions. Identifying large equivalence classes of game states results in stronger pruning and therefore more efficient performance in A* search.

Two different lower bounds on the minimum solution length are discussed in \autoref{sec:game_states|lower_bounds} to be subsequently used as heuristics in A* search, see \autoref{sec:astar|lower_bounds}.

Completely different approaches are presented  in \autoref{sec:solvers} based on declarative solvers, specifically using Boolean Satisfiability (SAT), Integer Linear Programming (ILP) and Constraint Programming (CP). For these formulations of Flood-It, the formalization of \autoref{sec:game_states} is not required. 

The generation of puzzle instances and some details on the implementation of the solving approaches are presented in \autoref{sec:experiments}, followed by the results in \autoref{sec:results}.

\autoref{sec:discussion} and \ref{sec:conclusion} provide a discussion and conclusion.

\section{Properties of Game States}
\label{sec:game_states}
We formalize the game Flood-It and derive properties of its game states, in particular an equivalence relation and lower bounds on the minimum solution. These findings will be exploited in \autoref{sec:astar} to tailor A* search for finding minimal Flood-It solutions more efficiently.

\subsection{Notation and Definitions}
\label{sec:game_states|definitions}
For a graph $G=(V,E)$ we call a mapping (\emph{coloring}) $c_G:V\rightarrow \{0,\dots,C-1\}$, where $C$ is the number of colors, a \emph{proper coloring} if it assigns different colors to adjacent vertices. For every graph $G$ with coloring $c_G$ that has adjacent vertices with the same color, an equivalent subgraph $G'$ with a proper coloring $c_{G'}$ can be constructed such that every sequence of moves that solves $G$ also solves $G'$ and vice versa \cite{vu_extremal_2019}.
Therefore, we will from here on assume all colorings to be proper to ease the theoretical handling. 

For a graph $G=(V,E)$ and a set $A$ of vertices, we define
\begin{itemize}
    \item the \emph{vertex set} of $G$ as $V(G):=V$,
    \item the \emph{edge set} of $G$ as $E(G):= E$,
    \item the \emph{closed neighborhood} of $A$ as $N_G[A] = \{ v\in V \;|\;\exists w\in A\; \{v,w\}\in E \}$, and
    \item the \emph{open neighborhood} of $A$ as $N_G(A) = N_G[A] \,\backslash\, A$.
\end{itemize}
A \emph{path} in a graph $G$ is a sequence of distinct vertices $(v_1, ...., v_n)$ such that $\forall i \in \{1, ..., n-1\}(\{v_i, v_{i+1}\} \in E(G))$. The length of a path is $n-1$. We say \emph{uv-path} for a path whose endpoints are two distinct vertices $u$ and $v$.

\subsection{Modeling of the Flood-It Game}
\label{sec:game_states|modelling}
If gameplay was modeled by recoloring the monochromatic component of the pivot element, it is possible that two different series of moves result in a coloring that differs only in the color of the monochromatic component. While the resulting configurations are not equal, they are equivalent with respect to the number of moves required to solve them. We define a different representation of the game so that two such series of moves will result in the same game state, which helps to reduce the search space.

Rather than modeling the moves of the game by changing the coloring of the graph, we keep that fixed and keep track of the set of flooded vertices $F_G$ of the graph $G$. In the initial state, this set consists just of the pivot vertex $F_G=\{p\}$, and the game is solved as soon as it encompasses all vertices, i.e., $F_G=V$.

\begin{definition}[Game State and Moves]
    A Flood-It game state for a connected undirected graph $G$ is a tuple $S = (G, c_G, F_G)$ where $c_G$ is a proper coloring and $F_G \subseteq V(G)$ is the set of flooded vertices. We may write $F_G^{0}$ if we explicitly refer to the initial flooding and $F_G^{i}$ for the flooding after the $i$'th move. A move $m_i$ with color $c$ from a state $S_{i-1}$ to $S_i$ is a transition $(G, c_G, F_G^{i-1}) \rightarrow^{m_i} (G, c_G, F_G^{i})$ such that $F_G^{i} = F_G^{i-1} \cup \{ v\in  N_G(F_G^{i-1}) : c_G(v) = c \}$. A game state $S$ is solved if and only if $F_G = V(G)$. We say a series of moves $(G, c_G, F_G^{0}) \rightarrow^{m_1} ... \rightarrow^{m_i} (G, c_G, F_G^{i})$ solves the initial game state if and only if $(G, c_G, F_G^{i})$ is solved. We refer to a color as solved if all vertices of that color are flooded.
\end{definition}

\begin{definition}
    Let S be a game state. The minimum number of moves required to solve S is denoted by:
    \begin{equation*}
        min_S
    \end{equation*}
\end{definition}

\subsection{Classification of Vertices}
\label{sec:game_states|vertex_categories}
Based on the definition of a game state and its underlying graph, this section defines and discusses subsets of its vertices that may have properties that are of interest for pruning algorithms and further allow a stronger statement about equivalence between game states with respect to shortest sequences of moves that solve them.

\begin{figure}[ht]
    \centering
    \resizebox{\textwidth}{!}{
            \newcommand{\convexpath}[2]{
        [   
        create hullnodes/.code={
            \global\edef\namelist{#1}
            \foreach [count=\counter] \nodename in \namelist {
                \global\edef\numberofnodes{\counter}
                \node at (\nodename) [draw=none,name=hullnode\counter] {};
            }
            \node at (hullnode\numberofnodes) [name=hullnode0,draw=none] {};
            \pgfmathtruncatemacro\lastnumber{\numberofnodes+1}
            \node at (hullnode1) [name=hullnode\lastnumber,draw=none] {};
        },
        create hullnodes
        ]
        ($(hullnode1)!#2!-90:(hullnode0)$)
        \foreach [
        evaluate=\currentnode as \previousnode using \currentnode-1,
        evaluate=\currentnode as \nextnode using \currentnode+1
        ] \currentnode in {1,...,\numberofnodes} {
            let
            \p1 = ($(hullnode\currentnode)!#2!-90:(hullnode\previousnode)$),
            \p2 = ($(hullnode\currentnode)!#2!90:(hullnode\nextnode)$),
            \p3 = ($(\p1) - (hullnode\currentnode)$),
            \n1 = {atan2(\y3,\x3)},
            \p4 = ($(\p2) - (hullnode\currentnode)$),
            \n2 = {atan2(\y4,\x4)},
            \n{delta} = {-Mod(\n1-\n2,360)}
            in 
            {-- (\p1) arc[start angle=\n1, delta angle=\n{delta}, radius=#2] -- (\p2)}
        }
        -- cycle
    }
    
    \begin{tikzpicture} [every node/.style={fill opacity=0.5, text opacity=100},
        arrow/.style={-latex, shorten >=1ex, shorten <=1ex},
        graphnode/.style={fill opacity=0.5, text opacity=100},
        innernode/.style={draw, draw opacity=1, circle, fill opacity=1, text opacity=100, scale=0.7},
        rednode/.style={innernode, fill=white, node contents={1}},
        cyannode/.style={innernode, fill=white, node contents={2}},
        greennode/.style={innernode, fill=white, node contents={3}},
        yellownode/.style={innernode, fill=white, node contents={4}},
        magentanode/.style={innernode, fill=white, node contents={5}},
        orangenode/.style={innernode, fill=white, node contents={6}},
        edgenode/.style={draw opacity=0, scale=0.8, midway, above},
        rededgenode/.style={rednode, edgenode},
        orangeedgenode/.style={orangenode, edgenode},
        cyanedgenode/.style={cyannode, edgenode},
        greenedgenode/.style={greennode, edgenode},
        blueedgenode/.style={yellownode, edgenode},
        magentaedgenode/.style={magentanode, edgenode}]

        \node (22)  at (0, 0)
        {
            \begin{tikzpicture}[]
                \node (1) at (1,2) [yellownode];
                \node (2) at (1,1) [magentanode];
                
                \node (3) at (2,2) [greennode];
                \node (4) at (2,1) [rednode];
                
                \node (5) at (3,2) [cyannode];
                \node (6) at (3,1) [yellownode];
                
                \node (7) at (4,2) [rednode];
                \node (8) at (4,1) [greennode];

                \draw (1) -- (2);
                \draw (3) -- (4);
                \draw (5) -- (6);
                
                \draw (1) -- (3) -- (5) -- (7) -- (8);
                \draw (2) -- (4) -- (6);
                \draw (3) -- (6);
                \draw (6) -- (8);
                
                \begin{scope}[on background layer]
                    
                    \fill[cyan,opacity=1] \convexpath{1,3}{14pt};
                    \fill[red,opacity=1] \convexpath{7,8}{14pt};
                    \fill[green,opacity=1] \convexpath{2, 4, 6, 5, 6}{14pt};
                    

                    \fill[white,opacity=1] \convexpath{2, 4}{12 pt};
                    \fill[yellow,opacity=0.7] \convexpath{2, 4}{12 pt};
                    
                    \node[fill=white,opacity=1, shape=circle, scale=0.6, fit=(2)] {};
                    \node[fill=magenta,opacity=0.8, shape=circle, scale=0.6, fit=(2)] {};
                    
                    \node[fill=white,opacity=1, shape=circle, scale=0.6, fit=(4)] {};
                    \node[fill=orange,opacity=1, shape=circle, scale=0.6, fit=(4)] {};
                \end{scope}
            \end{tikzpicture}
        };
        
        \matrix [draw, opacity=0] at (4,0) {
            \node [fill=cyan, fill opacity=1, label=right:Flooded] {}; \\
            \node [fill=green, fill opacity=1, label=right:Border] {}; \\
            \node [fill=red, fill opacity=1, label=right:Unreachable] {}; \\
        };
            \matrix [draw, opacity=0] at (8,0) {
            \node [fill=yellow, fill opacity=1, label=right:Deferrable] {}; \\
            \node [fill=magenta, fill opacity=1, label=right:Presolved] {}; \\
            \node [fill=orange, fill opacity=1, label=right:Implicitly solvable] {}; \\
        };
%
%
%
%
%
%
%
%
%
%
%
%
%

    \end{tikzpicture}
    }
    \caption{
        A visualization of how the vertices of a game state can be classified. The upper left vertex is the pivot element.
        The numbers of the vertices represent the colors of the Flood-It game. The background colors indicate to which class each vertex belongs. Vertices in the \emph{border} constitute the next possible moves. \emph{Unreachable} vertices cannot be flooded within one move. \emph{Deferrable} vertices could be flooded in the next move, but a shortest solution cannot be missed by not flooding them right away. The color of a \emph{presolved} vertex is completely solved by one move. That color can be played at any time without changing the solution length. For \emph{implicitly solvable} vertices, it is beneficial to wait playing them, as that provides the opportunity to flood more vertices of that color at the same move.
    }
    \label{figure:border}
\end{figure}

\begin{definition}[flooded]
    \label{def:flooded}
    For a game state $S = (G, c_G, F_G)$ we write $F_S$ to denote the set of flooded vertices:
    \begin{equation*}
        F_S := F_G
    \end{equation*}
\end{definition}

\begin{definition}[border]
    \label{def:border}
    For a game state $S = (G, c_G, F_G)$ the \emph{border} is the set of vertices adjacent to the set of flooded vertices. Colors occurring in the border are candidates for the next move. Formally:
    \begin{equation*}
        B_S := N_G(F_S)
    \end{equation*}
\end{definition}

\begin{definition}[unreachable]
    \label{def:unreachable}
    For a game state $S = (G, c_G, F_G)$, we call a vertex \emph{unreachable} if it takes more than one move to flood it. The set of unreachable vertices contains all non-flooded vertices of $S$, that are not part of the border:
    \begin{equation*}
        U_S := V(G) \setminus (F_S \cup B_S)
    \end{equation*}
\end{definition}

\begin{definition}[deferrable]
    \label{def:deferrable}
    For a game state $S = (G, c_G, F_G)$, a vertex is \emph{deferrable} if it can be flooded by making one move and is not adjacent to an unreachable vertex. Flooding these vertices does not open any new possibilities and can therefore safely be deferred to a later move. Formally
    \begin{equation*}
        D_S := B_S \setminus N_G(U_S)
    \end{equation*}
\end{definition}

\noindent
The set of deferrable vertices can further be divided into two disjoint sets of vertices:

\begin{definition}[presolved]
    \label{def:presolved}
    For a game state $S = (G, c_G, F_G)$, a deferrable vertex $v \in D_S$ is called \emph{presolved} if there is no vertex left in the closed neighborhood of the set of unreachable vertices that has the same color.
    \begin{equation*}
        P_S := \{ v \in D_S : \neg\exists w \in  N[U_S]\; c_G(v) = c_G(w) \}
    \end{equation*}
\end{definition}

\noindent
The time point of when to flood a presolved vertex is not relevant:

\begin{corollary}
\label{cor:presolved}
    The color of a presolved vertex can (and must) be solved by doing exactly one move that does not have any side effects because it has no neighbor that is not already in the border (i.e., adjacent to the set of flooded vertices). Thus, such a move can be played anytime without compromising the minimality of the solution.
\end{corollary}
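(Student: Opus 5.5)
The corollary has three parts, and I will prove them in this order: (i) all vertices of color $c$ that are still unflooded lie in the border; (ii) playing $c$ floods exactly them and changes nothing else; (iii) such a move can be inserted anywhere in a minimal solution. Throughout, let $v \in P_S$ with color $c = c_G(v)$.

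For (i), note that $V(G) = F_S \cupdot B_S \cupdot U_S$ by Definitions~\ref{def:border} and~\ref{def:unreachable}. Every unreachable vertex lies in $N[U_S]$, so by Definition~\ref{def:presolved} no vertex of $U_S$ has color $c$. Hence every unflooded vertex of color $c$ lies in $B_S$. Moreover, no such vertex is adjacent to $U_S$: otherwise it would lie in $N[U_S]$, again contradicting presolvedness.

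For (ii), playing $c$ in $S$ floods exactly $W := \{w \in B_S : c_G(w) = c\}$, so after this move color $c$ is solved. This move is also necessary, since $v$ is unflooded and can only be flooded by a move of color $c$. It has no side effects: because $N_G(W) \subseteq F_S \cup B_S$, the new border is $B_S \setminus W$ and the unreachable set stays $U_S$. Color $c$ never occurs again, so it is played exactly once.

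For (iii), the key point is that the effect of playing $c$ is the same whenever it is played. Take any solving sequence from $S$ and consider the first move $m_j$ of color $c$, which must exist because $v$ has to be flooded. I claim that moving $m_j$ to the front, or to any earlier position, yields a sequence of the same length that still solves $S$. Vertices of color $c$ never belong to $U_S$, and later moves only enlarge $F$. Therefore a move of color $c$ at any time floods exactly the still-unflooded part of $W$. Those vertices open no new neighbours, since all their neighbours are already in $F_S \cup B_S$, and $F_S \cup B_S$ stays inside the closed neighbourhood of the flooded set as that set grows. Conversely, the other moves are unaffected by whether $W$ has been flooded earlier. A move of color $c' \neq c$ floods the unflooded $c'$-coloured neighbours of $F$. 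Vertices in $W$ have all their neighbours in $F_S \cup B_S$, and every vertex of $B_S$ is already adjacent to $F_S$. So adding $W$ to $F$ never creates a new $c'$-neighbour.

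The main obstacle is this last commutation argument. It needs a clean invariant: for every $F \supseteq F_S$, the move sets of $F$ and $F \cup W$ differ only on $W$. I would state this invariant explicitly as $N_G(F \cup W) \setminus W = N_G(F) \setminus W$, prove it from $N_G(W) \subseteq F_S \cup B_S \subseteq N_G[F]$, and then conclude by induction on the move sequence. The same argument shows that postponing the move also preserves the length, which gives ``anytime''.
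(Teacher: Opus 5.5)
Your proof is correct and is essentially the paper's own justification made rigorous. The paper gives no separate proof, only the inline remark that the remaining vertices of the presolved colour all lie in the border and have no neighbours outside $F_S\cup B_S$, so the move has no side effects. Your invariant $N_G(F\cup W)\setminus W = N_G(F)\setminus W$ for all $F\supseteq F_S$ turns that remark into exact commutation in both directions, which is what ``anytime'' requires; the paper's later Lemma~\ref{lemma:optimal_move} only justifies moving a safe move to the front.

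One thing to keep in mind: step (i) uses $U_S\subseteq N[U_S]$, i.e.\ the standard closed neighbourhood. That is what the paper intends, and it also relies on it in the proof of Theorem~\ref{theorem:game_states_equal_min}, even though the displayed definition of $N_G[A]$ literally leaves out $A$ itself.
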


\begin{definition}[implicitly solvable]
    \label{def:implicitly_solvable}
    For a game state $S = (G, c_G, F_G)$ we call a vertex \emph{implicitly solvable}, if it is deferrable and not presolved. These vertices can be ignored, as they will eventually be flooded together with other vertices, see \autoref{cor:deferrable}. The set of implicitly solvable vertices is defined as:
    \begin{equation*}
        I_S = D_S \setminus P_S = \{ v \in D_S : \exists w \in  N_G[U_S]\; c_G(v) = c_G(w) \}
    \end{equation*}
\end{definition}

\noindent
Any sequence of moves that floods the closed neighborhood of the unreachable vertices will also flood all implicitly solvable vertices. There are two cases:
\begin{itemize}
    \item An implicitly solvable vertex is flooded by a move that at the same time floods additional vertices of the same color that are not implicitly solvable.

    \item An implicitly solvable vertex turns presolved before it gets solved.
\end{itemize}

\noindent
Accordingly, implicitly solvable vertices do not require attention. Playing their color leads to inferior solutions unless there is a non-deferrable vertex of the same color.

In combination with \autoref{cor:presolved} we can conclude the following strategy for deferrable vertices:

\begin{corollary}
\label{cor:deferrable}
Deferrable vertices do not have to be considered for minimal move sequences. Their color should only be played if it also occurs in the closed neighborhood of the unreachable vertices or if there are only deferrable vertices left. The latter case applies if all deferrable vertices are presolved ($D_S = P_S$).
\end{corollary}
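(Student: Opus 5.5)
The plan is an exchange argument on minimal move sequences. I would show that any minimal solution of $S$ can be rearranged into a minimal solution in which every move either plays a color occurring in $N_G[U_S]$, or is played only once all remaining unflooded vertices are deferrable (and then presolved). The basic tool is monotonicity: if $F \subseteq F'$, then playing the same color from $F'$ gives a flooded set containing the one obtained from $F$. This follows by induction on the number of moves, because $N_G(F)\subseteq F'\cup N_G(F')$. From it, $min_S$ is monotone non-increasing in the flooded set. I would state this once as an auxiliary observation.

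\textbf{Step 1: what a deferrable vertex can influence.} A vertex $v\in D_S$ has all neighbours in $F_S\cup B_S$, since it has no neighbour in $U_S$. So flooding $v$ adds no new vertices to the border. Consider a move of color $c$ that floods only deferrable vertices, i.e.\ $c$ does not occur in $B_S\cap N_G(U_S)$. Call the resulting state $S'$. Then $U_{S'}=U_S$ and $B_{S'}=B_S\setminus\{\text{vertices of color } c\}$. Hence the move changes nothing for the vertices outside $D_S$.

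\textbf{Step 2: exchange.} Take a minimal solution $m_1,\dots,m_k$ of $S$ whose first move $m_1$ floods only deferrable vertices $X$, all of color $c$. The vertices of $X$ do not lie in $N_G(U_S)$. Among the moves $m_2,\dots,m_k$, the first one that floods any vertex of $N_G(U_S)$ is unaffected by $X$, by Step 1. I would therefore delete $m_1$ from the sequence. There are two cases.

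\emph{Case (a): some later move $m_j$ also has color $c$.} Before $m_j$, the vertices of $X$ are still in the border, because their neighbours are flooded or border. So $m_j$ floods them anyway. By monotonicity the shortened sequence solves $S$ with $k-1$ moves, which contradicts minimality.

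\emph{Case (b): color $c$ is never played again.} Moving $m_1$ to the end gives the same final flooded set. This uses monotonicity together with the fact that $X$ stays in the border. Repeating the argument pushes all "purely deferrable" moves to the end of the sequence. At that point only deferrable vertices remain unflooded, and $U$ is empty.

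The two cases together give the claim that deferrable colors need only be played when they also occur near $U_S$, or at the very end. They also give the characterisation of the end phase: once $U_S=\emptyset$, we have $N_G[U_S]=\emptyset$. Then every deferrable vertex is presolved, which is the condition $D_S=P_S$, and by \autoref{cor:presolved} each remaining color costs exactly one move.

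\textbf{Main obstacle.} The delicate point is Case (b). Moving $m_1$ later might seem to interact with moves that flood $v$'s neighbours, or that turn $v$ from implicitly solvable into presolved. I would handle this by tracking the invariant that every vertex of $X$ remains in $F\cup B$ throughout, so that playing $c$ at any later time floods at least $X$. Comparing flooded sets step by step then shows that each intermediate flooded set of the permuted sequence contains the original one minus $X$. I expect this bookkeeping, rather than any conceptual difficulty, to be the main work.
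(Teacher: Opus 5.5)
Your proof is correct, but it takes a different route from the paper. The paper gives no separate proof. It derives the corollary from the informal discussion just before it, which splits $D_S = P_S \cupdot I_S$:
\begin{itemize}
\item presolved colors are covered by Corollary~\ref{cor:presolved}: one side-effect-free move, playable at any time;
\item implicitly solvable vertices are argued to be flooded anyway by any sequence that floods $N_G[U_S]$, either together with a non-deferrable vertex of the same color or after they have turned presolved.
\end{itemize}
You never split $D_S$. You run one exchange argument on a minimal solution, in the spirit of the proof of Lemma~\ref{lemma:optimal_move}. It rests on the fact $N_G(D_S)\subseteq F_S\cup B_S$ and on the invariant that each flooded set of the modified sequence contains the original one minus $X$. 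This makes rigorous exactly the paper's ``can be played anytime'' and ``will be flooded anyway'' claims.

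Your route also proves more. Any move that floods only deferrable vertices either can be deleted, contradicting minimality, or can be postponed to the end. That is precisely the \textbf{Mc} pruning rule, namely branching only on colors in $B_S\setminus D_S$. The corollary as stated only asks that the color occur somewhere in $N_G[U_S]$. The paper's route is shorter and ties the statement to the $P_S/I_S$ classification, which is reused in Theorem~\ref{theorem:game_states_equal_min}.

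One step you should write out is ``repeating the argument'', since rotating moves to the end needs a termination argument.
\begin{itemize}
\item Apply the exchange at the first index $i$ with $U_{S_i}\neq\emptyset$ whose move floods only deferrable vertices of $S_i$. This uses the fact that a suffix of a minimal solution is a minimal solution of $S_i$.
\item Repeated rotation of that suffix cannot cycle forever. The suffix must contain a move whose color occurs in $B_{S_i}\cap N_G(U_{S_i})$, because those vertices must be flooded. Once that move comes first, it is not purely deferrable. Alternatively, induct on $min_S$.
\item Since $U$ only shrinks along a move sequence, once $U_{S_i}=\emptyset$ every later state has only presolved border vertices. This gives the $D_S=P_S$ tail you describe.
\end{itemize}
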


\subsection{Equivalent Game States}
\label{sec:game_states|equiv}
Identifying `equivalent' game states can help to make A* search more efficient. If it is known that several game states have optimal solutions of the same length, only one of them needs to be investigated further, while the others can be pruned from searching. However, in general, it is not possible to infer that two game states have the same minimum solution length without performing a deep search on both, which is precisely what one wants to avoid. Often it is possible to derive conditions under which two game states have a common optimal solution and use these for equivalence. For Flood-It, we managed to derive an equivalence that is a bit more general, as it allows identifying game states having the same minimum solution length, although they do not share a minimum move sequence.

For illustration, an example is shown in \autoref{figure:presolved}: Two different series of moves result in different game states for which the union of the set of flooded and deferrable vertices is the same. Even if they do not share a common shortest solution, these states can be solved with the same number of moves if the count of distinct colors in their respective sets of presolved vertices is the same. We prove that this holds true for any two states with these properties in \autoref{theorem:game_states_equal_min}.

This insight on equivalent game states can be applied to narrow the search space, as will be discussed in \autoref{sec:astar|equivalent_game_states} on A*.

\begin{figure}[ht]
    \resizebox{\textwidth}{!}{
            \newcommand{\convexpath}[2]{
        [   
        create hullnodes/.code={
            \global\edef\namelist{#1}
            \foreach [count=\counter] \nodename in \namelist {
                \global\edef\numberofnodes{\counter}
                \node at (\nodename) [draw=none,name=hullnode\counter] {};
            }
            \node at (hullnode\numberofnodes) [name=hullnode0,draw=none] {};
            \pgfmathtruncatemacro\lastnumber{\numberofnodes+1}
            \node at (hullnode1) [name=hullnode\lastnumber,draw=none] {};
        },
        create hullnodes
        ]
        ($(hullnode1)!#2!-90:(hullnode0)$)
        \foreach [
        evaluate=\currentnode as \previousnode using \currentnode-1,
        evaluate=\currentnode as \nextnode using \currentnode+1
        ] \currentnode in {1,...,\numberofnodes} {
            let
            \p1 = ($(hullnode\currentnode)!#2!-90:(hullnode\previousnode)$),
            \p2 = ($(hullnode\currentnode)!#2!90:(hullnode\nextnode)$),
            \p3 = ($(\p1) - (hullnode\currentnode)$),
            \n1 = {atan2(\y3,\x3)},
            \p4 = ($(\p2) - (hullnode\currentnode)$),
            \n2 = {atan2(\y4,\x4)},
            \n{delta} = {-Mod(\n1-\n2,360)}
            in 
            {-- (\p1) arc[start angle=\n1, delta angle=\n{delta}, radius=#2] -- (\p2)}
        }
        -- cycle
    }
    
    \begin{tikzpicture} [every node/.style={fill opacity=0.5, text opacity=100},
        arrow/.style={-latex, shorten >=1ex, shorten <=1ex},
        graphnode/.style={fill opacity=0.5, text opacity=100},
        innernode/.style={draw, draw opacity=100, circle, fill opacity=1, text opacity=100, scale=0.7},
        rednode/.style={innernode, fill=white, node contents={1}},
        cyannode/.style={innernode, fill=white, node contents={2}},
        greennode/.style={innernode, fill=white, node contents={3}},
        yellownode/.style={innernode, fill=white, node contents={4}},
        magentanode/.style={innernode, fill=white, node contents={5}},
        orangenode/.style={innernode, fill=white, node contents={6}},
        edgenode/.style={draw opacity=0, scale=0.8, midway, above},
        rededgenode/.style={rednode, edgenode},
        orangeedgenode/.style={orangenode, edgenode},
        cyanedgenode/.style={cyannode, edgenode},
        greenedgenode/.style={greennode, edgenode},
        yellowedgenode/.style={yellownode, edgenode},
        magentaedgenode/.style={magentanode, edgenode}]
        
        \node (11)  at (-1, 1)
        {
            \begin{tikzpicture}[]
                \node (1) at (1,1) [yellownode];
                \node (2) at (1,2) [magentanode];
                \node (3) at (1,0) [cyannode];
                \node (4) at (2,0.5) [rednode];
                \node (5) at (2,1.5) [greennode];
                \node (6) at (3,1) [yellownode];
                \node (7) at (4,1) [rednode];
                \node (8) at (5,1) [greennode];
                \draw (1) -- (4) -- (6) -- (7) -- (8);
                \draw (1) -- (5);
                \draw (1) -- (3);
                \draw (1) -- (2);
                \draw (2) -- (5);
                \draw (3) -- (4);
                \draw (4) -- (5);
                \draw (5) -- (6);
                
                \begin{scope}[on background layer]
                    \node[fill=cyan,opacity=1.0, shape=circle, scale=0.8, fit=(1)] {};
                    \node[fill=magenta,opacity=1.0, shape=circle, scale=0.8, fit=(2)] {};
                    \node[fill=magenta,opacity=1.0, shape=circle, scale=0.8, fit=(3)] {};
                \end{scope}
            \end{tikzpicture}
        };
        \node (12)  at (5, 2.5)
        { 
            \begin{tikzpicture}[]
                \node (1) at (1,1) [yellownode];
                \node (2) at (1,2) [magentanode];
                \node (3) at (1,0) [cyannode];
                \node (4) at (2,0.5) [rednode];
                \node (5) at (2,1.5) [greennode];
                \node (6) at (3,1) [yellownode];
                \node (7) at (4,1) [rednode];
                \node (8) at (5,1) [greennode];
                \draw (1) -- (4) -- (6) -- (7) -- (8);
                \draw (1) -- (5);
                \draw (1) -- (3);
                \draw (1) -- (2);
                \draw (2) -- (5);
                \draw (3) -- (4);
                \draw (4) -- (5);
                \draw (5) -- (6);
                
                \begin{scope}[on background layer]
                    \fill[cyan,opacity=1.0] \convexpath{1,4}{11pt};
                    \node[fill=magenta,opacity=1.0, shape=circle, scale=0.8, fit=(2)] {};
                    \node[fill=magenta,opacity=1.0, shape=circle, scale=0.8, fit=(3)] {};
                    \node[fill=orange,opacity=1.0, shape=circle, scale=0.8, fit=(5)] {};
                \end{scope}
            \end{tikzpicture}
        };
        \node (13)  at (11, 2.5)
        { 
            \begin{tikzpicture}[]
                \node (1) at (1,1) [yellownode];
                \node (2) at (1,2) [magentanode];
                \node (3) at (1,0) [cyannode];
                \node (4) at (2,0.5) [rednode];
                \node (5) at (2,1.5) [greennode];
                \node (6) at (3,1) [yellownode];
                \node (7) at (4,1) [rednode];
                \node (8) at (5,1) [greennode];
                \draw (1) -- (4) -- (6) -- (7) -- (8);
                \draw (1) -- (5);
                \draw (1) -- (3);
                \draw (1) -- (2);
                \draw (2) -- (5);
                \draw (3) -- (4);
                \draw (4) -- (5);
                \draw (5) -- (6);
                
                \begin{scope}[on background layer]
                    \fill[cyan,opacity=1.0] \convexpath{1,4,3,1}{11pt};
                    \node[fill=magenta,opacity=1.0, shape=circle, scale=0.8, fit=(2)] {};
                    \node[fill=orange,opacity=1.0, shape=circle, scale=0.8, fit=(5)] {};
                \end{scope}
            \end{tikzpicture}
        };

        \node (22)  at (5, -0.5)
        { 
            \begin{tikzpicture}[]
                \node (1) at (1,1) [yellownode];
                \node (2) at (1,2) [magentanode];
                \node (3) at (1,0) [cyannode];
                \node (4) at (2,0.5) [rednode];
                \node (5) at (2,1.5) [greennode];
                \node (6) at (3,1) [yellownode];
                \node (7) at (4,1) [rednode];
                \node (8) at (5,1) [greennode];
                \draw (1) -- (4) -- (6) -- (7) -- (8);
                \draw (1) -- (5);
                \draw (1) -- (3);
                \draw (1) -- (2);
                \draw (2) -- (5);
                \draw (3) -- (4);
                \draw (4) -- (5);
                \draw (5) -- (6);
                
                \begin{scope}[on background layer]
                    \fill[cyan,opacity=1.0] \convexpath{1,5}{11pt};
                    \node[fill=magenta,opacity=1.0, shape=circle, scale=0.8, fit=(2)] {};
                    \node[fill=magenta,opacity=1.0, shape=circle, scale=0.8, fit=(3)] {};
                    \node[fill=orange,opacity=1.0, shape=circle, scale=0.8, fit=(4)] {};
                \end{scope}
            \end{tikzpicture}
        };
        \node (23)  at (11, -0.5)
        { 
            \begin{tikzpicture}[]
                \node (1) at (1,1) [yellownode];
                \node (2) at (1,2) [magentanode];
                \node (3) at (1,0) [cyannode];
                \node (4) at (2,0.5) [rednode];
                \node (5) at (2,1.5) [greennode];
                \node (6) at (3,1) [yellownode];
                \node (7) at (4,1) [rednode];
                \node (8) at (5,1) [greennode];
                \draw (1) -- (4) -- (6) -- (7) -- (8);
                \draw (1) -- (5);
                \draw (1) -- (3);
                \draw (1) -- (2);
                \draw (2) -- (5);
                \draw (3) -- (4);
                \draw (4) -- (5);
                \draw (5) -- (6);
                
                \begin{scope}[on background layer]
                    \fill[cyan,opacity=1.0] \convexpath{1,2,5,1}{11pt};
                    \node[fill=magenta,opacity=1.0, shape=circle, scale=0.8, fit=(3)] {};
                    \node[fill=orange,opacity=1.0, shape=circle, scale=0.8, fit=(4)] {};
                \end{scope}
            \end{tikzpicture}
        };

        \draw [arrow]  (11) -- (12)  node [rededgenode];
        \draw [arrow]  (12) -- (13) node [cyanedgenode];
        
        \draw [arrow]  (11) -- (22)  node [greenedgenode];
        \draw [arrow]  (22) -- (23) node [magentaedgenode];

        \node (23)  at (-0.75, -1.25)
        { %
            \begin{tikzpicture}[every label/.style={text opacity=100}]
                 
                 \node (2) at (0,0.5) [fill=cyan, fill opacity=1.0, label={[black, text opacity=1.0]right:Flooded}] {};
                 \node (3) at (2.5,0.5) [fill=magenta, fill opacity=1, label={[black, text opacity=1.0]right:Presolved}] {};
                 \node (1) at (0,0) [fill=orange, fill opacity=1, label={[black, text opacity=1.0]right:Implicitly solvable}] {};
            \end{tikzpicture}
        };
    \end{tikzpicture}
    }
    \caption{
        Two different series of moves (here $(1,2)$ and $(3,5)$) can result in different game states (on the right) that may or may not have a common shortest sequence of moves that solves them, while the minimum number of required moves to solve both states is the same. After the first move, both resulting game states share a shortest sequence of moves that solves them $(4, 1, 3, 2, 5)$. After the second move the two resulting states do not share such a sequence, but still only 4 moves are required to solve them. They can both be solved by playing $(4, 1, 3)$ followed by the color of the remaining presolved vertex.}
    \label{figure:presolved}
\end{figure}

\begin{theorem}
    \label{theorem:game_states_equal_min}
    For two game states $S_1 = (G, c_G, F_{S_1})$ and $S_2 = (G, c_G, F_{S_2})$  with corresponding sets of presolved vertices $P_{S_1}$ and  $P_{S_2}$, the minimum number of moves required to solve these states is equal if:
    \begin{align*}
        F_{S_1} \cup D_{S_1} = F_{S_2} \cup D_{S_2} \\
        |c_G(P_{S_1})| = |c_G(P_{S_2})|
    \end{align*}
\end{theorem}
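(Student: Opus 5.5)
A natural approach is to reduce both states to a common canonical state and show that each original state needs exactly ``canonical cost plus number of presolved colors'' moves. Set $W := F_{S_1} \cup D_{S_1} = F_{S_2} \cup D_{S_2}$ and let $S^\ast = (G, c_G, W)$ be the state in which all of $W$ is flooded. The claim I would aim for is
\begin{equation*}
  min_{S_i} = min_{S^\ast} + |c_G(P_{S_i})| \qquad (i=1,2),
\end{equation*}
after which the theorem follows immediately from $|c_G(P_{S_1})| = |c_G(P_{S_2})|$. A preliminary observation is that $N_G(W)$ and the unreachable region seen from $W$ coincide with the unreachable part of $S_i$. Indeed, $D_{S_i}$ has no neighbor in $U_{S_i}$, so $N_G(W) \subseteq N_G(U_{S_i})\cap B_{S_i}$ modulo $D_{S_i}$. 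Hence $N_G(W) = B_{S_i}\setminus D_{S_i}$, and the vertices outside $W$ are exactly $(B_{S_i}\setminus D_{S_i}) \cup U_{S_i}$.

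For the upper bound $min_{S_i} \le min_{S^\ast} + |c_G(P_{S_i})|$, I take an optimal sequence for $S^\ast$ and prepend one move for each presolved color of $S_i$. By \autoref{cor:presolved}, these moves flood exactly the presolved vertices. The remaining deferrable vertices are the implicitly solvable ones, $I_{S_i}$. Each has a color occurring in $N_G[U_{S_i}]$, and every vertex outside $W$ lies in $N_G[U_{S_i}]$. An optimal sequence for $S^\ast$ floods all of these vertices, and each vertex of $I_{S_i}$ is adjacent to the flooded set from the start. Therefore, at the first time a vertex of its color outside $W$ is flooded, that vertex has a neighbor which is already flooded, so the move floods the implicitly solvable vertex as well. (Here I must check that the color played at that moment is the vertex's own color. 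Since the move floods a vertex of that color and the implicitly solvable vertex is in the border throughout, this holds.) Thus the concatenated sequence solves $S_i$.

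The lower bound $min_{S_i} \ge min_{S^\ast} + |c_G(P_{S_i})|$ is the main obstacle. I would argue by exchange. Take an optimal sequence $\sigma$ for $S_i$. First, because flooding is monotone in the flooded set, $\sigma$ also solves $S^\ast$ when played from $W \supseteq F_{S_i}$. Next, for each presolved color $c$, every vertex of color $c$ is in $P_{S_i}$ or flooded. This follows from the definition: no vertex of color $c$ lies in $N_G[U_{S_i}]$, and everything outside $F_{S_i}\cup D_{S_i}$ lies in $N_G[U_{S_i}]$. So $\sigma$ must contain a move of color $c$. Deleting the first occurrence of each such color leaves a sequence $\sigma'$ of length $|\sigma| - |c_G(P_{S_i})|$, and I need to show that $\sigma'$ still solves $S^\ast$. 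The key lemma, to be proved by induction over the moves, is that the flooded set of $\sigma'$ played from $W$ always contains the flooded set of $\sigma$ played from $F_{S_i}$. The deleted moves only add vertices of a presolved color, which are already in $W$. Because flooding is monotone, a larger flooded set together with the same color always yields a superset. The subtle point is that a deleted move could, in principle, flood vertices of color $c$ outside $W$ that later enable other floods. This is excluded because no vertex of color $c$ lies outside $W$. Combining the two bounds gives the displayed identity and hence the theorem.
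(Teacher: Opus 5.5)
Your argument is correct in substance and uses the same decomposition as the paper's proof. Both write $min_{S_i}$ as the cost of flooding the common set $N_G[U_{S_i}] = V(G)\setminus W$, plus one move per presolved color. Your auxiliary state $S^\ast=(G,c_G,W)$ gives a precise meaning to the paper's ``shortest sequence $m$ that floods $N_G[U_{S_1}]$'', namely $|m| = min_{S^\ast}$; the paper leaves implicit from which flooded set $m$ is played. Your deletion argument proves $min_{S_i}\ge min_{S^\ast}+|c_G(P_{S_i})|$, which the paper only asserts by appeal to Corollary~\ref{cor:presolved}. That half of your proof is complete, including the key observation that every vertex of a presolved color lies in $F_{S_i}\cup P_{S_i}\subseteq W$.

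One step of your upper bound needs an extra line. You show that every vertex of $I_{S_i}$ eventually gets flooded. But you play $\sigma^\ast$ from $F_{S_i}\cup P_{S_i}$ rather than from $W$, so you must also check that the vertices outside $W$ are still reached. In the run from $W$, such a vertex could a priori be flooded through a neighbor in $I_{S_i}$ that is not yet flooded in your run. This cannot happen. A vertex of $I_{S_i}\subseteq D_{S_i}$ has no neighbor in $U_{S_i}$, so any neighbor of it outside $W$ lies in $B_{S_i}\setminus D_{S_i}\subseteq N_G(F_{S_i})$, and is therefore adjacent to the flood in both runs. An induction over the moves then shows that after each move the flooded set of the run from $W$ exceeds that of the run from $F_{S_i}\cup P_{S_i}$ only by vertices of $I_{S_i}$. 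Your color argument for $I_{S_i}$ then finishes the upper bound.
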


\begin{proof}
We write $A\cupdot B$ to indicate the union is formed by disjoint sets $A$ and $B$. For any game state $S=(G, c_G, F_S)$, we get by definition of the border and unreachable vertices that $(F_S \cupdot B_S) \cupdot U_S = V(G)$.
Further $N_G(U_S) \cupdot D_S =  B_S$ thus $(F_S\cupdot D_S)\cupdot N_G[U_S] = V(G)$.

Now, let $S_1$ and $S_2$ be given as in the statement of the theorem. Then $N_G[U_{S_1}] = N_G[U_{S_2}]$, and because both game states have the same graph and coloring, a shortest sequence of moves $m$ that floods $N_G[U_{S_1}]$ is also minimal for $N_G[U_{S_2}]$ and vice versa.
Such a sequence will also flood all respective implicitly solvable vertices of $S_1$ and $S_2$.
Because it holds true that $D_S = I_S \cupdot P_S$ for any game state $S$, only their respective presolved vertices will not be flooded after playing $m$ on $S_1$ and $S_2$. Because the presolved vertices can be flooded by playing each of their colors once, and these moves can be played anytime without side effects, see \autoref{cor:presolved}, we get that $min_{S_1} = |m| + |c_G(P_{S_1})| = |m| + |c_G(P_{S_2})| = min_{S_2}$.
\end{proof}

\subsection{Lower Bounds}
\label{sec:game_states|lower_bounds}
In this section, we will obtain two lower bounds for the number of moves required to solve a game state. To determine a lower bound, we relax the rules of the game by allowing more powerful moves and use the number of moves required in the relaxed game as a lower bound for the actual game. Lower bounds can be used as heuristics in A* search, see \autoref{sec:astar}.

Note that a simple lower bound can be defined by the number of unsolved colors ($LBC$ for \emph{Lower Bound by unsolved Colors}). Anticipating that this leaves much room for improvement, we will present two more sophisticated bounds and use LBC as the punchline.

Our first bound will be based on a level structure of a game state's graph, which we will define first. As of \cite{diaz_survey_2002} a \textit{level structure} of a (general) graph is a partition $L_0, ..., L_s$ of its vertex set such that the endpoints of every edge in the graph are either in the same level $L_i$ or in two consecutive levels $L_i$ and $L_{i+1}$. We will define the level structure for a Flood-It game state accordingly.

\begin{definition}[Level Structure of a Game State]
    Let $S = (G, c_G, F_G)$ be a game state on a graph $G$ with proper coloring. The corresponding level structure is defined such that:
    \begin{align*}
        L_0 &= F_S \\
        L_i &= N_G(L_{i-1}) \setminus (\bigcup_{0 \leq j < i}{L_j})
    \end{align*}
    
    When referring to a level structure $L_1, ..., L_s$, we will (unless stated otherwise) assume $L_s$ to be the maximum level such that $L_s \neq \emptyset \text{ and } L_{s+1} = \emptyset $.
    \label{def:levelstructure}
\end{definition}

Note that when interpreting $\bigcup_{0 \leq j \leq i}{L_j}$ as the set of flooded vertices, $L_{i+1}$ is the border. The first lower bound we propose can be interpreted as a series of `combined' moves in which each move plays all colors of each level at once up to a specific level, followed by playing each color of the next level (border) separately.

\begin{definition}
    \label{def:lbls} \textbf{Lower Bound by Level Structure (LBL)}
    Let $S = (G, c_G, F_G)$ be a game state on a graph $G$. We define the lower bound $LBL(S)$ based on the level structure $L_0, \dots, L_s$ of $G$:
    \begin{equation*}
        LBL(S) := \max_{i \in \{1, ..., s\}} i + |c_G(L_i)| - 1
    \end{equation*}
\end{definition}

\begin{lemma}
    \label{lem:lbfurthest}
    The function $LBL(S)$ is indeed a lower bound for the minimum number of moves to solve game state $S = (G, c_G, F_G)$:
    \begin{equation*}
        LBL(S) \leq \min{}_S
    \end{equation*}
\end{lemma}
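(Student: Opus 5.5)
We fix an index $i \in \{1,\dots,s\}$ and show that every solving sequence $m_1,\dots,m_k$ for $S$ satisfies $k \geq i + |c_G(L_i)| - 1$. Taking the maximum over $i$ then gives $LBL(S) \leq \min_S$.

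\textbf{Step 1: flooding spreads by at most one level per move.} We prove by induction on $t$ that after $t$ moves
\[ F^t \subseteq \bigcup_{0\le j\le t} L_j . \]
The base case $F^0 = L_0$ is immediate. For the induction step, a move adds only vertices of $N_G(F^{t-1})$. By the level structure property, every neighbor of a vertex in $L_j$ lies in $L_{j-1}\cup L_j\cup L_{j+1}$. Hence $N_G(F^{t-1}) \subseteq \bigcup_{j\le t} L_j$.

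\textbf{Step 2: no vertex of $L_i$ is flooded before move $i$.} By Step 1, after $i-1$ moves no vertex of $L_i$ is flooded. So every vertex of $L_i$ is flooded by one of the moves $m_i, m_{i+1},\dots,m_k$.

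\textbf{Step 3: each color of $L_i$ needs its own late move.} A move with color $c$ floods only vertices of color $c$, because the definition adds $\{v \in N_G(F^{t-1}) : c_G(v)=c\}$. Therefore, for every color $c \in c_G(L_i)$, some move among $m_i,\dots,m_k$ must have color $c$. These moves are pairwise distinct since their colors differ, so
\[ k-i+1 \geq |c_G(L_i)|, \]
which is exactly $k \geq i + |c_G(L_i)| - 1$.

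The only real point is Step 1, where we must check that $L_0,\dots,L_s$ is a genuine level structure, i.e.\ that edges join only equal or consecutive levels. This follows from the BFS-type definition. Suppose $u\in L_a$ and $v\in L_b$ are adjacent with $a<b-1$. Then $v\in N_G(L_a)$, so $v$ would have been placed in $L_{a+1}$ or earlier, a contradiction.

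One caveat: the levels are defined through $N_G(L_{i-1})$ with $L_0=F_S$. Since $G$ is connected, every vertex eventually appears in some level, so the levels partition $V$. Even without that, Step 3 only uses vertices that actually lie in $L_i$, so the argument goes through either way.
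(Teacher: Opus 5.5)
Your proof is correct and follows essentially the same route as the paper: the first $i-1$ moves cannot reach $L_i$, and from move $i$ onward each of the $|c_G(L_i)|$ colors of $L_i$ needs its own move. Your Step~3, which counts distinct-colored moves among $m_i,\dots,m_k$, is a tidier version of the paper's remark that after playing $m_i$ there are still $|c_G(L_i)|-1$ unflooded colors left in $L_i$.
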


\begin{proof}
    Let $L_0, \dots, L_s$ be the level structure of $G$ and $0 < i \leq s$. By construction of the level structure, it holds true that for every vertex $v \in L_i$ there exists a vertex $u \in L_0$ so that a shortest $uv$-path has length $i$. Because $c_G$ is a proper coloring, no two adjacent vertices of that path have the same color. Thus, it takes at least $i$ moves to flood $v$. Now let $m_1, ..., m_i$ be a series of moves that floods $v$. A move $m_{j}$ with $j<i$ can only flood vertices in layers $L_1, ..., L_j$. Therefore, these moves cannot flood any vertex in $L_i$. Playing $m_i$ reduces the number of unflooded colors in $L_i$ by one such that there are $k = |c_G(L_i)| - 1$ unflooded colors left in $L_i$. Thus, it would take additional $k$ moves to flood all vertices in $L_i$. And thus $min_S \geq \max_{i \in \{1, ..., s\}} i + |c_G(L_i)| - 1$
\end{proof}

A weakness of this lower bound $LBL$ is that it does not take into consideration that there might be colors which get \emph{solved} by a combined move. 
Such colors could be played and counted as separate moves instead of being merged into a combined move, which leads to a higher and therefore tighter lower bound.
An algorithm which takes such moves into consideration was implemented by Puchert \cite{puchert_floodit}. The pseudocode of \autoref{algo:lower_bounds_puchert} reflects the idea behind their implementation, which we call $LBP$ (for \emph{Lower Bound Puchert} in the sequel).

\begin{algorithm}
    \caption{LBP$(S)$}
    \begin{algorithmic}[1]
        \State $lb \gets 0$
        \While {$S$ not solved}
        \State $A \gets \{u \in B_S : \neg\exists v \in U_S (c_G(u) = c_G(v)) \}$  \Comment{vertices of safe moves}
        \If {$A \neq \emptyset$}
        \State $lb \gets lb + |c_G(A)|$     \Comment{add one for each safe move}
        \State $F_S \gets F_S \cup A$   \Comment{add safe moves to flooded vertices}
        \Else
        \State $lb \gets lb + 1$            \Comment{add one for the combined move}
        \State $F_S \gets F_S \cup B_S$ \Comment{flood the whole border}
        \EndIf
        \EndWhile
        \State \Return $lb$ 
    \end{algorithmic}
    \label{algo:lower_bounds_puchert}
\end{algorithm}

\autoref{bounds:puchert} shows a graph on which this algorithm returns a lower bound of $4$ which is tighter than the lower bound of $3$ that $LBL$ (\autoref{lem:lbfurthest}) provides.

\begin{figure}[ht]
   \centering
   	\newcommand{\convexpath}[2]{
		[   
		create hullnodes/.code={
			\global\edef\namelist{#1}
			\foreach [count=\counter] \nodename in \namelist {
				\global\edef\numberofnodes{\counter}
				\node at (\nodename) [draw=none,name=hullnode\counter] {};
			}
			\node at (hullnode\numberofnodes) [name=hullnode0,draw=none] {};
			\pgfmathtruncatemacro\lastnumber{\numberofnodes+1}
			\node at (hullnode1) [name=hullnode\lastnumber,draw=none] {};
		},
		create hullnodes
		]
		($(hullnode1)!#2!-90:(hullnode0)$)
		\foreach [
		evaluate=\currentnode as \previousnode using \currentnode-1,
		evaluate=\currentnode as \nextnode using \currentnode+1
		] \currentnode in {1,...,\numberofnodes} {
			let
			\p1 = ($(hullnode\currentnode)!#2!-90:(hullnode\previousnode)$),
			\p2 = ($(hullnode\currentnode)!#2!90:(hullnode\nextnode)$),
			\p3 = ($(\p1) - (hullnode\currentnode)$),
			\n1 = {atan2(\y3,\x3)},
			\p4 = ($(\p2) - (hullnode\currentnode)$),
			\n2 = {atan2(\y4,\x4)},
			\n{delta} = {-Mod(\n1-\n2,360)}
			in 
			{-- (\p1) arc[start angle=\n1, delta angle=\n{delta}, radius=#2] -- (\p2)}
		}
		-- cycle
	}
	
	\begin{tikzpicture} [every node/.style={fill opacity=0.5, text opacity=100},
		arrow/.style={-latex, shorten >=1ex, shorten <=1ex},
		graphnode/.style={fill opacity=0.5, text opacity=100},
		innernode/.style={draw, draw opacity=100, circle, fill opacity=1, text opacity=100, scale=0.7},
		rednode/.style={innernode, fill=red, node contents={1}},
		cyannode/.style={innernode, fill=cyan, node contents={2}},
		greennode/.style={innernode, fill=green, node contents={3}},
		yellownode/.style={innernode, fill=yellow, node contents={4}},
		magentanode/.style={innernode, fill=magenta, node contents={5}},
		orangenode/.style={innernode, fill=orange, node contents={6}},
		edgenode/.style={draw opacity=0, scale=0.8, midway, above},
		rededgenode/.style={rednode, edgenode},
		orangeedgenode/.style={orangenode, edgenode},
		cyanedgenode/.style={cyannode, edgenode},
		greenedgenode/.style={greennode, edgenode},
		blueedgenode/.style={yellownode, edgenode},
		magentaedgenode/.style={magentanode, edgenode}]
		
		\node (11)  at (-1, 1)
		{
			\begin{tikzpicture}[]
				\node (1) at (0,1) [greennode];
				\node (2) at (1,0) [rednode];
				\node (3) at (1,1) [cyannode];
				\node (4) at (1,2) [orangenode];
				\node (5) at (2,1) [greennode];
				\node (6) at (3,1) [cyannode];
				
				\draw (1) -- (2) -- (5) -- (6);
				\draw (1) -- (3) -- (5);
				\draw (1) -- (4) -- (5);
				\draw (2) -- (3) -- (4);
				
				\node (11) at (0,2.5)[opacity=1] {$L_0$};
				\node (11d) at (0,0)[opacity=0] {};
				
				\node (12) at (1,2.5)[opacity=1] {$L_1$};
				\node (12d) at (1,0)[opacity=0] {};
				
				\node (13) at (2,2.5)[opacity=1] {$L_2$};
				\node (13d) at (2,0)[opacity=0] {};
				
				\node (14) at (3,2.5)[opacity=1] {$L_3$};
				\node (14d) at (3,0)[opacity=0] {};

				\begin{scope}[on background layer]
					\fill[yellow, opacity=0.3] \convexpath{11, 11d}{11pt};
					\fill[yellow, opacity=0.3] \convexpath{12, 12d}{11pt};
					\fill[yellow, opacity=0.3] \convexpath{13, 13d}{11pt};
					\fill[yellow, opacity=0.3] \convexpath{14, 14d}{11pt};
				\end{scope}
				
			\end{tikzpicture}
		};

		%
	\end{tikzpicture}
    \caption{
        On this graph, the lower bound $LBL$, see \autoref{def:lbls} equals 3 (yielded by $i=1$ as well as by $i=3$) while the lower bound $LBP$ obtained by \autoref{algo:lower_bounds_puchert} equals 4. The latter can be seen in the following way. Play safe moves $6,1$. Then the border consists of the $2$ in the middle and the $3$ to its right. These are played as a combined move and finally playing $2$ as the fourth (pseudo) move.
        \label{bounds:puchert}}
\end{figure}

Before we can show that it indeed returns a lower bound, we will first introduce \emph{safe} moves, which Puchert refers to as \emph{optimal} in his code \cite{puchert_floodit}.

\begin{definition}[Safe Move]
    We call a move \emph{safe} if it solves the color that is played by this move.
    \label{def:optimal_move}
\end{definition}

Note, that playing the color of a presolved vertex is a safe move, but in general, the reverse is not correct. There can be safe moves that bring previously unreachable vertices into the border. Accordingly, the played vertex was not deferrable and, in particular, not presolved.

\begin{lemma}
    \label{lemma:optimal_move}
    Let $S$ be a game state with safe move $m_1$ and let $S_1$ be the resulting game state after playing $m_1$ on $S$ then:
    \begin{equation*}
        min_{S_1} = min_S - 1
    \end{equation*}
    Consequently, if there are several possible safe moves $m_1, ...., m_i$ which could be played on $S$, then after playing these moves $S \rightarrow^{m_1} ... \rightarrow^{m_i} S_i$ we obtain:
    \begin{equation*}
        min_{S_i} = min_{S} - i
    \end{equation*}
\end{lemma}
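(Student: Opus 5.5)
The plan is to prove the two inequalities $min_{S_1} \geq min_S - 1$ and $min_{S_1} \leq min_S - 1$ separately, and then get the statement about several safe moves by induction on $i$.

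\textbf{The easy direction.} Prepending $m_1$ to any optimal solution of $S_1$ gives a solution of $S$ of length $min_{S_1}+1$. Hence $min_S \leq min_{S_1} + 1$.

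\textbf{The main direction.} We must show $min_{S_1} \leq min_S - 1$. Let $c$ be the color of $m_1$ and let $M = (m'_1, \dots, m'_k)$ be an optimal solution of $S$, so $k = min_S$.

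\begin{itemize}
\item \emph{Monotonicity.} By the move definition, $F^i = F^{i-1} \cup \{v \in N_G(F^{i-1}) : c_G(v)=c\}$, which is monotone in the flooded set. So if $F \subseteq F'$, playing the same color from $F$ and from $F'$ yields flooded sets that remain nested. By induction, any move sequence applied to $S_1$ floods a superset of what it floods on $S$, since $F_S \subseteq F_{S_1}$.
\item \emph{Some move of $M$ uses color $c$.} Because $m_1$ is safe, the vertices of color $c$ in $B_S$ are all remaining unflooded vertices of color $c$, and there is at least one. So $M$ must play $c$ at least once.
\item \emph{Deleting that move.} Let $j$ be the first index with $m'_j$ of color $c$, and let $M'$ be $M$ with $m'_j$ removed. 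Compare the flooded sets $F^t$ of $M$ on $S$ with the flooded sets $F'^t$ of $M'$ on $S_1$. We claim $F'^{t} \supseteq F^{t}$ for $t<j$ (using index $t$ in $M'$) and $F'^{t-1} \supseteq F^{t}$ for $t \geq j$.
\item \emph{Proof of the claim at the step $t=j$.} Here $F^j = F^{j-1} \cup \{v \in N(F^{j-1}) : c_G(v)=c\}$. Every unflooded $c$-colored vertex lies in $B_S$ and so already lies in $F_{S_1}$. Flooded vertices also remain flooded, so $F^j \subseteq F^{j-1} \cup F_{S_1} \subseteq F'^{j-1}$. After this step the claim continues by monotonicity.
\end{itemize}

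Therefore $M'$ solves $S_1$ in $k-1$ moves, giving $min_{S_1} \leq min_S - 1$. Combined with the easy direction, $min_{S_1} = min_S - 1$.

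\textbf{Several safe moves.} The step I expect to need care is checking that safe moves available in $S$ remain safe after earlier ones are played. If $m_2$ solves its color $c'$ in $S$, then all unflooded $c'$-vertices of $S$ are in $B_S$. In $S_1$ these vertices are either flooded or still adjacent to the (larger) flooded set. So playing $c'$ still floods all of them, unless they are all already flooded. That case cannot occur for distinct colors: the colors are different, so $m_1$ floods no $c'$-vertices. Thus $m_2$ is safe in $S_1$. Induction on $i$ with the single-move result then gives $min_{S_i} = min_S - i$.
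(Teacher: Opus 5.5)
Your proof is correct and takes the same route as the paper: you get the easy bound by prepending $m_1$, and the reverse bound by deleting the occurrence of the safe color $c$ from an optimal solution of $S$ and showing the remaining moves solve $S_1$. Your nested-flooded-sets argument justifies rigorously the paper's brief claim that the safe move ``can as well be played first'' (and using the first occurrence of $c$ sidesteps the paper's ``exactly one'' assertion), while your check that distinct safe moves of $S$ stay safe after earlier ones are played fills in the paper's unexplained ``Consequently'' for the multi-move case.
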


\begin{proof}
    Let $m_2,\dots,m_n$ be a shortest move sequence to solve $S_1$. Then move sequence $m_1,m_2,\dots,m_n$ solves $S$, thus $\min_S \le \min_{S_1}+1$. Conversely, let $p_1,\dots,p_k$ be a shortest sequence solving $S$. There must be exactly one move $p_j$ in the sequence that plays the color of the safe move $m_1$ by the definition of a safe move, i.e., $p_j=m_1$. Since $m_1$ is a safe move of state $S$, it can as well be played first. So, $m_1,p_1,\dots,p_{j-1},p_{j+1},\dots,p_k$ also solves $S$, entailing that $p_1,\dots,p_{j-1},p_{j+1},\dots,p_k$ solves $S_1$, proving $\min_{S_1} \le \min_S - 1$.
\end{proof}

For completeness, we show the following obvious property of monotonicity.

\begin{lemma}
    \label{lemma:monotony}
    If $S'=(G,c_G,F'_G)$ and $S=(G,c_G,F_G)$ are two game states of the same underlying colored graph $(G,c_G)$ with the flooded vertices of $S'$ being a subset of the flooded vertices of $S$ ($F'_G\subseteq F_G$), it holds true that
    \begin{equation*}
        \min{}_S \leq \min{}_{S'}
    \end{equation*}
\end{lemma}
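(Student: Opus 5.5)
The plan is a simulation argument. I take a shortest move sequence for $S'$ and show that playing the same colors from $S$ also solves the game. Concretely, let $m_1,\dots,m_k$ with $k=\min{}_{S'}$ solve $S'$, and write $F'^{\,0}=F'_G$, $F'^{\,i}$ for the flooded sets produced from $S'$. Write $F^0=F_G$, $F^i$ for the flooded sets produced by playing the same colors from $S$.

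The core step is an induction on $i$ showing $F'^{\,i}\subseteq F^i$ for all $i\le k$. The base case $i=0$ is the hypothesis $F'_G\subseteq F_G$. For the inductive step, suppose $F'^{\,i-1}\subseteq F^{i-1}$ and take $v\in F'^{\,i}$. There are two cases.

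\begin{itemize}
\item If $v\in F'^{\,i-1}$, then $v\in F^{i-1}\subseteq F^i$ by the induction hypothesis.
\item Otherwise $v\in N_G(F'^{\,i-1})$ and $c_G(v)$ is the color of $m_i$. Then $v$ has a neighbor $w\in F'^{\,i-1}\subseteq F^{i-1}$. If $v\in F^{i-1}$ we are done. If not, $v\in N_G(F^{i-1})$ with the played color, so $v\in F^i$ by the definition of a move.
\end{itemize}

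This relies only on the fact that the move update $F\mapsto F\cup\{v\in N_G(F):c_G(v)=c\}$ is monotone in $F$. The one subtlety is that the open neighborhood $N_G(F)$ is not itself monotone in $F$. That is why the case split above is needed: it handles vertices that are already flooded in $S$.

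To conclude, $F'^{\,k}=V(G)$ gives $F^k=V(G)$, so the same $k$ moves solve $S$ and $\min{}_S\le k=\min{}_{S'}$. One formal point remains: a move whose color has no border vertex left (because $S$ is already further along) still counts as a legal, possibly vacuous, transition under the definition. Even if such moves were forbidden, simply dropping them can only shorten the sequence, so the inequality still holds. I expect no real obstacle beyond this bookkeeping of the non-monotone open neighborhood.
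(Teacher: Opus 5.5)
Your proposal is correct and follows the paper's proof: play an optimal move sequence for $S'$ starting from $S$, and use that the flooded sets stay nested, $F_{S'_i}\subseteq F_{S_i}$, so $F_{S'_n}=V(G)$ forces $F_{S_n}=V(G)$. The only difference is that you actually prove this nesting by induction, with a case split that handles the non-monotone open neighborhood; the paper merely asserts it with ``Note that''.
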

\begin{proof}
    Note that, for a state sequence $S'_1,\dots,S'_n$ resulting from playing moves $m_1,\dots,m_n$ on $S'$, the flooded vertices $F_{S'_i}$ will be a subset of the flooded vertices $F_{S_i}$ with $S_1,\dots,S_n$ being the states resulting from playing the same move sequence on $S$. 
    
    For proving the claim, we consider an optimal move sequence $m_1,\dots,m_n$ for solving $S'$. We obtain $V(G) = F_{S'_n} \subseteq F_{S_n}$, so $F_{S_n} = V(G)$, showing that the same move sequence solves $S$.
\end{proof}

We can now introduce \autoref{algo:lower_bounds_puchert} and show that it returns a proper lower bound. It initializes the lower bound with $0$ and iteratively repeats these two steps until all vertices are flooded.

\begin{itemize}
    \item [(1)] \textit{If one or more safe moves are possible:} Flood all the vertices that could be flooded by playing all safe moves. Then add the number of solved colors to the lower bound.
    
    \item [(2)] \textit{Otherwise:} Flood the whole border and add one to the lower bound.
\end{itemize}
When $S$ is solved, the lower bound is returned.

Now, we can show that the function $LBP()$ is indeed a lower bound for the minimum number of moves:

\begin{lemma}
    \label{lem:lbpuchert}
    For any game state $S = (G, c_G, F_G)$, it holds that
    \begin{equation*}
        LBP(S) \leq \min{}_S
    \end{equation*}
\end{lemma}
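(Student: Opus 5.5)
We prove the claim by strong induction on the number of unflooded vertices $|V(G)\setminus F_S|$, following the two branches of \autoref{algo:lower_bounds_puchert}. Let $S'$ denote the state after one iteration of the while loop, and let $\delta$ be the amount added to $lb$ in that iteration, so that $LBP(S) = \delta + LBP(S')$. If $S$ is solved, then $LBP(S)=0=\min_S$. Otherwise every iteration floods at least one new vertex: in the safe branch $A\neq\emptyset$, and in the other branch $B_S\neq\emptyset$ because $G$ is connected. Hence the induction hypothesis applies to $S'$, giving $LBP(S')\le \min_{S'}$, and it suffices to prove $\min_{S'} \le \min_S - \delta$.

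\emph{Safe branch ($A\neq\emptyset$).} Every vertex $u\in A$ lies in the border and no unreachable vertex shares its color. So playing $c_G(u)$ floods every unflooded vertex of that color, and the move is safe in the sense of \autoref{def:optimal_move}. We must check that the moves for the distinct colors of $A$ remain safe when played one after another. After some of them have been played, a remaining color $c\in c_G(A)$ still has all of its unflooded vertices in the original border $B_S$. Since flooded sets only grow, these vertices are still adjacent to the flooded set, so playing $c$ still floods all of them. Also, the vertices of $c$ are not adjacent to each other (the coloring is proper), so nothing of color $c$ becomes flooded as a side effect except the vertices already in $B_S$. Playing the $|c_G(A)|$ safe moves in sequence therefore floods exactly $F_S\cup A$ and yields $S'$. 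By \autoref{lemma:optimal_move}, $\min_{S'}=\min_S-|c_G(A)|$, which is exactly what is needed.

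\emph{Combined branch.} Here $\delta=1$ and $F_{S'} = F_S\cup B_S$. Take an optimal sequence $p_1,\dots,p_k$ for $S$; note $k\ge1$. The first move $p_1$ floods a subset of $B_S$, so the resulting state $S_1$ has $F_{S_1}\subseteq F_S\cup B_S = F_{S'}$. By \autoref{lemma:monotony}, $\min_{S'}\le\min_{S_1}\le k-1=\min_S-1$.

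The main care point is the safe branch. There, $S'$ is defined as flooding the set $A$ at once rather than by explicit moves, so we have to justify that this matches playing the safe moves sequentially and that each move in that sequence is still safe when it is played. An alternative that avoids this check is to argue only the inequality $\min_{S'}\le\min_S-|c_G(A)|$ directly. Any solution of $S$ contains at least one move of each color in $c_G(A)$, because these colors have unflooded vertices. Deleting one such move per color and prepending the corresponding safe moves gives a solution of the same length. Removing the prepended moves then gives a solution of $S'$ of length $\min_S - |c_G(A)|$. This is the exchange argument of \autoref{lemma:optimal_move}, and it also implicitly requires monotonicity (\autoref{lemma:monotony}).
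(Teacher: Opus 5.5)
Your proof is correct and follows the same route as the paper's: induction over the loop iterations, with \autoref{lemma:optimal_move} handling the safe branch, and \autoref{lemma:monotony} applied to the state after the first move of an optimal sequence handling the combined branch. Your extra check that flooding $A$ at once coincides with playing the $|c_G(A)|$ safe moves in sequence, each still safe when played, fills in a step the paper's proof leaves implicit.
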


\begin{proof}
Let $S_1$ be the game state at the beginning of an iteration, and let $S_2$ be the game state after an iteration. For considering case (1), let $i$ be the number of solved colors, which is added to the lower bound. \autoref{lemma:optimal_move} yields in this situation $min_{S_1} = i + min_{S_2}$. In case (2) the whole border is flooded and $1$ is added to the lower bound. The move in a shortest sequence for state $S_1$ can only flood a subset of the border. Call the resulting game state $S'_2$ we have $F_{S'_2} \subseteq F_{S_2}$. Using \autoref{lemma:monotony} we obtain $\min_{S_1} = 1 + \min_{S'_2} \geq 1 + min_{S_2}$. And thus, by induction $LBP(S) \leq \min_S$ is a lower bound.
\end{proof}

\section{A*}
\label{sec:astar}
A* is a graph path search algorithm between a start and a target node\footnote{%
To distinguish between the graph that represents the Flood-It game and the graph of game states for searching a solution, we use the term \emph{vertices} for the former and \emph{nodes} for the latter.
} 
that has been published by Hart et al. \cite{hart_formal_1968} in 1968. It performs a guided search that makes use of a heuristic to estimate the cost of a path from each discovered node to the goal. By doing so, the search space can be explored more target-oriented, which is done by selecting the most promising node to examine next in each iteration. A drawback to this approach is its space complexity because information about discovered nodes has to be stored in memory.

In the standard terminology that is used when talking about A*, $g(n)$ represents the cost of the best known path from start node $s$ to node $n$ (\emph{backward costs}). The function $h(n)$ estimates the cost of a path from $n$ to the target node $t$ (\emph{forward costs} or heuristic). To guide the search, the main loop of A* examines the node $n$ that has the lowest $f$-value, which is determined by:
\begin{align*}
    f(n) = g(n) + h(n)
\end{align*}

Typical implementations use some sort of priority queue to keep track of the set of nodes that are queued for expansion (\emph{open list}). The node with the lowest $f$-value has the highest priority.
Information about already expanded nodes may be stored in a \emph{closed list}, often implemented as a hash table, to avoid revisiting nodes that have already been visited.

When using A* search to find the shortest solution in a game, the nodes of the graph represent game states. Each node is connected by directed edges to nodes that correspond to game states reachable by performing one move from the parent. A (shortest) path from the root (initial game state) to a leaf node, i.e., a solved game state, represents a (shortest) solution of the game.

When a node is visited in A* search, it is expanded by generating child nodes for each possible move. In general, pruning strategies should be considered to improve search efficiency. In the Flood-It game, certain moves can be ignored without endangering missing a shortest solution. The next section discusses pruning strategies based on this idea.

\subsection{Pruning Strategies}
\label{sec:astar|branching_stragegies}
There might be moves, which can be omitted from consideration when expanding a node. For any game state $S$, at most the colors of the border $B_S$ must be considered as moves because a move that plays a different color would not flood any vertices and thus cannot result in an optimal solution.

As we have shown in \autoref{lemma:optimal_move}, safe moves (\autoref{def:optimal_move}) can be played at any time without leading to an inferior solution.

Additionally, as of \autoref{cor:deferrable} the colors of the deferrable vertices $D_S$ do not have to be considered as moves unless such a color also occurs in the closed neighborhood of the unreachable vertices or there are only deferrable vertices left. In the latter case $D_S = P_S$.

Thus, we can derive strategies to prune the search space and propose to use the following three rules to decide which moves should be considered for a game state $S = (G, c_G, F_G)$.
\begin{itemize}
    \item [1. \textbf{Op}] (Optional) If \textit{safe moves} can be played on $S$, one of these moves is played, while no alternative move is considered at this game state. The choice, if there is more than one safe move, is not relevant.
    
    \item [2. \textbf{Mc}] (Must check) If there are no \textit{safe moves} remaining or rule (1) is not applied, we branch into each non-deferrable color of the border $c \in c_G{(B_S \setminus D_S)}$, and ignore all other possible moves. Deferrable colors do not have to be checked.
    
    \item [3. \textbf{Ps}] (Presolved) If $B_S \setminus D_S = \emptyset$ there are only \textit{presolved} vertices left. In that case, the order in which their colors are played is not relevant, and therefore only one color $c \in c_G{(B_S)}$ needs to be considered for branching.
\end{itemize}

Because playing the color of a presolved vertex is by definition a safe move, these rules effectively result in two pruning strategies: [Op, Mc] and [Mc, Ps]. We would expect these to be more efficient compared to branching into all remaining colors of the unflooded vertices or border.

\subsection{Lower Bounds as Heuristics}
\label{sec:astar|lower_bounds}
The heuristic function $h(n)$ plays a key role for A* search. It must be admissible to guarantee that an optimal solution is found, which means that $h(n) \leq min_S$ holds for any node $n$ representing game state $S$. This is the case for the two lower bounds $LBL$ and $LBP$ presented in \autoref{sec:game_states|lower_bounds} according to \autoref{lem:lbfurthest} and \ref{lem:lbpuchert}, thus they can be used as heuristic for A*.

In view of efficiency, it is desirable that heuristics are consistent. A simple example shows that lower bound $LBL$ of \autoref{def:lbls} is not consistent, see \autoref{app:lbl_not_consistent} in the appendix. The second lower bound, $LBP$, gave consistent estimates across all checked Flood-It instances, while we do not know whether its consistency holds in general. Anyway, both lower bounds may be used as heuristic to determine optimal solutions with A* due to their admissibility. Lacking consistency, the search procedure is, in general, not greedy, which means that the same nodes might require multiple visits, which can hamper efficiency.

\subsection{Equivalent Game States }
\label{sec:astar|equivalent_game_states}
In a straightforward implementation of A*, each node of the search space represents a game state. And each of these nodes might potentially be expanded.
If we, however, have the guarantee, that two different game states of the search space can be solved with the same minimum number of moves, only one of them would need to be expanded.
If they have the same $g$-value any of them could be chosen, otherwise only the one with the lower $g$-value must be expanded as it is guaranteed to lead to a shorter overall solution.
So we can, based on the equivalence introduced in \autoref{theorem:game_states_equal_min}, treat nodes of the search space as equivalence classes such that each class contains the game states for which we can give this guarantee.
And at any time, the game state with the currently lowest known $g$-value will be used as the representative element for its corresponding equivalence class, while all other game states of the class may be discarded.
This should result in a smaller or, in the worst case, equal-sized search space, open and closed lists compared to a default implementation, and thus lead to a potentially better performance if implemented efficiently.

\subsection{Tie Breaking Strategies}
\label{sec:astar|tie_breaking}
The $f$-values are integers corresponding to estimated solution lengths. These are comparably small compared to the number of game states in the search queue (open list). Therefore, there will typically be many nodes that have the same $f$-value in the open list, which gives a choice when A* selects the next node for exploration based on the minimum $f$-value.
According to Hart et al.\@ \cite{hart_formal_1968} such ties can be resolved arbitrarily. But according to Asai and Fukunaga \cite{asai_tie-breaking_2017} it has become standard practice to break ties in favor of nodes with lower $h$-value for a better performance. They did experiments on multi level tie-breaking, which also resolves ties between nodes that have the same $f$- and $h$-value. These experiments were not specific to Flood-It and showed that, for their set of problems, a lifo\footnote{%
LIFO = last in first out (stack); FIFO = first in first out (queue)
} 
policy $[f, h, \text{lifo}]$ outperforms a $[f, h, \text{fifo}]$ policy and interestingly $[f, \text{lifo}]$ performed nearly as well as $[f, h, \text{fifo}]$.

\section{Finding Solutions with General Solvers}
\label{sec:solvers}
Formulating puzzle rules as declarative constraints and utilizing general-purpose solvers, such as Boolean Satisfiability (SAT), Integer Linear Programming (ILP), or Constraint Programming (CP), is an established method for combinatorial problems. However, while state-based puzzles can often be modeled by defining a single valid goal configuration, process-based puzzles like Flood-It present a distinct challenge. Because the solution is a sequence of state transitions, the temporal dimension must be explicitly encoded.

To model the progression of Flood-It, we employ a time-unrolling approach. The variables representing the game state and the executed moves are duplicated across a discrete time horizon, parameterized by an estimated or bounded number of moves, $T$. Consequently, the size of the variable space and the number of required constraints scale with $T$.
Modeling processes over time inherently introduces redundancies within the search space, as solvers may explore functionally equivalent sequences or delay mandatory state changes. This can be addressed by adding symmetry-breaking constraints to the base rules of the game. These optional constraints are designed to restrict the solver to canonical evaluation paths, for instance, formally ensuring that a vertex remains flooded in all subsequent time steps once captured, or requiring that all eligible neighboring components are flooded simultaneously. Symmetry-breaking constraints may, or may not improve the efficiency of the solving process.

In the following subsections, we define a unified set of constraints for the Flood-It problem and detail their specific adaptations into the respective domains of CP, ILP, and SAT.

\subsection{A Unified Model for Flood-It}
\label{sec:solvers|model}
To formulate Flood-It for declarative solvers, we define a core set of constraints modeling graph dynamics over a discrete time horizon $T$.
Let $V$ be the number of vertices, $C$ the number of colors, and $T$ the bounded number of moves. We define $adj[v]$ as the set of adjacent vertices to $v$, and $color[v]$ as the color of vertex $v$. Vertex $0$ is designated as the pivot (from which flooding originates).

The game dynamics are modeled using the following variables:
\begin{itemize}
    \item $\Fld(t,v)$: Indicates whether vertex $v<V$ is flooded at time step $t\le T$.
    \item $\Move(t,c):$ Indicates whether color $c<C$ is selected at time step $t<T$ (one-hot encoding).
\end{itemize}

Any valid solution must satisfy the base constraints for initial and final states \eqref{eq:pivot} to \eqref{eq:final}, and the constraints for move selection and dynamics \eqref{eq:one_move} to \eqref{eq:flood_adj}:

\begin{align}
    & \quad \text{fld}(0, 0) = 1 \label{eq:pivot} \\
    \forall_{0<v<V} & \quad \text{fld}(0, v) = 0 \label{eq:not_flooded_0} \\
    \forall_{v<V} & \quad \text{fld}(T, v) = 1 \label{eq:final} \\
    \forall_{t<T} & \quad \sum_{c<C} \text{move}(t, c) = 1 \label{eq:one_move} \\
    \forall_{t<T}\forall_{v<V} & \quad \text{fld}(t+1, v) \le \text{fld}(t, v) + \text{move}(t, color[v]) \label{eq:flood_color} \\
    \forall_{t<T}\forall_{v<V} & \quad \text{fld}(t+1, v) \le \text{fld}(t, v) + \sum_{u \in adj[v]} \text{fld}(t, u) \label{eq:flood_adj}
\end{align}

Any assignment of the variables $\Move$ satisfying those constraints is a valid solution of the given Flood-It instance. Note that, however, the variable $\Fld$ does not necessarily correspond to the respective flooding state, as the flooding operation is not fully described by the constraints. It might, for example, happen that a vertex that was flooded already becomes unflooded at a later time step, as long as it will get flooded again eventually. Similarly, a vertex that should get flooded according to the flooding rules might not be marked as flooded as long as it will get flooded later. Our constraints so far ensure only that \emph{nothing more} is flooded than it should be, according to the rules. Considering efficiency, we consider the following optional symmetry-breaking constraints:

\begin{align}
    \forall_{t<T}\forall_{v<V}  & \quad \Fld(t,v) \le \Fld(t+1,v)                         \label{CP|floodedforever}\tag{P}\\
    \forall_{t<T}\forall_{v<V}\forall{u\in adj[v]}  & \quad \Fld(t,v) + \Move(t, color[u]) \leq \Fld(t+1,u) + 1       \label{CP|exhaustive}\tag{E}
\end{align}

\subsection{Paradigm-Specific Adaptations}
\label{sec:solvers|specific}
\begin{itemize}
  \item Integer Linear Programming (ILP): Uses linear inequalities over binary variables to represent logical implications as shown in the formulation above.
  \item Constraint Programming (CP): Uses native integer domains for $\Move(t)$ ranging from $0$ to $C-1$, replacing the one-hot binary array. Boolean operations and logical implications can be expressed directly without the transformation into linear inequalities. For example, constraints (E) are modeled as $(\Fld(t,v) \wedge \Move(t) = color[u]) \Rightarrow \Fld(t+1,u)$.
  \item Boolean Satisfiability (SAT): Converts all inequalities and implications into Conjunctive Normal Form (CNF) clauses using standard clausal expansions.
\end{itemize}

\subsubsection{Minimization Problem with CP and ILP}
\label{sec:solvers|cp|minimization}
Using CP and ILP also provides the opportunity to add a minimization to determine a solution with the minimum number of moves. However, since the size of the variable arrays $\Fld$ and $\Move$ depends on the number of moves to be modeled, we still need an estimate of an upper bound for the number of moves, which we denote by $T$. Below, we will add a constraint that forces only the first moves to be used whenever a solution exists that has less than $T$ moves. For the actual minimal number of moves, we introduce an integer variable $\Tmin$. Furthermore, we enlarge the domain of the variables $\Move$ to include the value $-1$, which indicates that no flooding is performed. 

The new CP constraint, which makes sure that only the first $\Tmin$ moves are used, while later moves are void (non-flooding) moves is:
\begin{align}
    \forall_{t<T} & \quad \Move(t)\ge 0 \Rightarrow \Tmin\ge t                          \label{CP|min}\tag{M}
\end{align}
When the CP solver is set to minimize $\Tmin$, a shortest solution will be returned. 

For ILP, the corresponding constraint is
\begin{align}
    \forall_{t<T}\forall_{c<C} & \quad \Move(t,c)*t\le \Tmin                          \label{ILP|min}\tag{M}
\end{align}
with the integer variable $\Tmin$ that is set to be minimized.

In case the bound $T$ was selected too small, the solver will fail to find a solution. In that case, the process can be restarted with an enlarged bound. Having a good estimate for the number of moves will be even more important when using the SAT approach (\autoref{sec:SAT|minimization}).

\subsubsection{Minimization with the SAT Approach}
\label{sec:SAT|minimization}
With a SAT solver, we can only get a solution for the decision problem, i.e., given a number of moves $T$. In order to determine a minimum length solution, the decision solver needs to be called repeatedly until we find a value $T$ such that a solution with $T$ moves exists, but not so for $T-1$ moves. To keep the number of iterations for this search low, a tight estimate of the minimum solution length is helpful. We will determine such an estimate as a regression of the solution length from the board size and the number of colors from Flood-It training instances, see \autoref{sec:solvers|prediction}.

There may be more efficient strategies than using a fixed step size of $1$ for the iterative search. However, our results suggest that for random puzzle instances with the board sizes and number of colors investigated here, there might not be much to gain. Anticipating later results, \autoref{fig:sol-len-stats} shows there is not too much spread in the minimum solution lengths for a fixed combination of \emph{size} and \emph{colors}. Moreover, \autoref{app:solving_time_vs_bound} in the appendix indicates that solving times are typically relatively low in the decision solver, when the given solution length is off from the minimum by more than $2$ or $3$, be it too low or too high.

\subsection{Estimating the Time Horizon}
\label{sec:solvers|prediction}
To use a SAT solver to determine a minimum solution, repeated calls to the decision solver are required while probing different solution lengths for feasibility. When a number $k$ is found such that a Flood-It solution with $k$ moves exists, but not so for $k-1$, the minimum number of moves is determined as $k$. For this search process, an accurate prediction of the minimum solution length is beneficial to minimize solver iterations. 

Similarly, the CP and ILP approaches using minimization require an estimate of the solution length. The formulations involve a coding of the flooding operations where the number of required variables scales with the bounded time horizon $T$. 

Stiphout \cite{vanstiphoutFlooditSatProblem2020} observed that the shortest solution length for a Flood-It board with $n \times n$ tiles and $k$ colors falls between $n + k - 5$ and $n + k$. This observation was based on randomly generated boards with $n \in \{2, \dots, 9\}$ and $k \in \{3, \dots, 6\}$. To extend this to larger values of $n$ and $k$, we analyzed the distribution of minimum solution lengths for randomly generated puzzle instances.

For puzzles of a fixed size and a fixed number of colors, the distribution of minimum solution lengths typically falls into a window of width 4 to 6, see \autoref{fig:sol-len-stats}. The dependency of the solution length on \emph{size} and \emph{colors} is approximated with a linear function, see \autoref{fig:sol-len-reg-size} and \ref{fig:sol-len-reg-colors} in the appendix.

We propose a linear regression from the parameters \emph{size} (side length of a quadratic board) and \emph{colors} to predict the minimum number of moves required. The regression parameters were estimated using 100 randomly generated puzzle instances per \emph{size} (12, 14, 16, 18, 20) and \emph{colors} (5, 6, 7, 8, 9, 10). The ordinary least squares (OLS) regression on this training data is given by (rounded to one decimal place):
\begin{equation}
    \text{predicted solution length} = 2.4 \cdot\textit{colors} + 1.4\cdot \textit{size} - 14.3
    \label{eqn:regression}
\end{equation}
see also \autoref{fig:sol-len-reg} and \autoref{tab:sol-len-regression}. We take the rounded value of \autoref{eqn:regression} as the initial guess for $T$ in the SAT iterative search, and as the bounded time horizon for the CP and ILP variable arrays.

\begin{figure}
    \centering
    \includegraphics[width=0.6\linewidth]{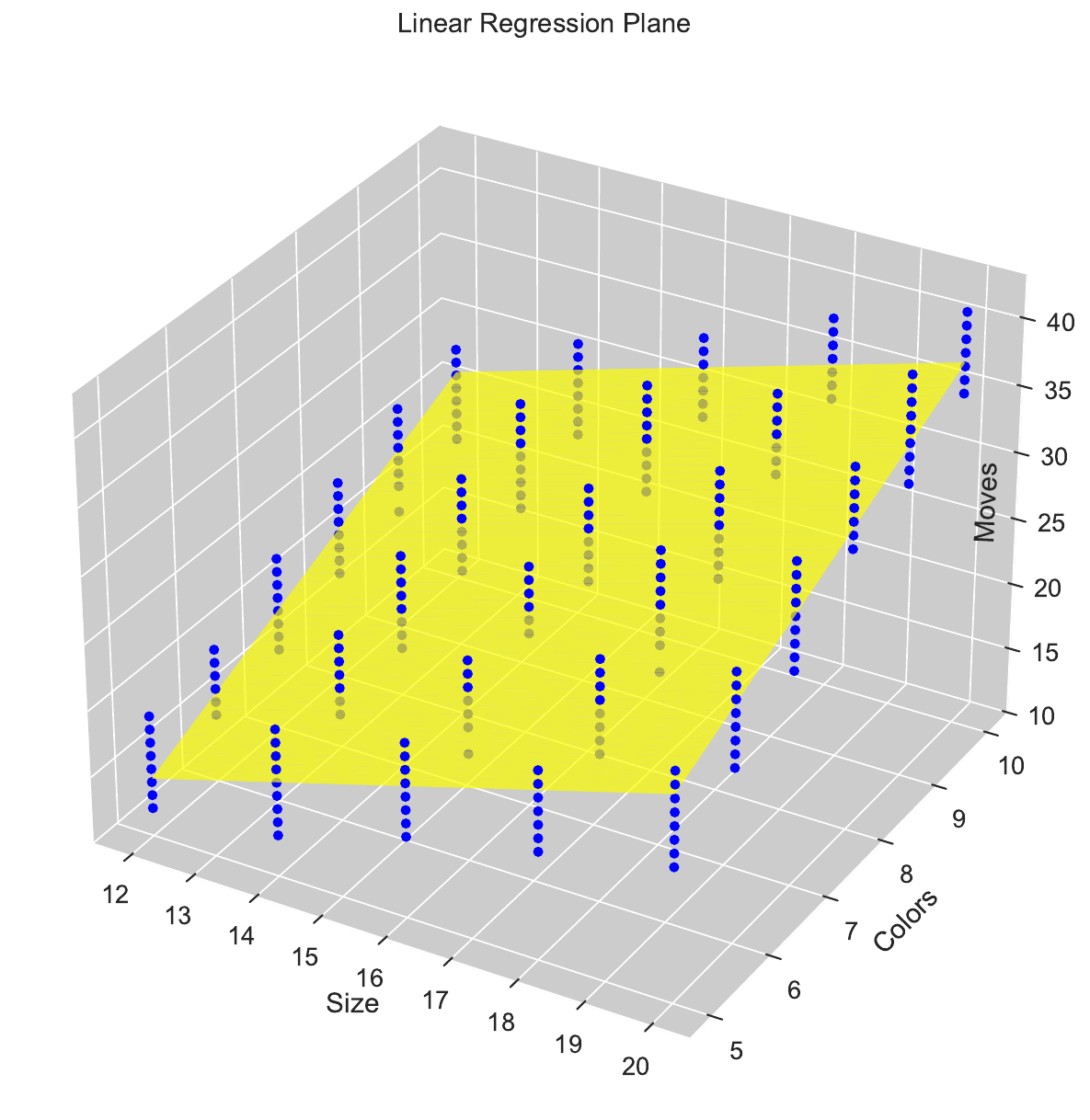}
    \caption{\textbf{Regression of solution length from size and colors.} The points indicate the different solutions lengths that occurred in randomly generated puzzle instances for the given size and number of colors. The plane is a regression of the solution length (\emph{moves}) from \emph{size} and \emph{colors}. Note, that this visualization does not indicate the distributions of \emph{moves}. Those can be seen in the projections, Appendix~\ref{sec:app}, \autoref{fig:sol-len-reg-colors} and \ref{fig:sol-len-reg-size}.}
    \label{fig:sol-len-reg}
\end{figure}

\begin{table}[!hb]
    \begin{subtable}{0.45\textwidth}
        \centering
\begin{tabular}{c@{}D{.}{.}{0}rrrrr}
\toprule
size & & 12 & 14 & 16 & 18 & 20 \\
\midrule
\multirow{6}{*}{\rotatebox{90}{\#colors}} 
 & 5  & 14.3 & 17.0 & 19.8 & 22.6 & 25.3 \\
 & 6  & 16.7 & 19.4 & 22.2 & 25.0 & 27.7 \\
 & 7  & 19.1 & 21.8 & 24.6 & 27.3 & 30.1 \\
 & 8  & 21.5 & 24.2 & 27.0 & 29.7 & 32.5 \\
 & 9  & 23.9 & 26.6 & 29.4 & 32.1 & 34.9 \\
 & 10 & 26.3 & 29.0 & 31.8 & 34.5 & 37.3 \\
\bottomrule
\end{tabular}
\caption{Values predicted by regression \eqref{eqn:regression}.}
        \label{tab:sol-len|predicted}
    \end{subtable}
    \hfill
    \begin{subtable}{0.45\textwidth}
        \centering
\begin{tabular}{c@{}D{.}{.}{0}rrrrr}
\toprule
size & & 12 & 14 & 16 & 18 & 20 \\
\midrule
\multirow{6}{*}{\rotatebox{90}{\#colors}} 
& 5 & 15.5 & 17.0 & 19.5 & 21.0 & 23.5 \\
& 6 & 17.5 & 21.0 & 22.0 & 24.5 & 26.5 \\
& 7 & 19.5 & 22.5 & 25.5 & 28.0 & 30.0 \\
& 8 & 21.5 & 24.5 & 26.5 & 30.0 & 34.0 \\
& 9 & 23.5 & 26.0 & 30.0 & 33.0 & 36.0 \\
& 10 & 24.5 & 27.5 & 31.0 & 35.0 & 38.0 \\
\bottomrule
\end{tabular}
        \caption{Actual minimum solution lengths.}
        \label{tab:sol-len|actual}
    \end{subtable}
    \caption{\textbf{Prediction of the minimum solution length.}
    \textbf{(a)}~The minimum solution length for Flood-It puzzles, predicted by OLS regression from the \emph{size} (side length of a quadratic grid) and number of \emph{colors}. In \textbf{(b)} the median across 100 puzzle instances was calculated per size and color. Note, that this is the same data as the OLS was trained on.}
    \label{tab:sol-len-regression}
\end{table}

\section{Experiments}
\label{sec:experiments}

\subsection{Generation of Flood-It instances}
\label{sec:experiments|instances}
We generated Flood-It instances on square grids of different sizes (side lengths 12, 14, 16, 18, 20) and different numbers of colors (5 to 10). For each cell of the grid, the color was independently determined by Python's (pseudo) random generator \texttt{random.randint()}. We verified that all possible colors were actually used. One hundred instances were generated for each combination of size and number of colors. \autoref{fig:sol-len-stats} shows the distribution of the minimum solution lengths among the generated puzzles. 
\begin{figure}
    \centering
    \includegraphics[width=\linewidth]{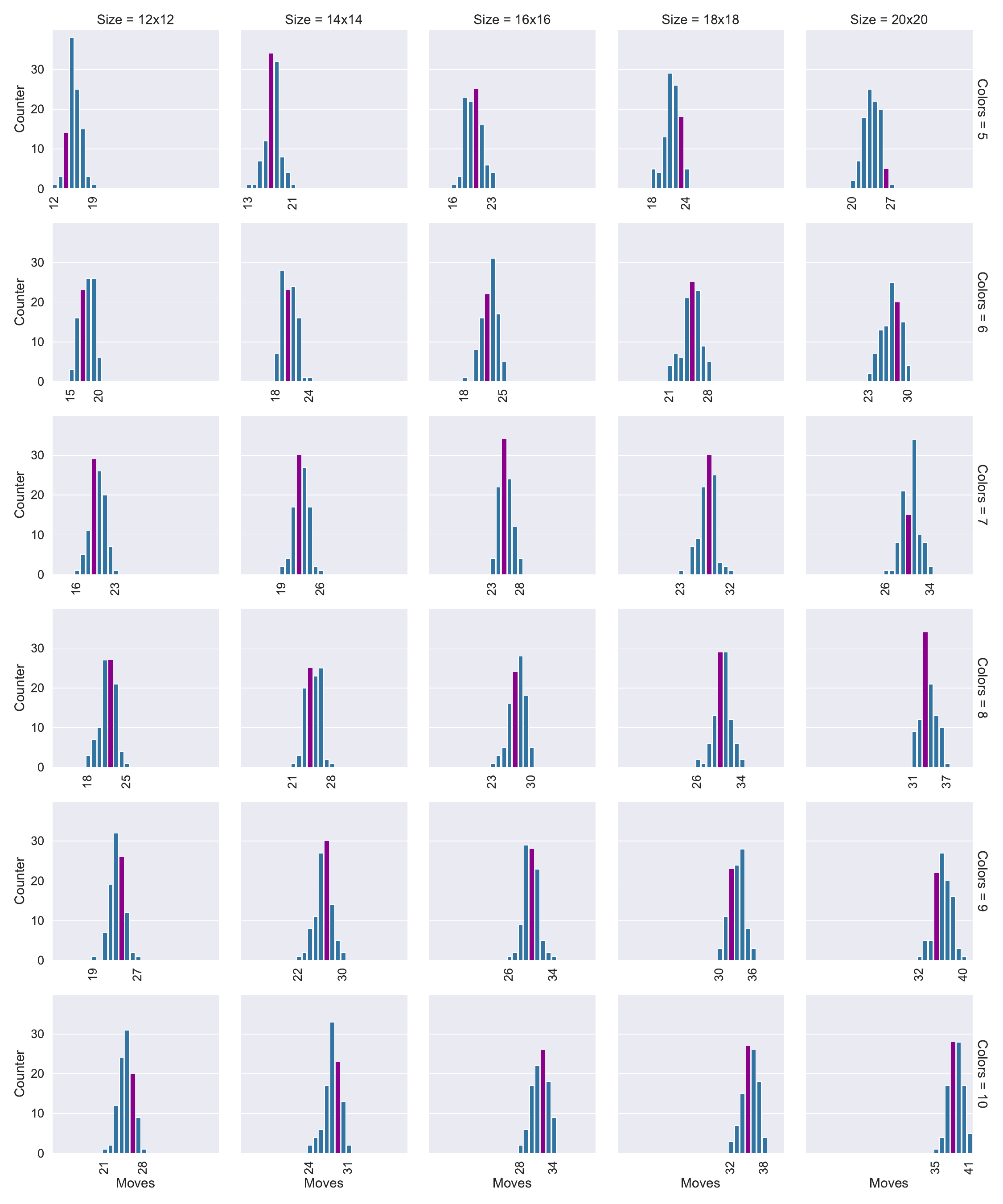}
    \caption{For each \emph{size} and number of \emph{colors}, 100 Flood-It instances were randomly created. The bar plots show the distribution of the minimum solution lengths for each pair of \emph{size} and \emph{colors}. The bar corresponding to the \emph{predicted} minimum solution length based on a regression from \emph{size} and \emph{color}, see \autoref{sec:solvers|prediction}, is shown in dark magenta.}
    \label{fig:sol-len-stats}
\end{figure}

\paragraph{Training and test set}
To obtain a representative dataset for the final evaluation of the solving algorithms (`test set'), we picked the first instance of each \emph{size}/\emph{color} combination that had a minimal solution length not encountered before. This process resulted in a test set containing 233 Flood-It instances, one from each bin in \autoref{fig:sol-len-stats}. 
For tuning each approach, we used a different dataset of 99 puzzle instances (`training set'), which was created earlier in a similar way.

\subsection{Implementation of Solver Approaches}
\label{sec:experiments|implementation}
The A* approach was implemented in C++, whereas we used Python interfaces to different solvers to implement the approaches introduced in \autoref{sec:solvers}. For SAT solvers, we used the freely available PySAT toolkit \cite{ignatiev2018pysat} which provides an interface to a number of dedicated SAT solvers. As an alternative, we implemented the SAT constraints in the much more general Z3 Satisfiability Modulo Theories (SMT) solver developed by Microsoft \cite{de2008z3} as well as using the open-source optimization suite OR-Tools\cite{ortools} developed by Google. For ILP, we again used Z3, OR-Tools, and additionally Gurobi \cite{gurobi} and CPLEX \cite{cplex,bixby2002cplex}, which are both free for academic use. For Constraint Programming, we used CPLEX, OR-Tools and the Z3 solver. Experiments were run on a MacBook Pro with an M2 Max chip and 96~GB memory.

\section{Results}
\label{sec:results}
We first discuss individual results for the decision problem. As a high number of variations have been investigated in this study, we show just the most important results in the main text, while more detailed comparisons and results of less successful variants are either included in the appendix or are just shortly mentioned.

Since it turned out that the SAT Solvers showed a much better performance compared to ILP and CP, we present the results in more detail for them.

We evaluated the different SAT solvers provided in the PySAT toolkit for the Flood-It decision problem, see \autoref{app:results|sat-solvers} in the appendix. Based on the results, we selected Glucose 4.2.1 (\solver{g42}) as the best performing one for subsequent simulations.

\subsection{Comparison of Enhancements for A*}
\label{sec:results|astar}
In this section, we evaluate the possible optimizations for A* that were introduced in \autoref{sec:astar} using the 99 instances of the training set. \autoref{fig:survival-Astar-overview} shows an overview of adding one-by-one the best option for each enhancement (heuristics, game state equivalence, pruning). The solving times for each solver configuration are displayed as survival plot \cite{brain2017benchmarking}. There is a marked improvement with each enhancement, except for the pruning strategy.
\begin{figure}
    \centering
    \includegraphics[width=0.5\linewidth]{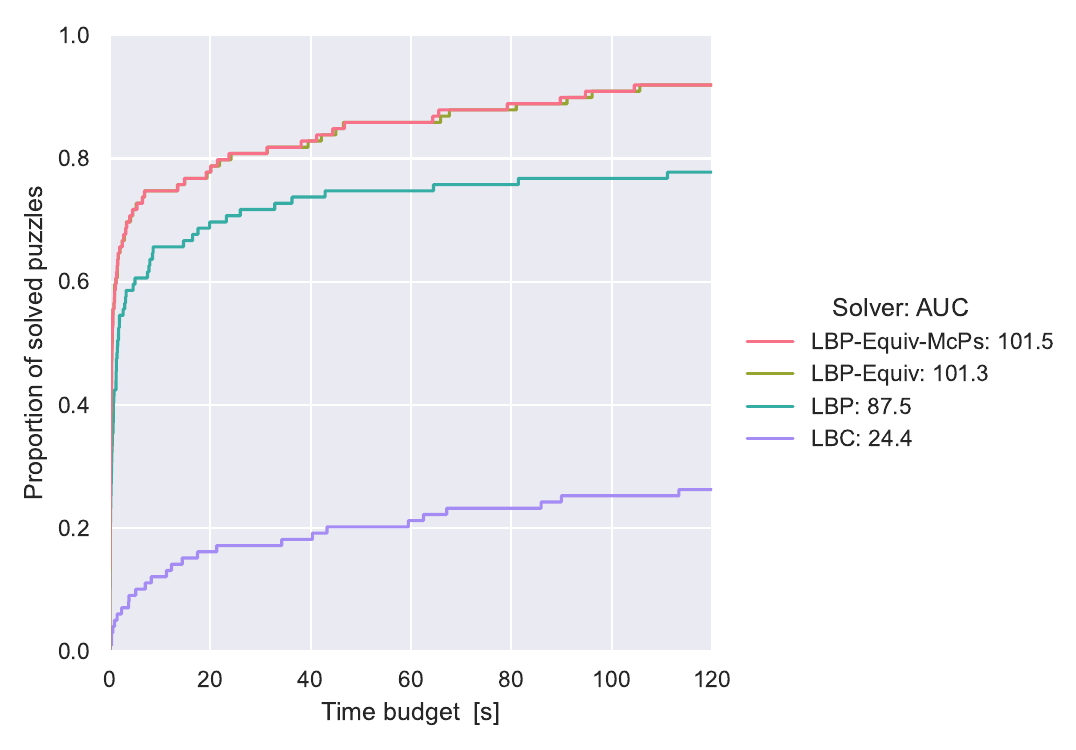}
    \caption{\textbf{A* Enhancements.} Comparison of the solution times for different A* configurations in the \emph{minimization} problem as survival plot. For each time budget on the $x$-axis up to the timeout limit of 120~s, the curves represent the ratio of puzzles solved within that time limit. The values in the legend specify the area under the curve (AUC) as an overall measure of performance. $LBC$ uses simply the number of unsolved colors as lower bound, and $LBP$ uses \autoref{algo:lower_bounds_puchert}. LBP-Equiv exploits additionally the equivalence of game states. LBP-Equiv-McPs furthermore uses the ``must check -- presolved'' strategy for pruning. For tie-breaking [$f$, $h$, lifo] was used in all variants. 
    Data: 99 instances of the training set, see \autoref{sec:experiments|instances}.}
    \label{fig:survival-Astar-overview}
\end{figure}
In this overview and the individual look at each enhancement in the subsequent paragraphs, the strategy [$f$, $h$, lifo] was used for tie-breaking, as it is recommended generally. In the last paragraph of this section, different strategies for breaking ties are compared.

\paragraph{Lower Bounds as Heuristics}
We validated the two lower bounds introduced in \autoref{sec:astar|lower_bounds} and included as a baseline in the comparison the heuristic which uses the number of unsolved colors as simple lower bound (LBC), see \autoref{fig:astar|lb|survival}. Both more sophisticated lower bounds ($LBL$, $LBP$) increase the efficiency of A* substantially compared to using a simple lower bound. The competition between the better two is shown in \autoref{fig:astar|lb|scatter} as a scatter plot. The lower bound $LBP$ (\autoref{algo:lower_bounds_puchert}), which takes safe moves into account additionally to the level structure, results in a speedup of up to factor 100 and is faster compared to the lower bound $LBL$ (\autoref{def:lbls}) in all the cases that were solved by either method.
\begin{figure}
    \begin{subfigure}{0.48\linewidth}
        \includegraphics[width=\linewidth]{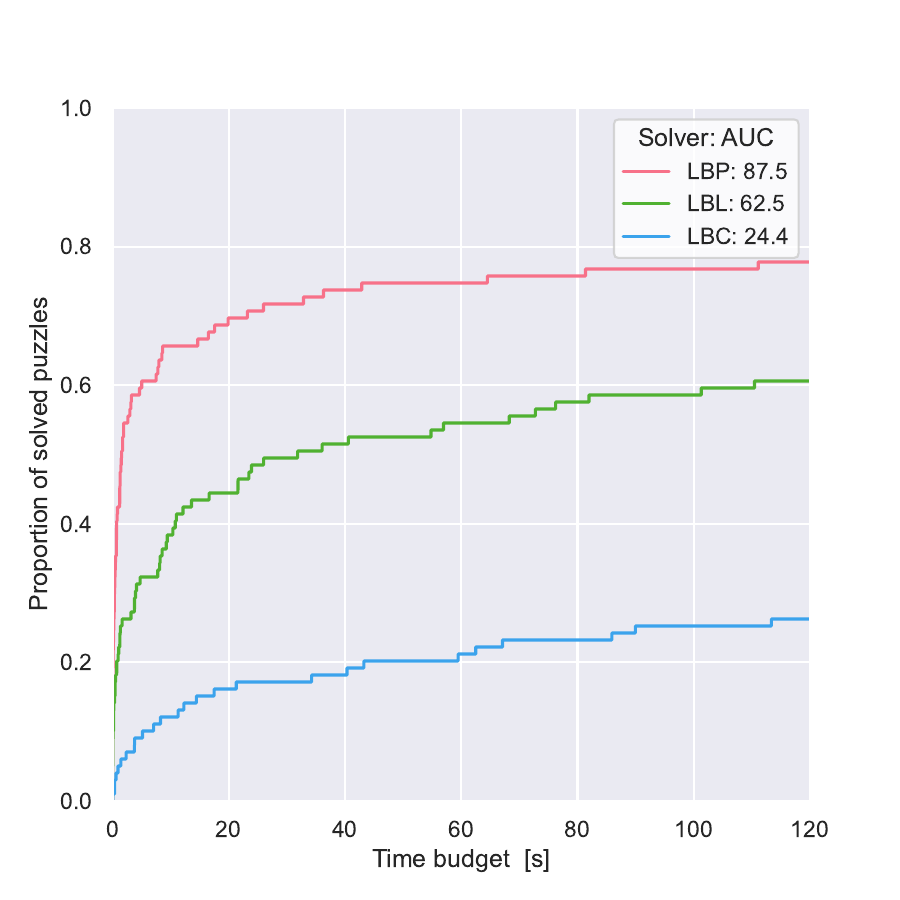}
        \caption{Survival plot for $LBC$, $LBL$, $LBP$}
        \label{fig:astar|lb|survival}
    \end{subfigure}
    \hfill
    \begin{subfigure}{0.50\linewidth}
        \includegraphics[width=\linewidth]{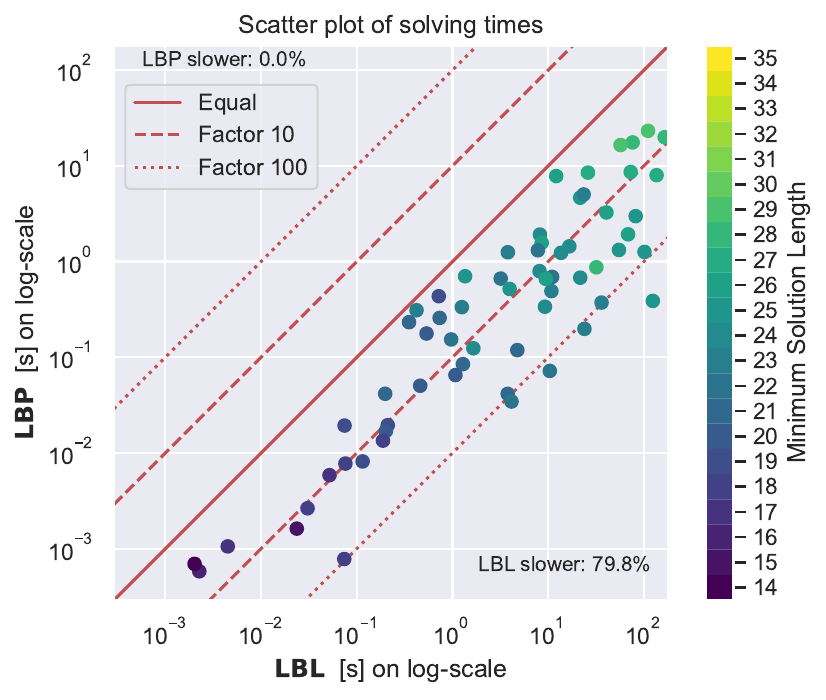}
        \caption{Scatter plot for $LPL$ vs.\@ $LBP$}
        \label{fig:astar|lb|scatter}
    \end{subfigure}
    \caption{\textbf{Comparison of different lower bounds for A*.} \textbf{(a)}~For each time budget on the $x$-axis up to the timeout limit of 120~s, the curves represent the ratio of puzzles solved within that time limit. The more sophisticated lower bounds $LBL$ and $LBP$ introduced in \autoref{sec:astar|lower_bounds} are compared to the simple lower bound of the number of unsolved colors ($LBC$). \textbf{(b)}~The better two heuristics are compared in a scatter plot. For points below the diagonal, $LBL$ is slower. The minimum solution length of each puzzle instance is color-coded. Note that 20.2\% of the instances were not solved by either method.
    Data: 99 instances of the training set, see \autoref{sec:experiments|instances}.}
    \label{fig:astar|lb}
\end{figure}
For further evaluations, we employed the lower bound $LBP$ as heuristic.

\paragraph{Using Game State Equivalences}
Making use of the equivalence introduced in \autoref{theorem:game_states_equal_min} is discussed in \autoref{sec:astar|equivalent_game_states} for pruning the search space. This results in more solved test instances and a faster solving time in $94.9\%$ of the cases with a speedup factor of up to 10, see \autoref{fig:scatterplot-Astar-equiv-equal}. The advantage gets more prominent with the increasing `complexity' of the puzzle.
\begin{figure}
    \begin{subfigure}{0.515\linewidth}
        \includegraphics[width=\linewidth]{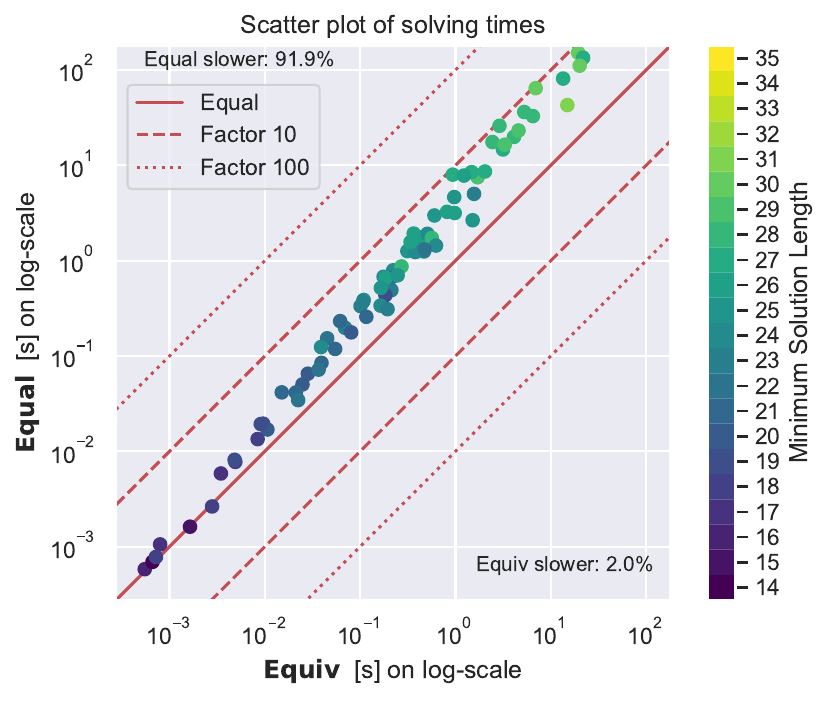}
        \caption{Scatter plot for equivalence}
        \label{fig:scatterplot-Astar-equiv-equal}
    \end{subfigure}
    \hfill
    \begin{subfigure}{0.465\linewidth}
        \includegraphics[width=\linewidth]{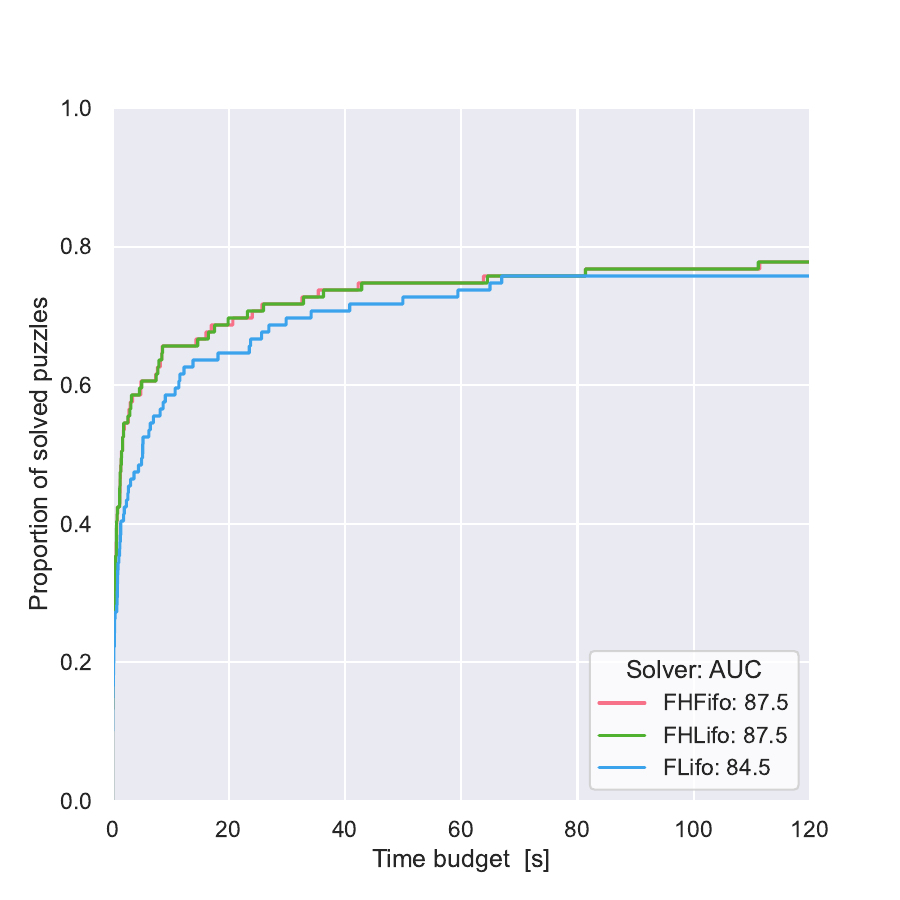}
        \caption{Survival plot for tie-breaking}
        \label{fig:astar|ties|survival}
    \end{subfigure}
    \caption{\textbf{(a)~Effect of exploiting equivalence of game states.}
    In `Equal' only exactly equal game states are identified, while in `Equiv' the equivalence of game states is used for pruning the search of A*, see \autoref{sec:astar|equivalent_game_states}.
    For points above the diagonal, `Equal' is slower. The minimum solution length of each puzzle instance is color coded. 
    \textbf{(b)~Comparison of tie-breaking strategies.} Ties of game states with the same $f$-value are resolved either by a `lifo' selection (FLifo), or choosing the lowest $h$ value and on the third level either a `lifo' (FHLifo) or a `fifo' (FHFifo) selection, see \autoref{sec:astar|tie_breaking}.
    Data: 99 instances of the training set, see \autoref{sec:experiments|instances}.}
\end{figure}
For further evaluations, we made use of the equivalence.

\paragraph{Pruning Strategies}
In our simulations, the benefit of the pruning strategies of \autoref{sec:astar|branching_stragegies} vanished when the other optimizations (equivalent game states and $LBP$) are employed. We used McPs in the final evaluation, as it gave the best results on the training data by a small margin (\autoref{fig:survival-Astar-overview}).

\paragraph{Tie Breaking}
\label{sec:results|astar|tiebreaking}
Finally, we take a look at the different options for tie-breaking. Selecting game states with the lowest $h$-value among those having the same $f$-value was found to be beneficial over a `lifo' selection.
Comparing the strategies that both use the $h$-value to break ties on the second level, [$f$, $h$, lifo] performs slightly better than [$f$, $h$, fifo], see \autoref{fig:astar|ties|survival}.

The direct comparison between [$f$, $h$, lifo] and [$f$, lifo]
shows a clear benefit of breaking ties based on $h$-value on the second level with [$f$, $h$, lifo] being faster on $86.9\%$ of the test instances with a speedup factor of up to $10$.

\subsection{Comparison of Different SAT Formulations}
\label{sec:results|sat-formulations}
The optional constraint (E) increased solving times in most cases. Also, it led to a substantial increase in variance in solving times, see \autoref{app:results|variation} in the appendix. Therefore, this option is not further discussed here. Moreover, an alternative formulation proposed in \cite{vanstiphoutFlooditSatProblem2020} (\autoref{app:solvers|satv2} in the appendix), which introduces an additional variable for the concept of vertices ‘touching the flood’ was found to be less efficient in our setting (\autoref{app:results|SATv2}).

Within the SAT solver available in the PySAT toolkit we selected Glucose 4.2.1 (\solver{g42}) based on the results shown in \autoref{app:results|sat-solvers}. We compared that with the SAT formulations within the more general Z3-Solver and OR-Tools. The solving times for the different constraint variants are displayed in \autoref{fig:comp-SAT-variants} similar to a survival plot \cite{brain2017benchmarking} but without accumulation of times on the $x$-axis.  In all three solvers, using the optional constraint (P) with or without constraint (C) is beneficial. 
\begin{figure}[ht!]
    \centering
    \begin{subfigure}{0.32\textwidth}
        \includegraphics[width=\linewidth]{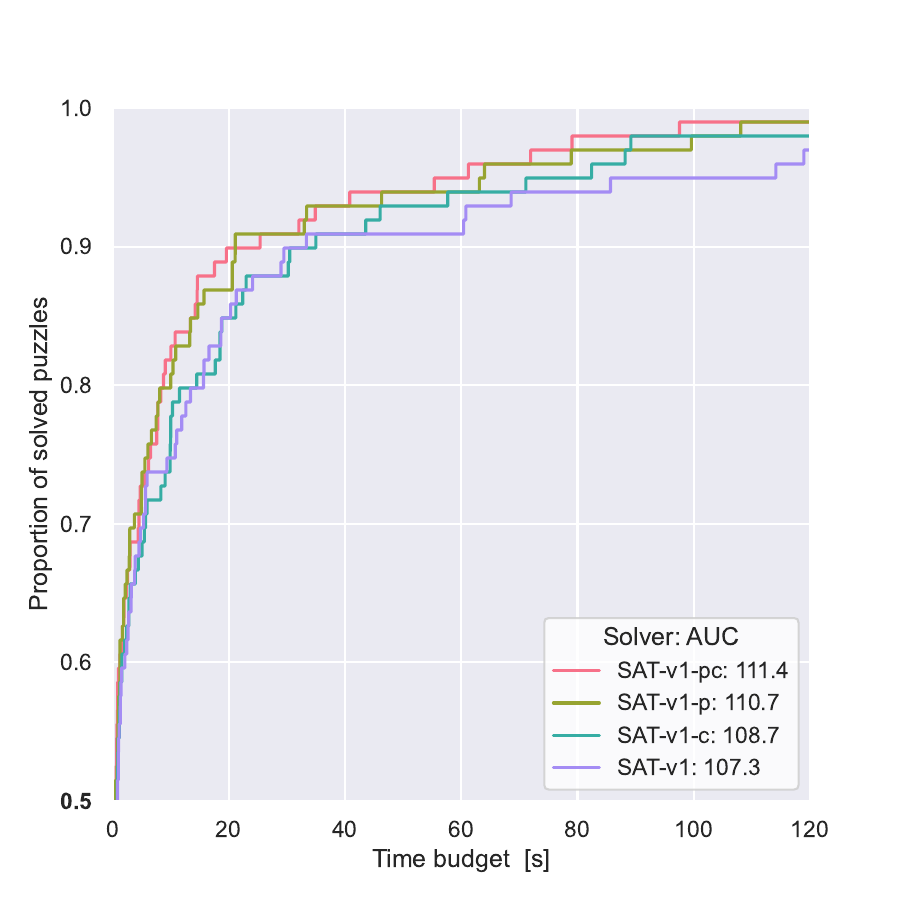}
        \caption{Glucose 4.2.1 SAT solver.}
        \label{fig:comp-SAT-variant|g42}
    \end{subfigure}
    \hfill
    \begin{subfigure}{0.32\textwidth}
        \includegraphics[width=\linewidth]{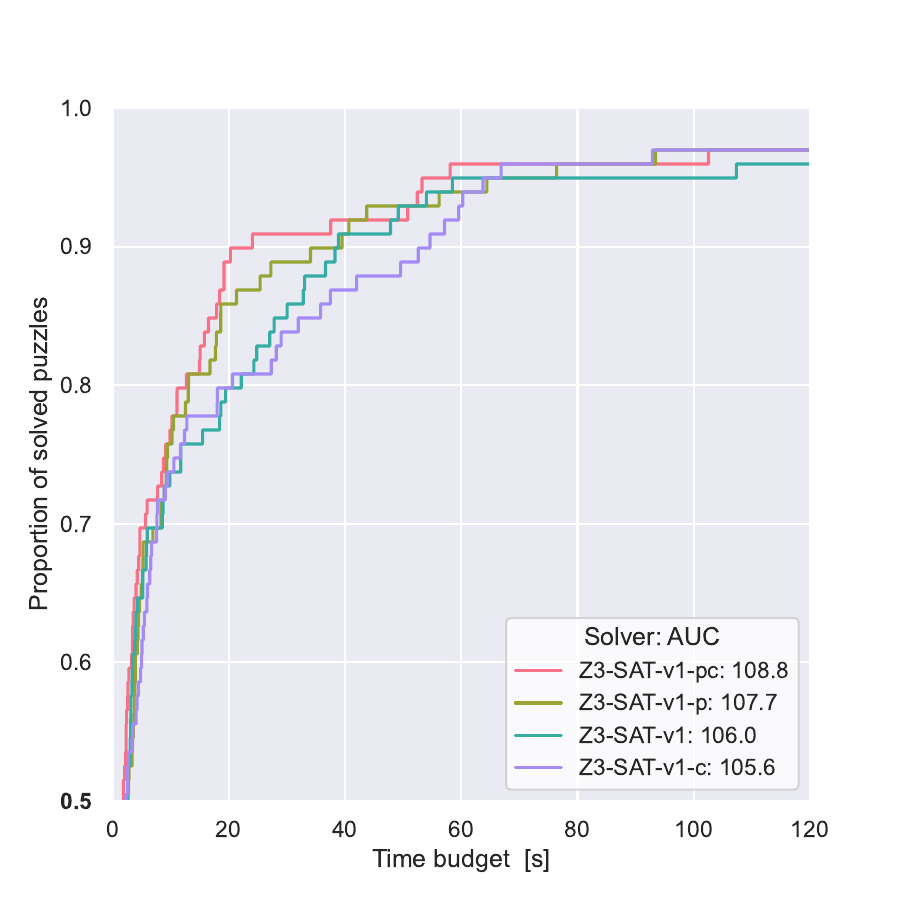}
        \caption{SAT formulation in Z3-Solver.}
        \label{fig:comp-SAT-variant|Z3}
    \end{subfigure}
    \hfill
    \begin{subfigure}{0.32\textwidth}
        \includegraphics[width=\linewidth]{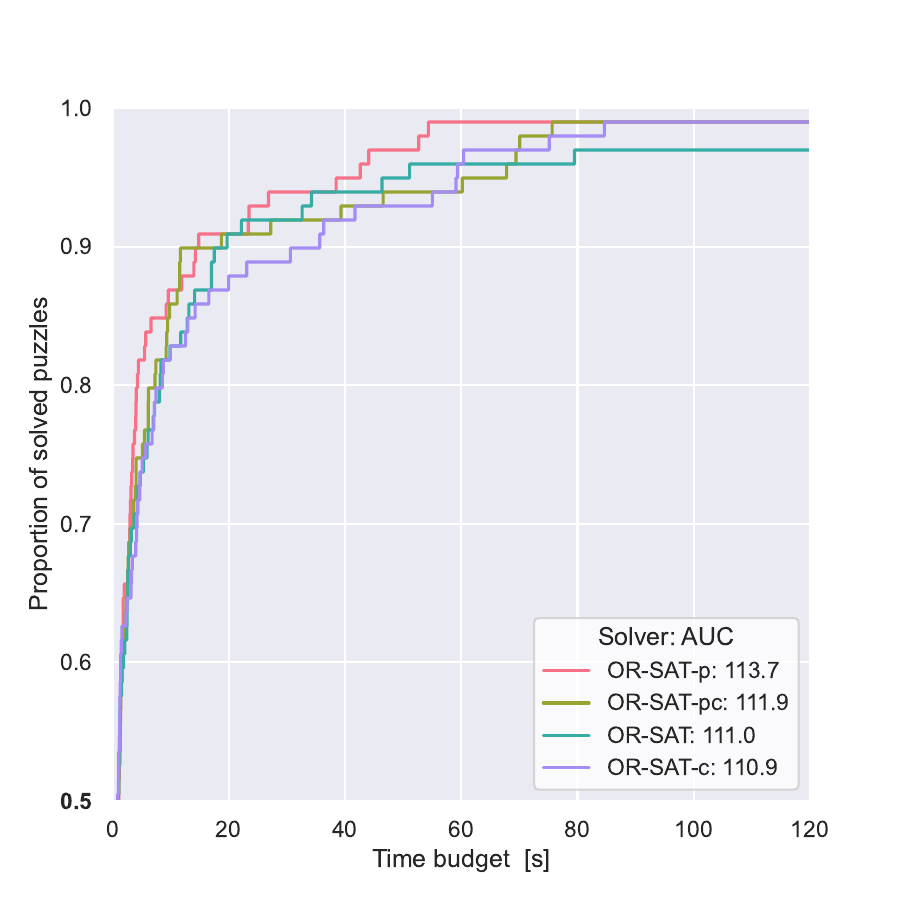}
        \caption{SAT formulation in OR-Tools.}
        \label{fig:comp-SAT-variant|OR}
    \end{subfigure}

    \caption{\textbf{Different solver engines for SAT.} Comparison of the solution times when using the SAT formulation with different solver engines and constraint variants in the \emph{decision} problem as survival plot. For each time budget on the $x$-axis up to the timeout limit of 120~s, the curves represent the ratio of puzzles solved within that time limit. The values in the legend specify the area under the curve (AUC) as an overall measure of performance. 
    Data: 99 instances of the training set, see \autoref{sec:experiments|instances}.}
    \label{fig:comp-SAT-variants}
\end{figure}
The comparison between the solvers bears OR-Tools as the winner, closely followed by the Glucose solver of PySAT and finally Z3. This order is replicated on the test set, see next section.

\subsection{Comparison of Solvers for the Decision Problem}
\label{sec:results|decision-solvers}
Based on the investigation of the 99 instances of the training set presented in \autoref{sec:results|sat-formulations}, we selected the best variant for each solver using SAT formulations. Furthermore, we included the best A* configuration in the final comparison for the decision problem. The results using the instances of the test set are shown as survival plot in \autoref{fig:comp-decision}.
\begin{figure}
    \centering
    \begin{subfigure}{0.49\textwidth}
        \includegraphics[width=\linewidth]{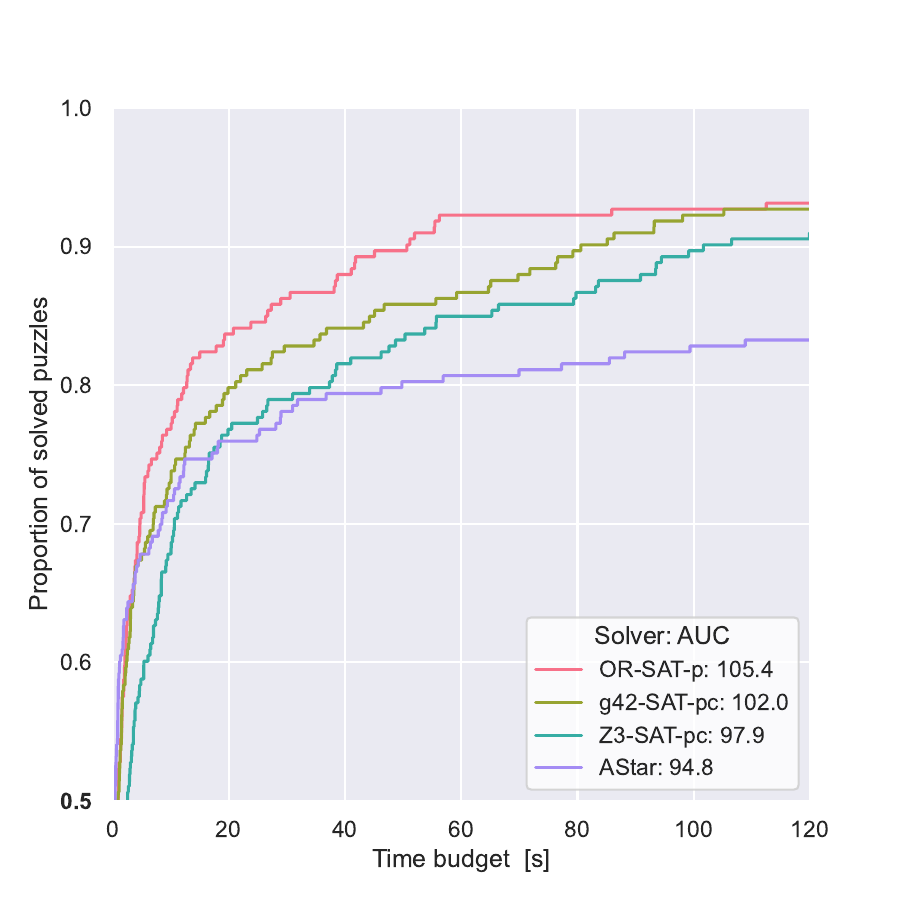}
        \caption{Survival plot with linear scaling.}
        \label{fig:comp-dec|testset}
    \end{subfigure}
    \hfill
    \begin{subfigure}{0.49\textwidth}
        \includegraphics[width=\linewidth]{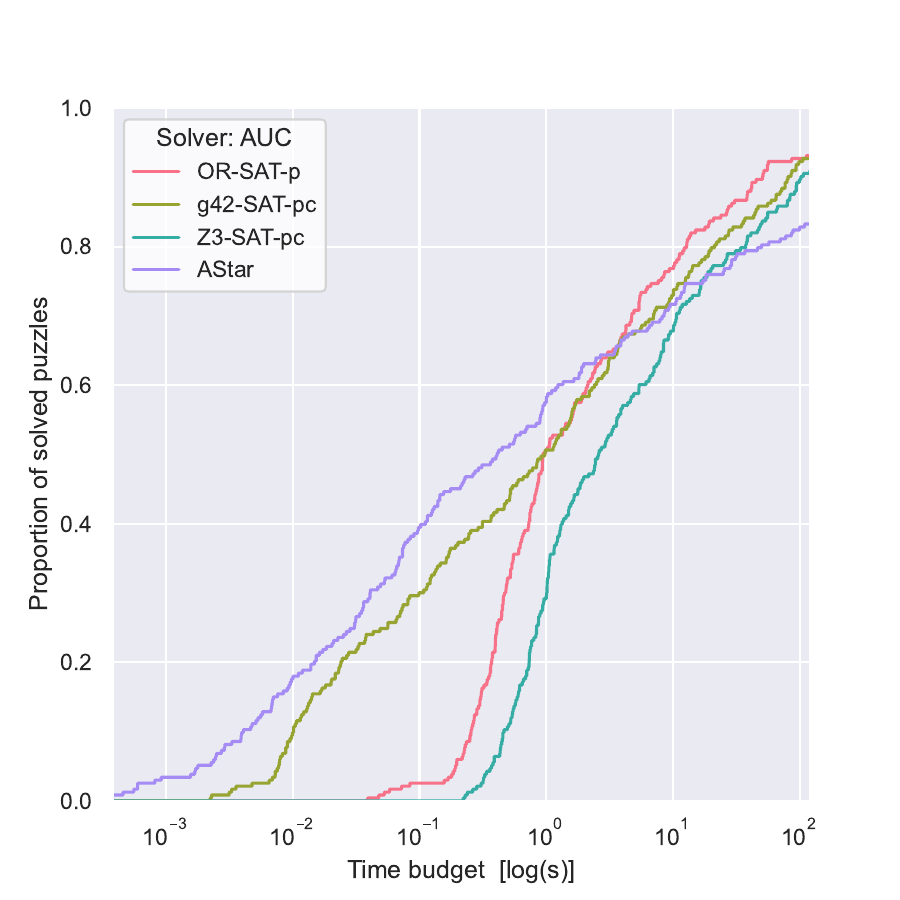}
        \caption{Survival plot with logarithmic scaling.}
        \label{fig:comp-dec|testset-log}
    \end{subfigure}
    
    \caption{\textbf{Flood-It decision problem.} Comparison of the solution times for the solvers in the \emph{decision} problem as survival plot. For each time budget on the $x$-axis up to the timeout limit of 120~s, the curves represent the ratio of puzzles solved within that time limit. The scaling on the $x$-axis is linear in \textbf{(a)} and logarithmic in \textbf{(b)}. Note, that the $y$-axis starts at $0.5$ in the left subplot. Data: 233 instances of the test set, see \autoref{sec:experiments|instances}.}
    \label{fig:comp-decision}
\end{figure}

In the view with linear scaling in \autoref{fig:comp-dec|testset}, the SAT solvers dominate A* search. Within the SAT solvers, the same order as on the training set is found, compare \autoref{fig:comp-SAT-variants}. Due to the wide range of the time budget from 0 to 120 seconds, and the solving times for the majority of instances falling into the subsecond range, the plot with a linear scaling on the $x$-axis gives an impression that is mainly driven by the less frequent hard instances. The plot with logarithmic scaling on the $x$-axis in \autoref{fig:comp-dec|testset-log} reveals that A* search performs best for the majority of puzzle instances that are solved within the first few seconds. However, as the allotted time budget increases, which correlates with higher puzzle complexity, the performance curves cross. For time budgets exceeding roughly 3 seconds, the SAT approaches (specifically g42 and OR-Tools) surpass A*.

Results for ILP and CP approaches in the decision problem are shown in the appendix \autoref{app:results|ilp-csp-solvers}. With OR-Tools, the results were very similar, but slightly inferior to the SAT formulation in OR-Tools. With the other solvers (Z3, CPLEX, Gurobi), the ILP and CP formulations were considerably less efficient.

\subsection{Comparison of Solvers on the Minimization Problem}
\label{sec:results|minimizing-solvers}
Here, we compare approaches concerning their performance in the minimization problem. In this case, the minimum number of moves to solve a puzzle is not known to the algorithm. While A* search finds a minimum length solution by design, the other algorithms need some modification. In Linear and Constraint Programming, the objective of minimizing a variable can be declared. Therefore, it is in principle enough to add such a variable and a constraint stating that after that many moves, no more flooding operations are performed, see \autoref{sec:solvers|cp|minimization} for the details. Solving the minimization problem with a SAT-Solver requires repeated calls to the SAT-Solver of the decision problem, narrowing down the number of moves, see \autoref{sec:SAT|minimization}.
\begin{figure}
    \centering
    \includegraphics[width=0.8\linewidth]{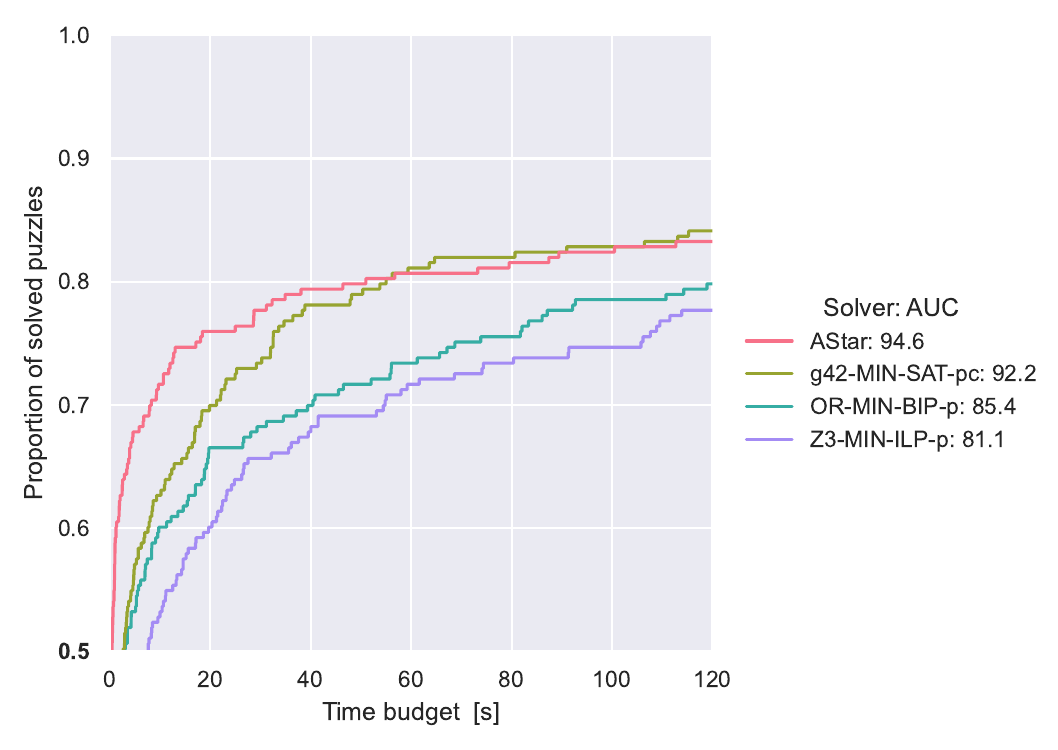}
    \caption{\textbf{Flood-It minimization problem.} Comparison of the solution times for the solvers in the \emph{minimization} problem as survival plot. For each time budget on the $x$-axis up to the timeout limit of 120~s, the curves represent the ratio of puzzles solved within that time limit. Data: 233 instances of the test set, see \autoref{sec:experiments|instances}.}
    \label{fig:comp-mini}
\end{figure}
In the same way as above, \autoref{fig:comp-mini} displays the comparison as a survival plot. The ILP and CP approach performance is, in general, worse. Otherwise, A* search is more successful on `less complex' instances, while the SAT-Solver can solve more puzzles for time budgets larger than about 60~s. Different from the decision problem, within the SAT solvers, g42 performs better than OR-Tools, which is quite surprising as the minimization for both SAT solvers is realized by the same repeated calls of the decision solver. The reason for this surprising change is the different efficiency when the decision is not called with the actual minimal solutions length, but with a lower value, as required in the iterative search. In this setting, OR-Tools performs inferior to g42, see \autoref{app:solving_time_vs_bound}.

For a more detailed comparison of A* and SAT on a puzzle basis, \autoref{fig:mini-scatter} shows a scatter plot. 

Color coding the minimum solution length of each instance reveals that the solution time of A* search is highly correlated with this parameter, while this dependency is weaker for the SAT solver.

\begin{figure}[ht!]
    \centering
    \includegraphics[width=.75\linewidth]{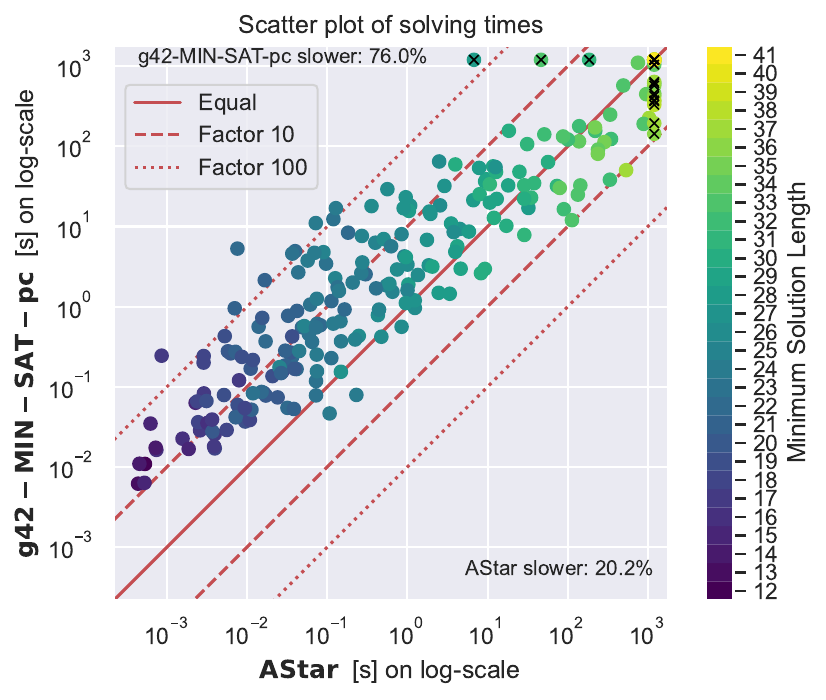}

    \caption{Scatter plot of solution times for the minimization problem using the A* and the SAT-Solver. Each puzzle instance is one data point, and their solution times for A* are on the $x$- and for SAT on the $y$-axis. For points above the diagonal line, A* finds the solution faster, while points below correspond to puzzles that were solved faster with SAT. Puzzle instances are color-coded by the minimum solution length. Dots that are crossed out were not within the time limit of 1200~s by at least one of the solvers. (Note that the survival plot in \autoref{fig:comp-mini} has a different time limit of 120~s.) Data: 233 instances of the test set, see \autoref{sec:experiments|instances}.}
    \label{fig:mini-scatter}
\end{figure}

\section{Discussion}
\label{sec:discussion}
Flood-It puzzles designed for human solvers to find minimum solutions typically have sizes up to 12x12 and use not more than six colors. Puzzle instances in this complexity range are rapidly solved by any of the investigated approaches and implementations. To challenge algorithmic development, we investigated randomly generated Flood-It instances of sizes up to 20x20 using up to 10 colors. To be computationally feasible, this higher level of complexity requires domain-specific optimization in A* search and, for general-purpose solvers, the appropriate choice of solving engines.

The main contribution of this paper is to present an approach using A* search to solve Flood-It puzzle instances. Our approach is endowed with tailored ingredients that are crucial for efficiency. First, we derive an equivalence of game states that allows substantial pruning. Second, we employ a heuristic based on safe and combined moves. Third, we introduce pruning strategies based on strategic considerations. Finally, a good tie breaking strategy is selected.

Additionally, approaches with general solvers (SAT, Integer and Constraint Programming) have been systematically explored. All considered approaches have been evaluated for the decision and for the minimization problem (\emph{existence of a solution of given length} and \emph{finding a shortest solution}) of Flood-It. 

In the following, we first discuss our validation strategy, then the outcome of optimizing each approach and subsequently the comparison across the different approaches.

\subsection{Validation of the Solving Approaches}
\label{sec:discussion|validation}
This work features a systematic and clean performance validation. The best performing variant of each approach was chosen based on a training set of 99 randomly generated puzzles. The final comparison was done on an independently generated test set of 233 puzzles. These puzzles were spanning sizes of side lengths from 12 to 20 and numbers of colors from 5 to 10. For each combination of those, 100 puzzles have been generated and each puzzle featuring a solution length that was not encountered before was collected for the test set.

\subsection{A* search for Flood-It}
\label{sec:discussion|astar}
%
\paragraph{Tie-Breaking}
The employed strategies for breaking ties (selecting a game state among all with the lowest $f$-value) were not specific to Flood-It. In accordance with the literature \cite{asai_tie-breaking_2017} we found tie-breaking in favor of small $h$ values efficient and the effect of third-level tie-breaking (fifo vs.\@ lifo) negligible, see \autoref{fig:astar|ties|survival}. Different from \cite{asai_tie-breaking_2017}, we found $[f, h, \text{fifo}]$ clearly superior to $[f, \text{lifo}]$, see \autoref{sec:results|astar|tiebreaking}. Since their result is based on an evaluation on datasets from 35 different domains, it has more general significance. Note, that we did not implement the more sophisticated diversification strategy proposed in \cite{asai_tie-breaking_2017}.

\paragraph{Equivalence vs.\@ Equality.}
The implementation that treated equivalent game states, see \autoref{theorem:game_states_equal_min}, as one node of the search space clearly outperformed the implementation that treated each discovered game state as a separate node, see \autoref{fig:scatterplot-Astar-equiv-equal}. There has been only one low complexity instance on which the latter had a shorter running time. 
The figure indicates that the effectiveness of equivalence compared to equality generally increases for the test instances that are harder to solve. We thus conclude that the additional computation costs for the equivalence are compensated for by the reduced size of the search space, so its implementation can be an essential optimization in using A* to solve Flood-It instances.

\paragraph{Heuristics.}
The heuristic based on the lower bound $LBP$ (\autoref{algo:lower_bounds_puchert}) proved more efficient than relying solely on the level structure ($LBL$, \autoref{def:lbls}), which in turn outperformed the baseline of using the number of unsolved colors, see \autoref{fig:astar|lb}. We conclude that the increased computational overhead of calculating the tighter bounds is more than compensated for by the resulting reduction in the number of expanded nodes.

\paragraph{Pruning.}
While pruning schemes derived from domain-specific Flood-It strategies show a benefit in basic A* search, the advantage vanishes when other optimizations (exploiting equivalent game states and the efficient $LBP$ heuristic) are used.

\paragraph{Heaps.}
The solving times with A* increased substantially more than with the SAT approach in the puzzles of the highest complexity. We tested A* also with priority queue implementations using Pairing and Fibonacci heaps, which have a better asymptotic running time than the standard binary heaps and can therefore be better suited for applications with large queues. However, in our experiments, the differences of using different heap implementations were negligible.

\subsection{Solving Flood-It with SAT Solvers}
\label{sec:discussion|sat}
Different formulations and optional constraints for the SAT approach have been evaluated. The more complicated formulation proposed in \cite{vanstiphoutSATEncodingsFloodIt2020} performed inferior to the direct formulation in our experiments, \autoref{fig:comp-SATv2}. Additional constraints for symmetry breaking (P) and (C) improved performance in general, see \autoref{fig:comp-SAT-variants}, while additional constraint (E) was detrimental in most cases. Also, it made solvers resort to randomization, leading to a high variance of running time in repeated runs, see \autoref{fig:var-solvers}, and on average to higher running times, see \autoref{fig:var-methods-subplots}.

With respect to the choice of the SAT solver engine, some of the dedicated SAT solvers from PySAT, in particular \solver{g42} (Glucose 4.2.1), performed overall the best. For the decision problem with the correct minimum solution length given, OR-Tools performed better, see \autoref{fig:comp-decision}. However, when the decision solver was called with one less than the minimum solution length, it took considerably more time to find out the infeasibility than \solver{g42}, see \autoref{fig:solving_time_vs_bound_average}. This is even more true for the Z3 solver, see \autoref{fig:solving_time_vs_bound_average|Z3}. Since this probe is required when approaching the minimization problem with a SAT solver, \solver{g42} emerged as the winner.

\subsection{Solving Flood-It with Integer and Constraint Programming}
\label{sec:discussion|programming}
Solving Flood-It with integer programming was implemented using the Gurobi, CPLEX, Z3 and OR-Tools solver. Constraint programming was implemented in CPLEX, Z3 and OR-Tools. Overall, these approaches were not competitive with the SAT formulations and provided no additional performance benefits (results not shown): With OR-Tools, all three approaches yielded essentially the same running time. In Z3, the ILP and CP formulations were substantially inferior to SAT in Z3, with ILP being slightly better than CP. The results obtained with Gurobi and CPLEX were far behind, with Gurobi being superior to CPLEX, and within CPLEX CP being more efficient than ILP.

It is however possible that the default configuration of CPLEX and Gurobi, which we used, is not an appropriate choice for our formulations of Flood-It, and thus they might perform better with different configurations.

\subsection{Comparison of all Approaches}
\label{sec:discussion|comparison}
Concerning the relative performance of A* search, SAT-, and ILP/CP-based solving, we had three hypotheses: (1)~A* is better suited for the minimization problem, while SAT solvers are better for the decision problem. (2)~The SAT, ILP, and CP formulations perform similarly on the decision problem. (3)~ILP and CP are better suited for the minimization problem than SAT due to their intrinsic optimization capability.

Hypothesis (1) was confirmed for the majority of instances. A* solved 75\% of the puzzle instances in the minimization problem faster than the SAT-based approach. While A* efficiently traverses the search space for lower complexities due to the $LBP$ heuristic and game state equivalences, its performance degrades on complex instances because of the exponential growth of the search space. Conversely, the SAT-based approach scales more favorably with complexity, overtaking A* as the minimum solution length increases.

Concerning (2), the SAT, ILP, and CP formulations performed on par in their implementations with OR-Tools. Within Z3 and CPLEX, the differences were more pronounced, but the most significant variance was found between the different solving engines. PySAT and OR-Tools achieved the best performance, outperforming Z3, while Gurobi—and to an even greater extent CPLEX, showed lower performance.

Hypothesis (3) was not supported by the data. Even within the same solving engine (e.g., OR-Tools), an iterative approach using the SAT formulation was more efficient than using the native minimization within the ILP or CP formulations. A potential reason for this outcome is that the minimization formulation requires a larger variable array that encompasses the maximum assumed solution length. Using our solution length prediction, this overhead is restricted to about five moves. However, searching for a solution that has 30 moves in a variable space prepared for 35 moves implies a large amount of suboptimal combinations among which the shortest must be found.

\subsection{Prediction of the Solution Length}
\label{sec:discussion|predition_sol_len}
While a good prediction of the minimum solution length was obtained by a linear regression from \emph{size} and \emph{colors}, there is also evidence that the actual relationship is nonlinear, cf. \autoref{fig:sol-len-stats}. For the complexity range used in this study, the small inaccuracy involved with a \emph{linear} regression was accepted. In applications that would involve a larger range of complexities, exploring nonlinear predictions seems advisable.

\section{Conclusion}
\label{sec:conclusion}
For the efficiency of A*, several domain-specific optimizations have been essential, such as game state equivalences, and lower bounds, while pruning by move strategies only yielded marginal gains. With the integration of those, A* was overall the fastest solver not only for the minimization but also for the decision problem. Since A* is a guided search towards the optimal solution, its superiority in the minimization problem came as no surprise. But A* does not scale well with large search spaces, which is reflected here by the finding that the SAT approach dominates with increasing complexity, see \autoref{fig:comp-mini} and \ref{fig:mini-scatter}. For the decision problem, this turning point is found at a much lower complexity, see \autoref{fig:comp-dec|testset}.

The SAT, ILP and CP formulations are logically equivalent. This was reflected by the fact that their implementations in OR-Tools all had essentially the same running time. With different solvers, however, the performance was found to be largely different. Apart from OR-Tools, using a dedicated SAT solver yielded the best performance and was clearly superior to implementations using Z3, Gurobi and CPLEX. 
While OR-Tools had an edge by a narrow margin over the dedicated SAT solver in the decision problem, interestingly, this did not extend to the minimization problem, in which \solver{g42} was superior. Moreover, there was a surprise in the results for the minimization problem. In the decision problem, the performance of the SAT, ILP and CP formulations were on par in their OR-Tools implementation. Therefore, we hypothesized that ILP and CP would win in the minimization problem, as they allow to canonically integrate the minimization, while a SAT formulation requires iterative search. However, the opposite was the case, see \autoref{fig:comp-mini}. A potential reason for this outcome is that the minimization formulation requires a larger variable array that encompasses the largest assumed solution length. Using our solution length prediction, this overhead is restricted to about five moves. However, searching for a solution that has 30 moves in variables prepared to represent 35 moves implies a large number of suboptimal solutions, among which a shortest has to be found.

\section*{Acknowledgement}
Python code was written in PyCharm by JetBrains. In particular, the scripts for generating the figures and tables of the results were aided by JetBrains AI using the LLM gemini-pro-1.5 and openai-gpt-4o. This paper was written using \LaTeX\ in Overleaf.

\FloatBarrier

\clearpage
\bibliographystyle{apalike}
\bibliography{references}

\newpage
\appendix

\section{Appendix}
\label{sec:app}
%


\subsection{Alternative SAT Formulation}
\label{app:solvers|satv2}
While the SAT formulation presented in \autoref{sec:solvers|specific} is a straightforward translation of the rules of the game, we also consider an alternative approach proposed in \cite{vanstiphoutFlooditSatProblem2020}. There an additional variable $\TchFld(t,v)$ is introduced for all time instances $t$ and all vertices $v$ which is true, iff $v$ is adjacent to a vertex that is flooded at time $t$ (`touching the flood'). Equations in gray are the same as in the variant in \autoref{sec:solvers}. Again, conditions with letters as equation number are optional.

\setcounter{equation}{0}
\begin{align}
    \mathcolor{gray}{& \quad \Fld(0, 0)
        \label{SAT2|pivot}}\\
    \mathcolor{gray}{\forall_{0<v<V}
        & \quad \neg\Fld(0, f)
        \label{SAT2|notfloodedat0}}\\
    \mathcolor{gray}{\forall_{v<V}
        & \quad \Fld(T,v)
        \label{SAT2|finallyallflooded}}\\
    \mathcolor{gray}{\forall_{t<T}\forall_{c<C-1}\forall_{c<d<C}
        & \quad \neg\Move(t,c) \vee \neg\Move(t,d)
        \label{SAT2|maxonemove}}\\
    \mathcolor{gray}{\forall_{t<T}\forall_{v<V}
        & \quad \neg\Fld(t+1,v) \vee \Fld(t,v) \vee \Move(t,color[v])
        \label{SAT2|floodingrequiresmove}}\\
    \forall_{t<T}\forall_{v<V}
        & \quad \neg\Fld(t+1,v) \vee \Fld(t,v) \vee \TchFld(t,v)
        \label{SAT2|floodingrequirestouching}\\
    \forall_{t<T}\forall_{v<V}
        & \quad \neg\TchFld(t,v) \vee \bigvee_{u\in adj[v]} \Fld(t,u)
        \label{SAT2|touchingrequiresfloodedneighbor}\\ 
    \forall_{t<T}\forall_{v<V}\forall_{u\in adj[v]}
        & \quad \neg\Fld(t,u) \vee \TchFld(t,v)
        \label{SAT2|floodingrequirestouchedneighbor}\\
    \mathcolor{gray}{\forall_{t<T}
        & \quad \bigvee_{c<C} \Move(t, c)
        \label{SAT2|minonecolor}} \tag{C}\\
    \mathcolor{gray}{\forall_{t<T}\forall_{v<V}
        & \quad \neg\Fld(t,v) \vee \Fld(t+1,v)
        \label{SAT2|floodedforever}} \tag{P}\\
    \forall_{t<T}\forall_{v<V}
        & \quad \neg\Move(t,color[v]) \vee \neg\TchFld(t,v) \vee \Fld(t+1,v)
        \label{SAT2|movefloodstouching} \tag{E}
\end{align}

\FloatBarrier
\subsection{Comparison of Different SAT Solvers}
\label{app:results|sat-solvers}
The PySAT toolkit \cite{ignatiev2018pysat} includes a number of SAT solvers. \autoref{fig:comp-SAT} compares the performance of seven SAT solvers\footnote{%
\textbf{g42}: Glucose 4.2.1\cite{audemard2018glucose}; 
\textbf{cd19}: CaDiCaL 1.9.5\cite{biere2017cadical};
\textbf{gc4}: Gluecard 4.1 (based on Glucose, see \cite{pysatdoc});
\textbf{mgh}: MinisatGH\cite{een2003minisat};
\textbf{mcm}: MapleCM\cite{liang2016maplesat};
\textbf{cms}: CryptoMinisat\cite{soos2016cryptominisat}
}
using the SAT formulation of \autoref{sec:solvers} with constraints (P) and (C). Comparing the running time on a number of tasks with a high variance in complexity is not straightforward. Most ways of displaying would be completely dominated by the complex tasks. We plot the proportion of puzzles (on the $y$-axis) that a method can solve within a time budget (on the $x$-axis), see \autoref{fig:comp-SAT}. As the curve of the Glucose 4.2.1 Solver (\solver{g42}) is on top for most of the time, we use this solver for all subsequent investigations, unless stated otherwise.

\begin{figure}[ht!]
    \centering
    \includegraphics[width=\linewidth]{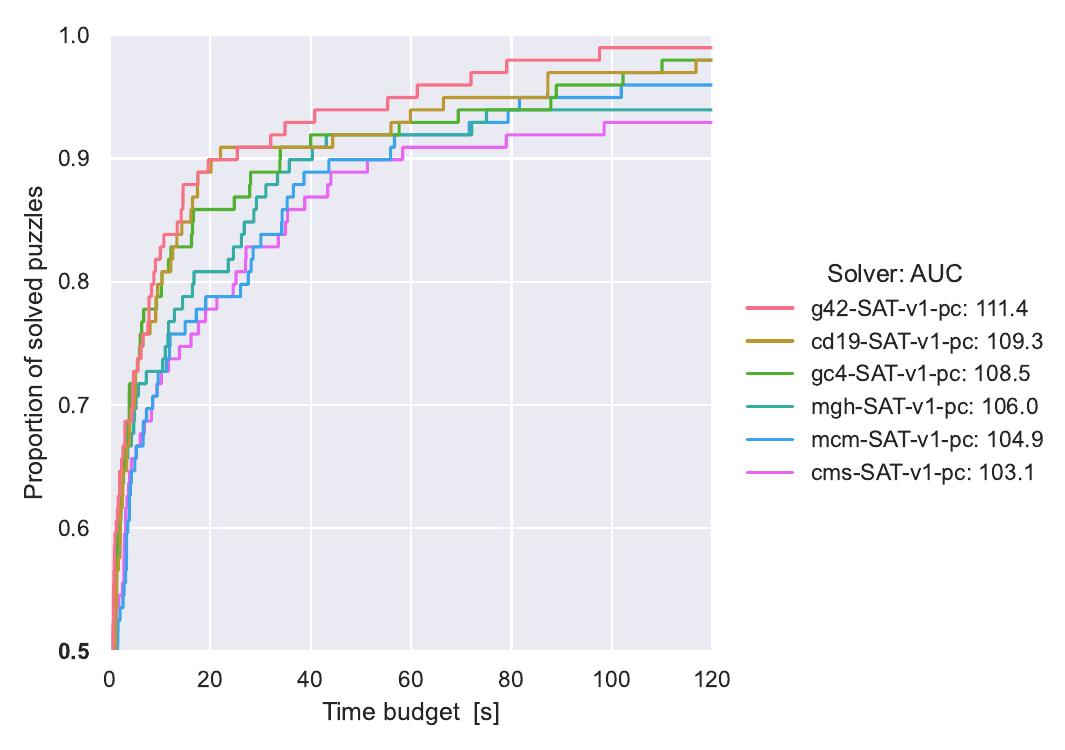}
    \caption{The graph compares the overall performance of the different SAT solvers available in PySAT on 99 Flood-It instances as a survival plot. For the time budget on the $x$-axis up to two minutes, the curves show the proportion of the puzzles solved. As an overall performance measure, we use the area under the curve (AUC). The values for each solver are given in the legend. For better display, only the range 0.5 to 1 is shown on the $y$-axis. The abbreviations for the solver are: \textbf{g42}: Glucose 4.2.1, \textbf{cd19}: CaDiCaL 1.9.5, \textbf{gc4}: Gluecard 4.1, \textbf{mgh}: MinisatGH, \textbf{mcm}: MapleCM, \textbf{cms}: CryptoMinisat; see the main text for references. Moreover, `v1' refers to the first SAT formulation given in \autoref{sec:solvers} and `pc' indicates that the optional constraints (P) and (C) have been used.}
    \label{fig:comp-SAT}
\end{figure}

In the same way as for different formulations of the SAT problem in \autoref{fig:var-methods}, we investigated variability across the different SAT solvers that were introduced in \autoref{fig:comp-SAT}. We restricted this analysis to the SAT formulation `pc', which showed the best performance in \autoref{sec:results|sat-formulations}. The variability across repeated runs can be quantified as the coefficient of variation (CV), which is the standard deviation divided by the mean. This quantity is independent of scale and can therefore be used across problems of different complexity.

We conducted 10 runs for each solver and each puzzle of the training set, and calculated the CV across the 10 runs. \autoref{fig:var-solvers} shows the result as boxplot with an overlay of the individual data points. Above each distribution, the CV averaged across all puzzles is shown. The individual CVs are large and spread for \solver{cd19} and \solver{mcm}. The values have much less scatter for the other solver, but \solver{mgh} has one and \solver{g42} has five outliers. Also, the median CV for \solver{g42} is higher than for \solver{gc4}, \solver{mgh} and \solver{cms}, but lower than for \solver{cd19} and \solver{mcm}.
\begin{figure}[ht!]
    \centering
    \includegraphics[width=\linewidth]{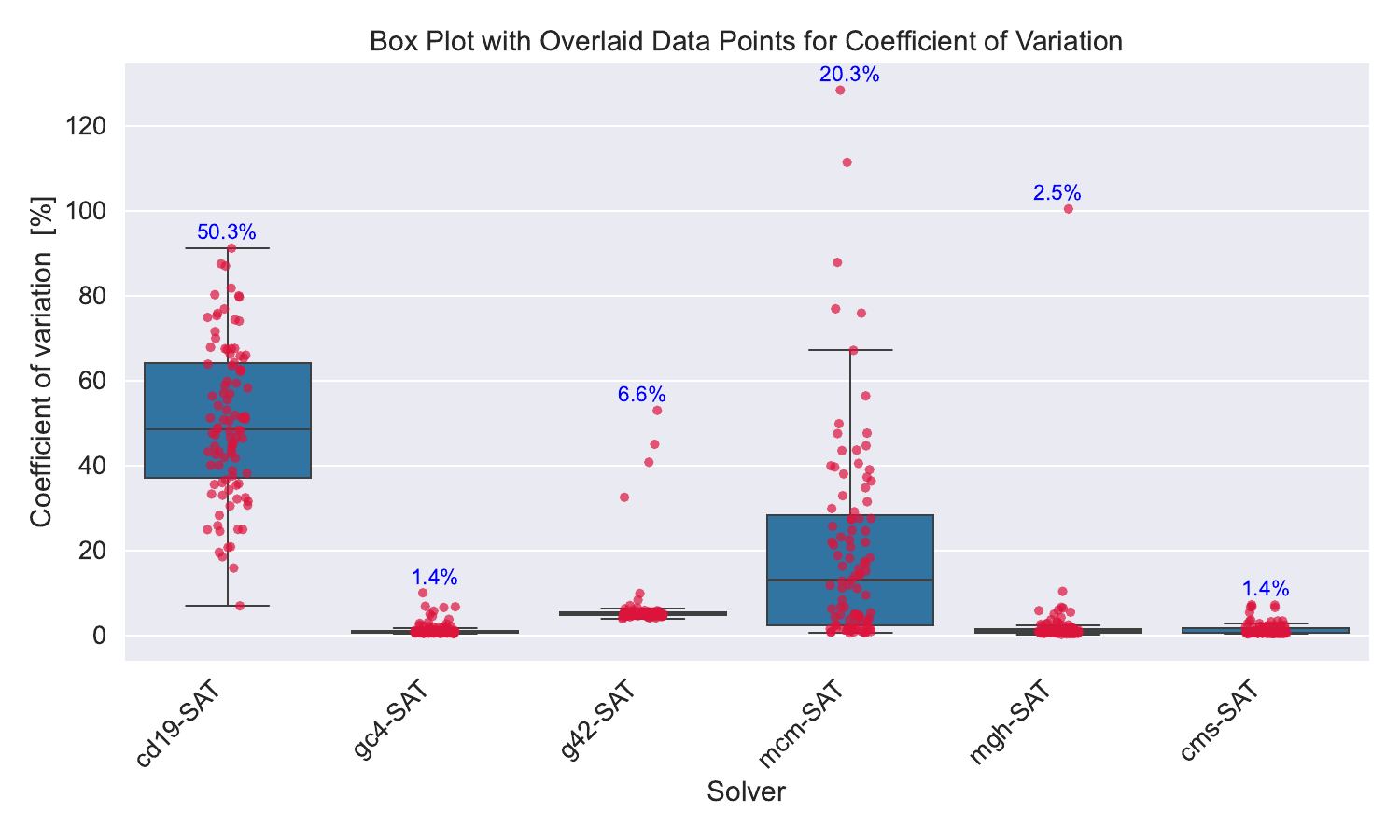}
    \caption{The coefficient of variation (CV) was calculated across 10 runs for each solver and each puzzle of the training set. The results for six different SAT solvers are shown as box plots with an overlay of individual data points.}
    \label{fig:var-solvers}
\end{figure}

\FloatBarrier
\subsection{Comparison of the SAT Formulations}
\label{app:results|SATv2}
\autoref{fig:comp-SATv2} shows the results for the alternative formulation of the SAT problem proposed in \cite{vanstiphoutFlooditSatProblem2020}, see \autoref{app:solvers|satv2}. All variants are slightly slower compared to the first formulation, which is described in \autoref{sec:solvers}, see \autoref{fig:comp-SAT-variants}.
\begin{figure}[ht!]
    \centering
    \begin{subfigure}{0.49\textwidth}
        \includegraphics[width=\linewidth]{survival-SAT-dec-g42.pdf}
        \caption{First SAT formulation}
        \label{fig:comp-SATvariants}
    \end{subfigure}
    \hfill
    \begin{subfigure}{0.49\textwidth}
        \includegraphics[width=\linewidth]{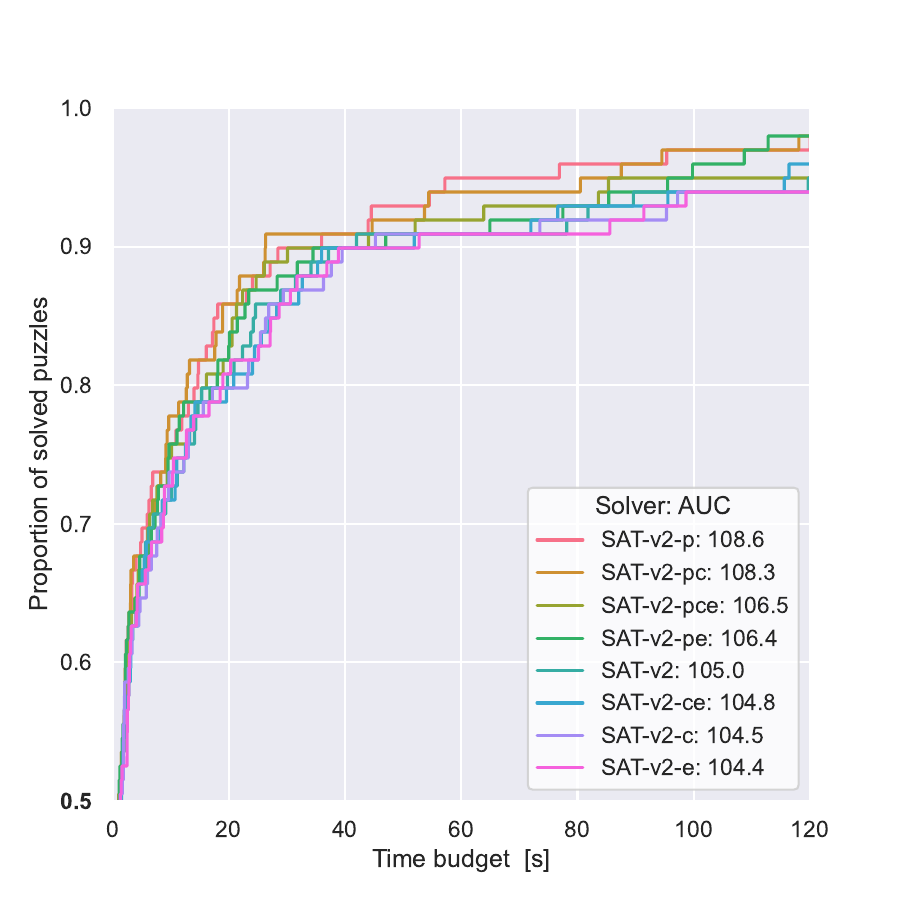}
        \caption{Second SAT formulation}
    \end{subfigure}
    \caption{This graph shows the performance of the two SAT formulations for the decision problem side-by-side. The first formulation is described in \autoref{sec:solvers} and the second in \autoref{app:solvers|satv2}.}
    \label{fig:comp-SATv2}
\end{figure}

\newpage
\subsection{Variation in Solving Time}
\label{app:results|variation}
The time it takes to solve a Flood-It instance varies for different reasons. On one hand, the complexity varies, and on the other hand, some approaches use randomization.

While A* search is deterministic, some SAT, ILP and CP Solver may resort to randomized search in certain situations.
\autoref{fig:var-methods-subplots} shows solution times for five selected puzzle instances in ten repeated runs. The puzzles were solved with all eight variants of the SAT formulation using the \solver{g42} solver. The variation is high whenever constraint (E) is used.
\begin{figure}[ht!]
    \centering
    \includegraphics[width=\linewidth]{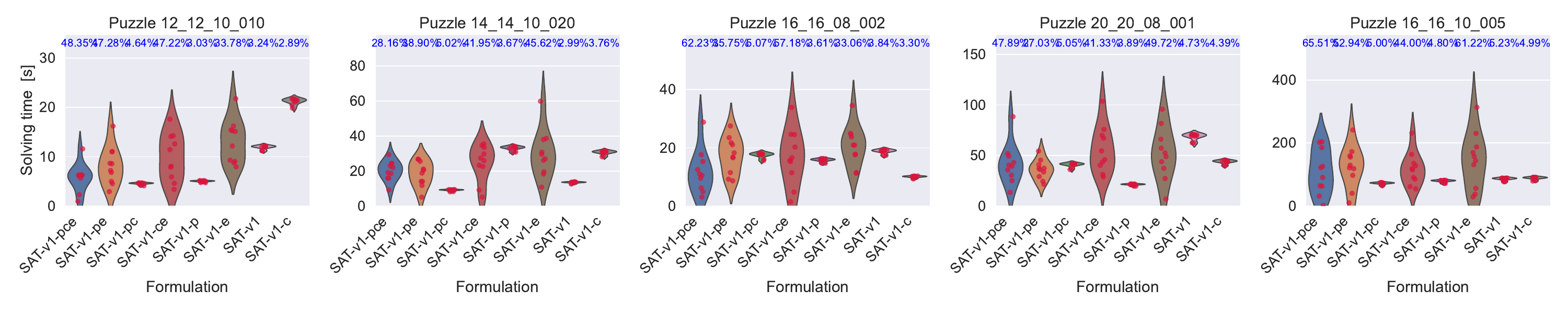}
    \caption{\textbf{Distribution of solving times.} Five selected instances have been solved repeatedly in ten runs using the eight different SAT formulations. The distribution of the ten solving times is shown as violinplot for each solver and instance. The numbers on top are the coefficients of variation.}
    \label{fig:var-methods-subplots}
\end{figure}

While \autoref{fig:var-methods-subplots} shows the CV for the distributions of five selected instances printed above each violin, \autoref{fig:var-methods} shows the \emph{distribution of CVs} to visualize the overall variability in the whole database. This confirms that constraint (E) introduces a high variability in the solving times.
\begin{figure}[ht!]
    \centering
    \includegraphics[width=\linewidth]{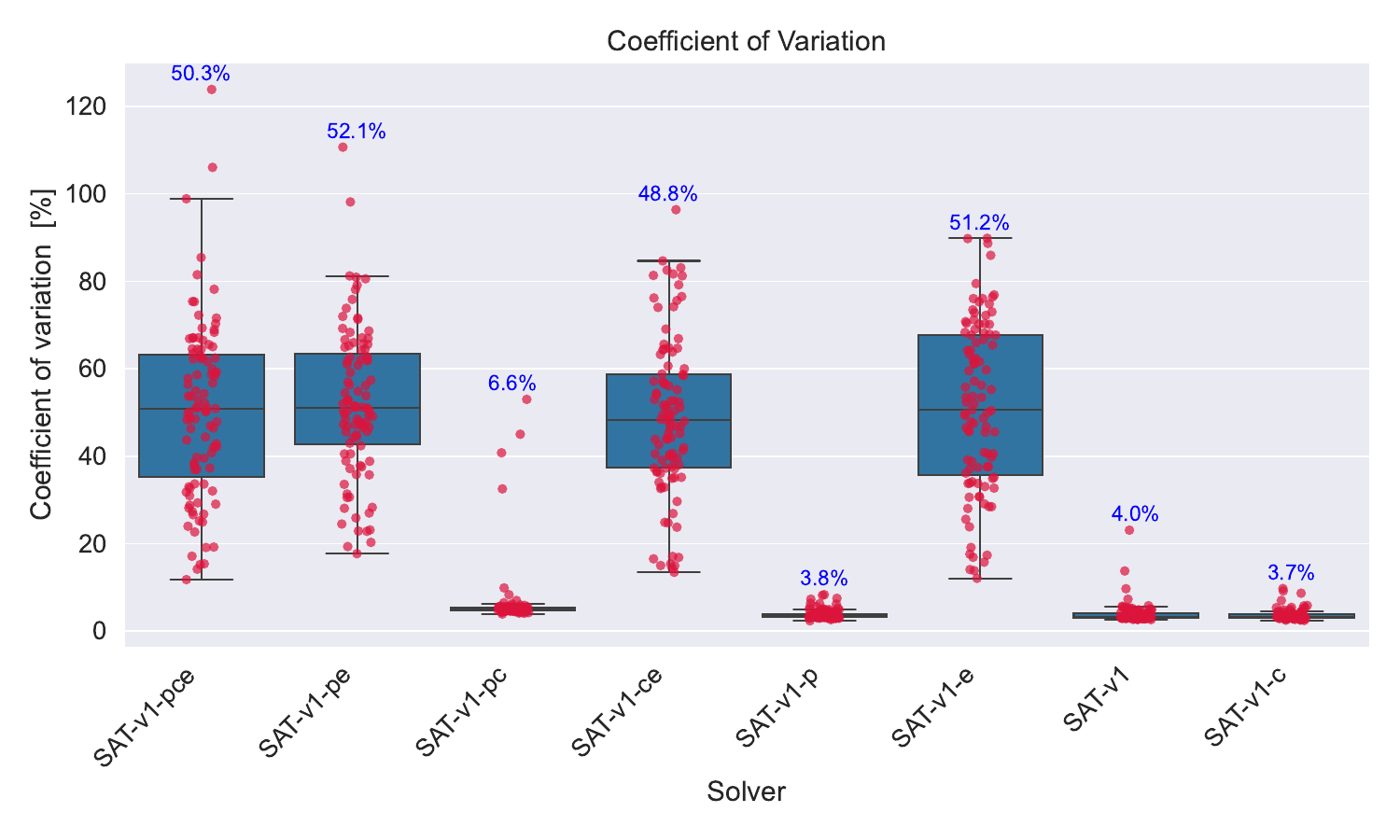}
    \caption{\textbf{Coefficient of variation (CV) across repeated runs.} The same puzzle instances have been solved using different formulations in the SAT solver to calculate CVs. The boxplot of CVs is shown with an overlay of the individual CVs for each instance. A number above a box shows the median CV. Data: 99 Flood-It instances of the training set.}
    \label{fig:var-methods}
\end{figure}

Variability in solving time of the different SAT solvers is shown in \autoref{app:results|sat-solvers}, \autoref{fig:var-solvers}.

While the grid size and the number of colors have a substantial impact on the time required for solving, there is still quite a variability across puzzles, when those parameters are kept fixed, see \autoref{app:var-within-complexity}, \autoref{fig:var-within-complexity}.

\FloatBarrier
\newpage
\subsection{Comparison of Integer and Constraint Programming Solvers}
\label{app:results|ilp-csp-solvers}
\begin{figure}[!ht]
    \centering
    \includegraphics[width=0.66\linewidth]{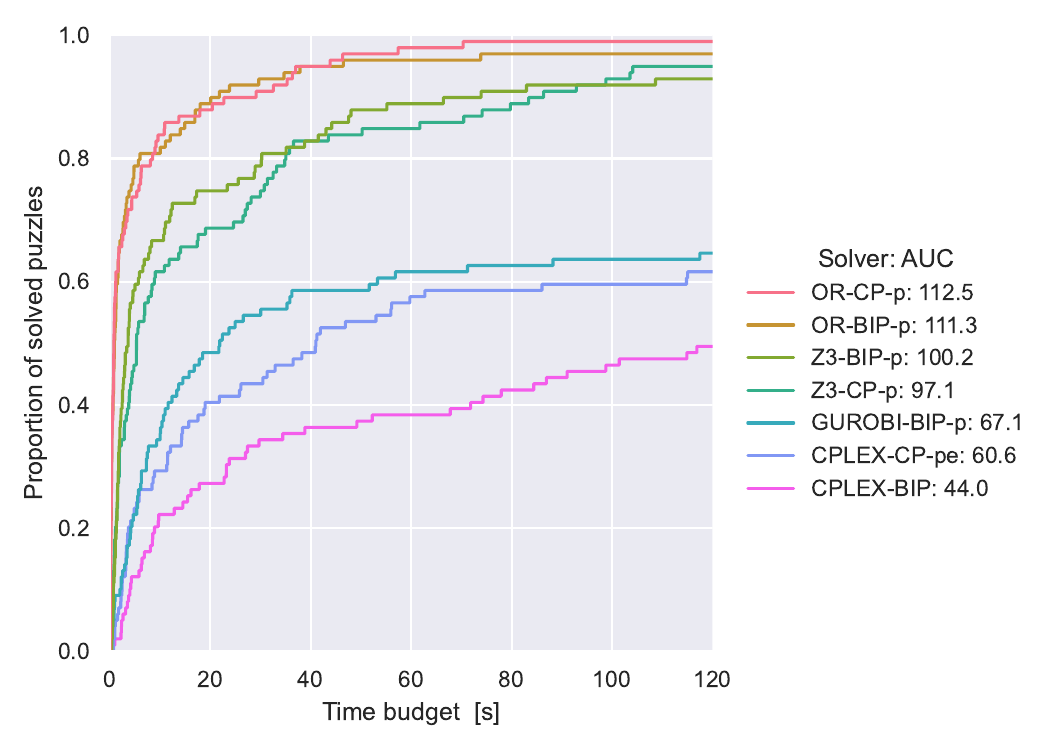}
    \caption{Comparison of the best constraint (CP) and binary integer programming (BIP) version of each solver (Z3, Gurobi, CPLEX) for the decision problem as survival plot. For each time budget on the $x$-axis up to the timeout limit of 120~s, the curves represent the ratio of puzzles solved within that time limit. The values in the legend specify the area under the curve (AUC) as an overall measure of performance. The appendix `-p' refers to the inclusion of the optional constraint (P), see \autoref{sec:solvers}.}
    \label{fig:survival-ILP-CP}
\end{figure}

\FloatBarrier
\newpage
\subsection{Variation in Solving Time for Fixed Size and Colors}
\label{app:var-within-complexity}
\begin{figure}[!ht]
    \centering
    \includegraphics[width=\linewidth]{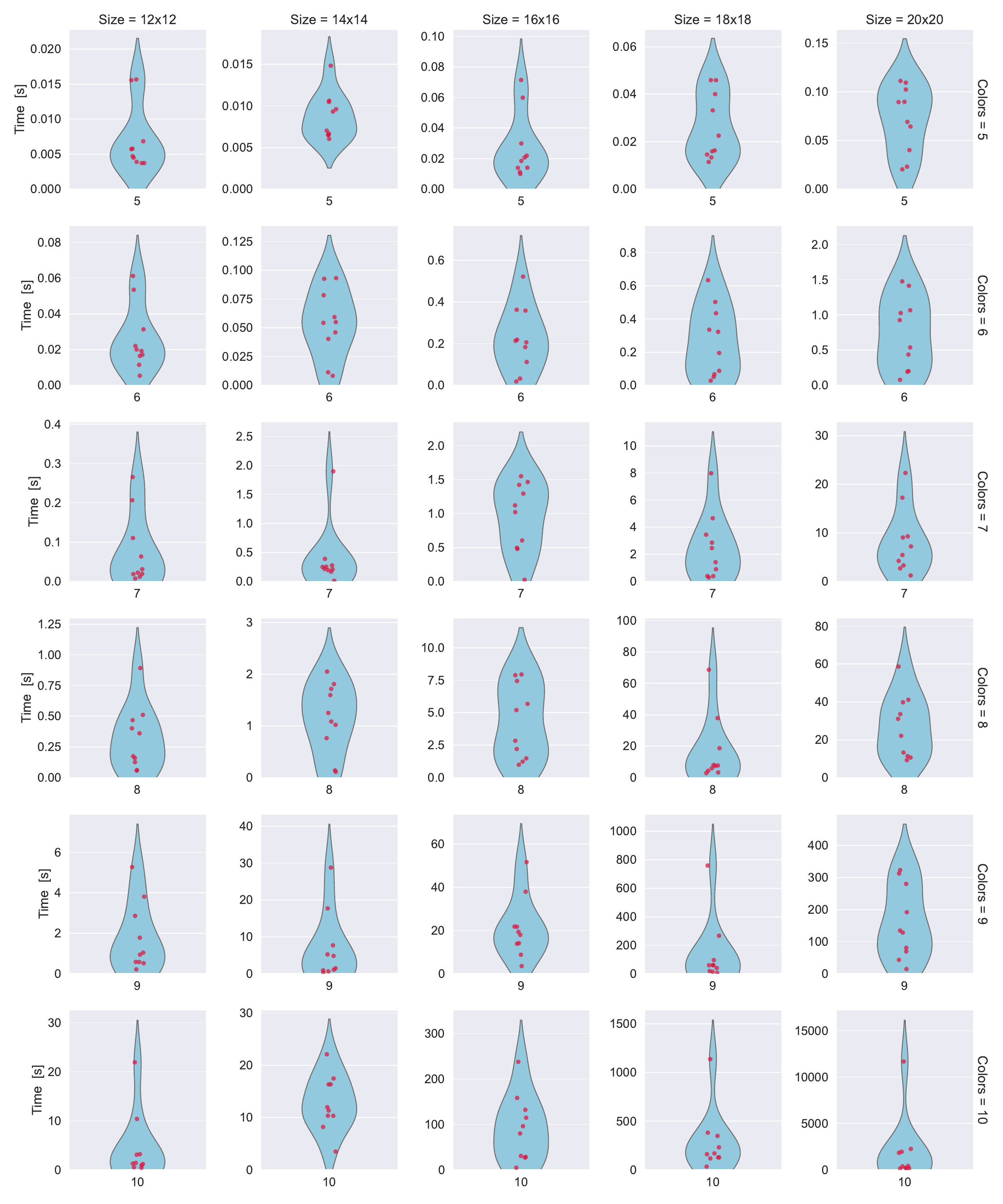}
    \caption{\textbf{Variation in solving time for fixed `complexity'.} Each plot shows data for a fixed combination of puzzle \emph{size} and number of \emph{colors}, which can be seen as a measure of (potential) complexity of the puzzle. The violin plots show the distribution of the solution times for the decision problem.}
    \label{fig:var-within-complexity}
\end{figure}

\FloatBarrier
\newpage
\subsection{Solving Time for SAT Decision Solvers with Inaccurate Bound}
\label{app:solving_time_vs_bound}
Adapting the SAT approach for the minimization problem requires an iterative search, which includes calling the SAT decision solver for values of $T$ that do not match the minimum solution length. For the design of a search strategy it is important to know how the solving times of the SAT decision solver varies, when the given number of moves $T$ is either too low (leading to a failure) or too high (allowing non-optimal solutions). We investigated this dependency exemplarily for ten Flood-It instances using the \solver{g42} solver, see \autoref{fig:solving_time_vs_bound_subplots}. Denoting the minimum solution length by $\Tmin$, we observe that the peak solving duration is in all these examples within the interval $[\Tmin-1, \Tmin+1]$. Solving times for values of $T$ outside this interval are mostly lower than for $T=\Tmin$. Solving times are typically falling off quickly further outside that interval. \autoref{fig:solving_time_vs_bound_average|g42} shows the average solving times across the selected 10 puzzle instances.
\begin{figure}[ht!]
    \centering
    \includegraphics[width=\linewidth]{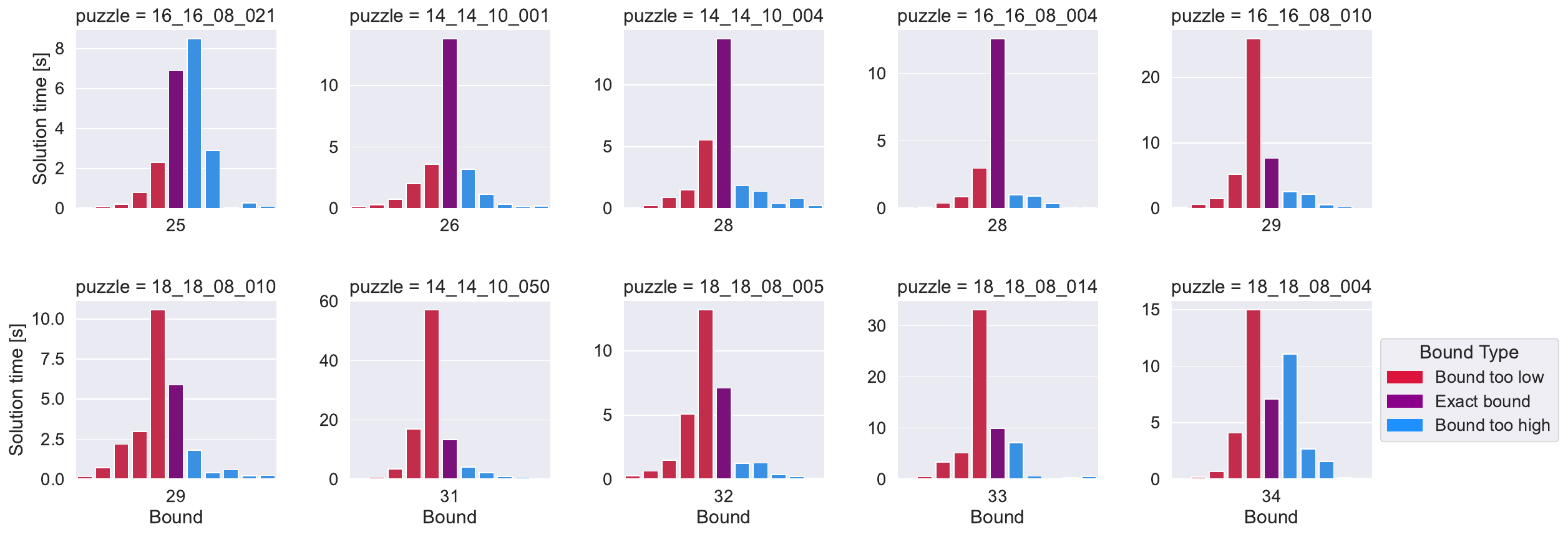}
    \caption{\textbf{Solving time when using different values for solution length $T$.} For ten Flood-It instances, the solution time with the SAT decision solver was measured for solution lengths $T = \Tmin-5,\dots,\Tmin+5$, where $\Tmin$ is the minimum solution length. (Just to clarify: The solvers return `\emph{unsolvable}' for $T<\Tmin$ and might return a suboptimal solution when $T>\Tmin$.)}
    \label{fig:solving_time_vs_bound_subplots}
\end{figure}

\begin{figure}[ht!]
    \centering
    \begin{subfigure}{0.32\textwidth}
        \includegraphics[width=\linewidth]{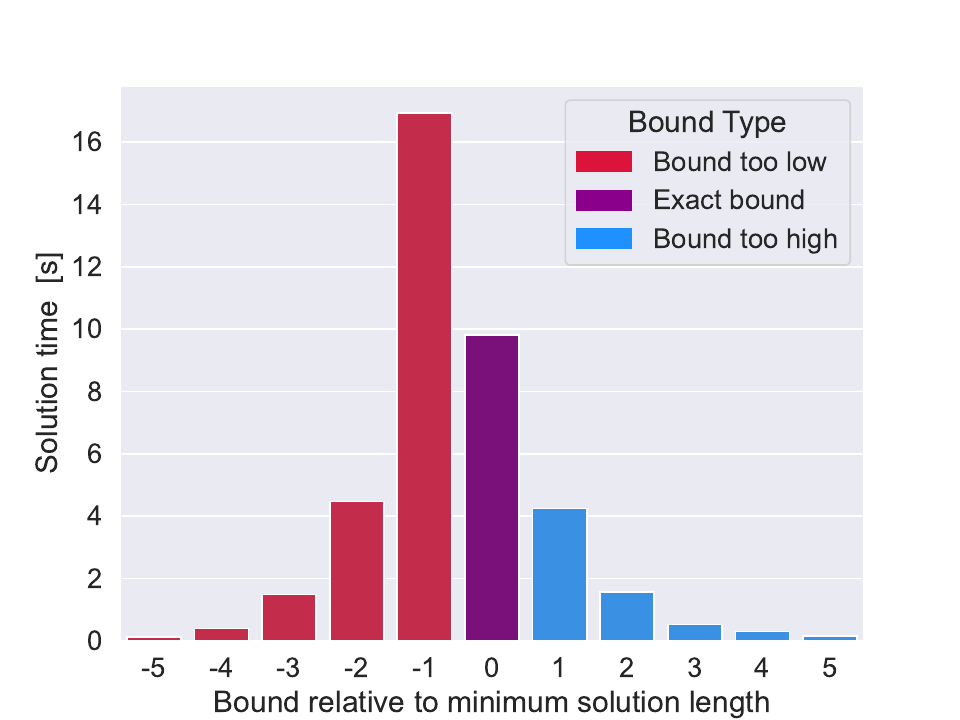}
        \caption{Results using \solver{g42}.}
        \label{fig:solving_time_vs_bound_average|g42}
    \end{subfigure}
    \hfill
    \begin{subfigure}{0.32\textwidth}
        \includegraphics[width=\linewidth]{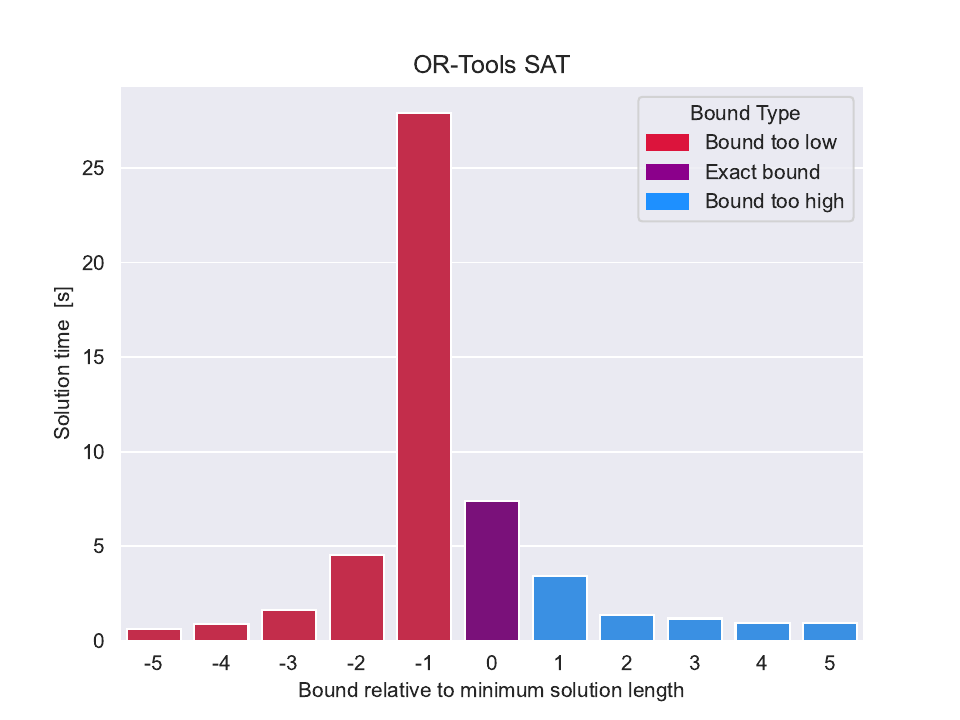}
        \caption{Results using OR-Tools.}
        \label{fig:solving_time_vs_bound_average|ORTools}
    \end{subfigure}
    \hfill
    \begin{subfigure}{0.32\textwidth}
        \includegraphics[width=\linewidth]{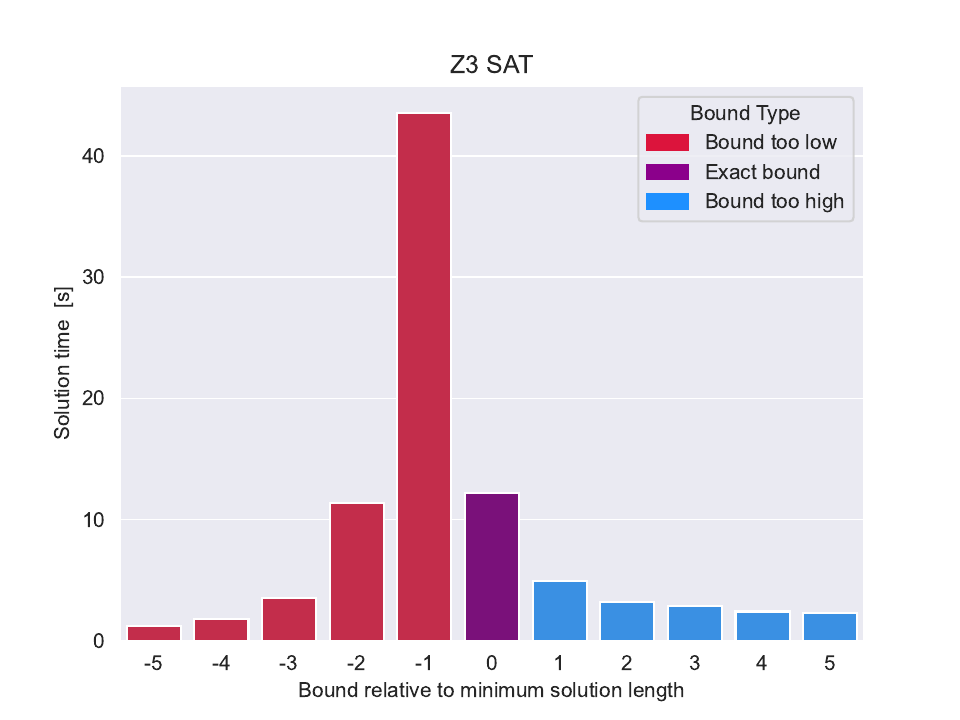}
        \caption{Results using Z3.}
        \label{fig:solving_time_vs_bound_average|Z3}
    \end{subfigure}
    \caption{\textbf{Average solving time when using different values for solution length $T$.} The graph shows the solving times of \autoref{fig:solving_time_vs_bound_subplots} averaged across the 10 puzzle instances. Note the different scaling in the subfigures.}
    \label{fig:solving_time_vs_bound_average}
\end{figure}

Interestingly, the situation is crucially different when using OR-Tools as solver. While this solver is faster than \solver{g42} when the exact bound is given, it is considerably slower when the bound is one below to return the infeasibility of the problem. As this value needs to be probed in the minimization problem, see \autoref{sec:results|minimizing-solvers}, OR-Tools loses in that problem.

\FloatBarrier
\newpage
\subsection{Regression of Solution Lengths in Projections}
\label{app:regr_sol_projection}
\begin{figure}[ht!]
    \centering
    \includegraphics[width=\linewidth]{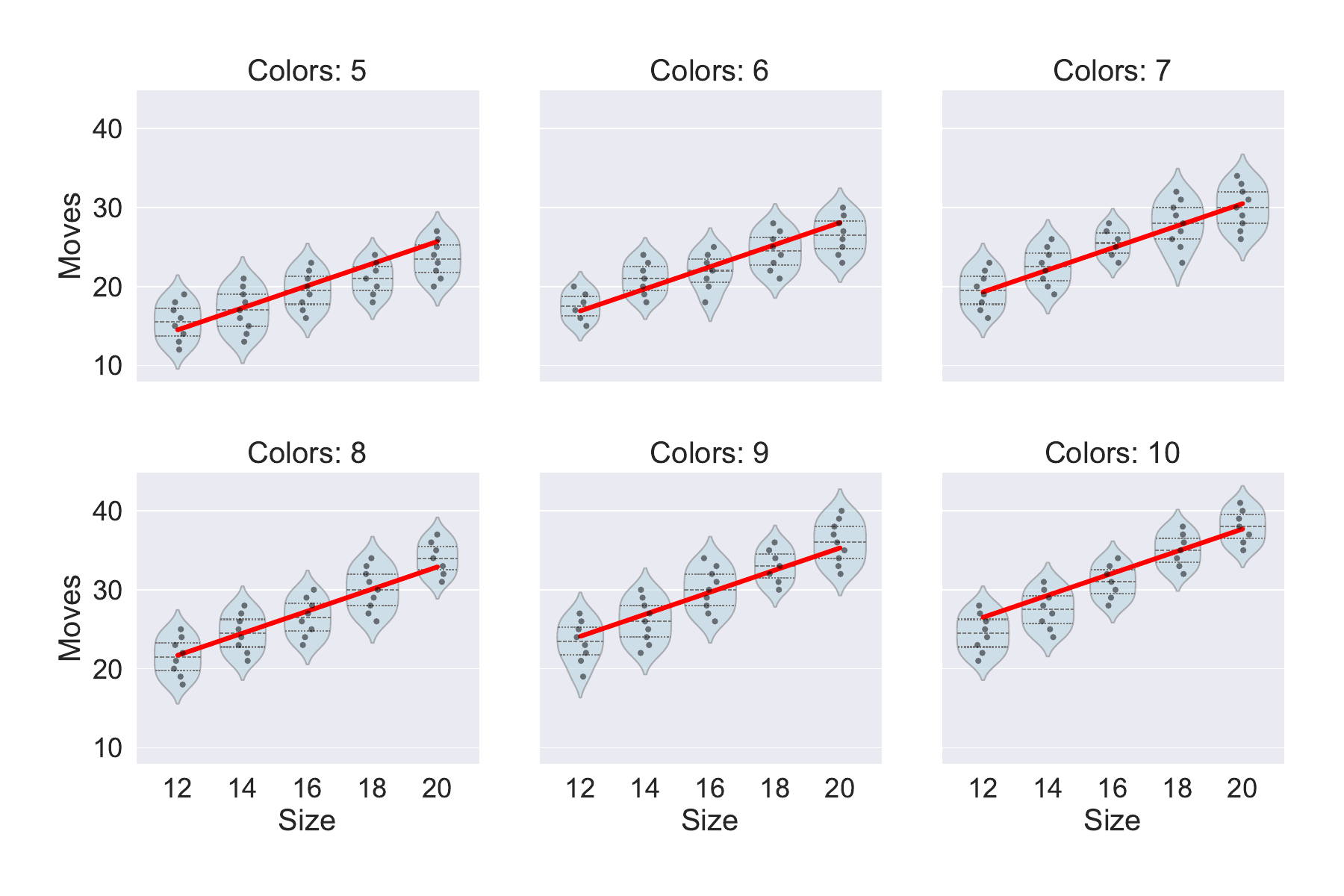}
    \caption{\textbf{Regression of solution length projected to colors.} The plots are projections of the data from \autoref{fig:sol-len-reg} to the different numbers of colors. Each plot shows the distribution of the solution length for the indicated puzzle size as violin plots with an overlay of individual data points. Horizontal lines are shown at quartiles and median. The red line is a projection of the regression plane, see \autoref{fig:sol-len-reg}.}
    \label{fig:sol-len-reg-size}
\end{figure}

\begin{figure}[ht!]
    \centering
    \includegraphics[width=\linewidth]{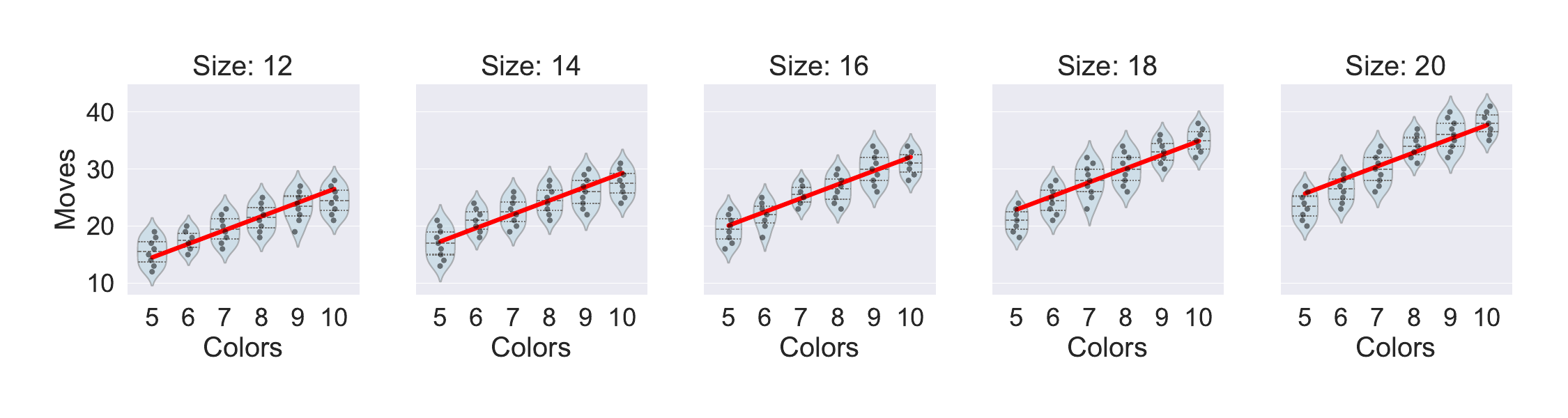}
    \caption{\textbf{Regression of solution length projected to size.} The plots are projections of the data from \autoref{fig:sol-len-reg} to the different puzzle sizes. Each plot shows the distribution of the solution length for the indicated number of colors as violin plots with an overlay of individual data points. Horizontal lines are shown at quartiles and median. The red line is a projection of the regression plane, see \autoref{fig:sol-len-reg}.}
    \label{fig:sol-len-reg-colors}
\end{figure}

\FloatBarrier
\newpage
\subsection{Lower Bound LBL is not a Consistent Heuristic}
\label{app:lbl_not_consistent}
As the example in \autoref{fig:lbl_not_consistent} shows, the lower bound $LBL(S)$ defined in \autoref{def:lbls} does not give rise to a consistent heuristic. Since $LBL(S)$ is admissible by \autoref{lem:lbfurthest}, it can be used in A* to find optimal solutions. However, due to the missing consistency, the search procedure is, in general, not greedy, which means that the same nodes may be visited multiple times, which can hamper efficiency.

\begin{figure}[ht]
    \centering
    \resizebox{0.6\textwidth}{!}{
        	\newcommand{\convexpath}[2]{
		[   
		create hullnodes/.code={
			\global\edef\namelist{#1}
			\foreach [count=\counter] \nodename in \namelist {
				\global\edef\numberofnodes{\counter}
				\node at (\nodename) [draw=none,name=hullnode\counter] {};
			}
			\node at (hullnode\numberofnodes) [name=hullnode0,draw=none] {};
			\pgfmathtruncatemacro\lastnumber{\numberofnodes+1}
			\node at (hullnode1) [name=hullnode\lastnumber,draw=none] {};
		},
		create hullnodes
		]
		($(hullnode1)!#2!-90:(hullnode0)$)
		\foreach [
		evaluate=\currentnode as \previousnode using \currentnode-1,
		evaluate=\currentnode as \nextnode using \currentnode+1
		] \currentnode in {1,...,\numberofnodes} {
			let
			\p1 = ($(hullnode\currentnode)!#2!-90:(hullnode\previousnode)$),
			\p2 = ($(hullnode\currentnode)!#2!90:(hullnode\nextnode)$),
			\p3 = ($(\p1) - (hullnode\currentnode)$),
			\n1 = {atan2(\y3,\x3)},
			\p4 = ($(\p2) - (hullnode\currentnode)$),
			\n2 = {atan2(\y4,\x4)},
			\n{delta} = {-Mod(\n1-\n2,360)}
			in 
			{-- (\p1) arc[start angle=\n1, delta angle=\n{delta}, radius=#2] -- (\p2)}
		}
		-- cycle
	}
	
	\begin{tikzpicture} [every node/.style={fill opacity=0.5, text opacity=100},
		arrow/.style={-latex, shorten >=1ex, shorten <=1ex},
		graphnode/.style={fill opacity=0.5, text opacity=100},
		innernode/.style={draw, draw opacity=100, circle, fill opacity=1, text opacity=100, scale=0.7},
		rednode/.style={innernode, fill=red, node contents={1}},
		cyannode/.style={innernode, fill=cyan, node contents={2}},
		greennode/.style={innernode, fill=green, node contents={3}},
		yellownode/.style={innernode, fill=yellow, node contents={4}},
		magentanode/.style={innernode, fill=magenta, node contents={5}},
		orangenode/.style={innernode, fill=orange, node contents={6}},
		edgenode/.style={draw opacity=0, scale=0.8, midway, above},
		rededgenode/.style={rednode, edgenode},
		orangeedgenode/.style={orangenode, edgenode},
		cyanedgenode/.style={cyannode, edgenode},
		greenedgenode/.style={greennode, edgenode},
		blueedgenode/.style={yellownode, edgenode},
		magentaedgenode/.style={magentanode, edgenode}]
		
		\node (111)  at (0, 1)
		{
			\begin{tikzpicture}[]
				\node (1) at (0,2) [greennode];
                
				\node (2) at (1,3) [rednode];
				\node (3) at (1,1) [cyannode];
                
				\node (4) at (2,3.5) [orangenode];
				\node (5) at (2,2.5) [greennode];
				\node (6) at (2,1.5) [yellownode];
                    \node (7) at (2,0.5) [magentanode];
				
				\draw (1) -- (2) -- (4);
                    \draw (2) -- (5);
                    \draw (1) -- (3) -- (6);
                    \draw (3) -- (7);

				\node (11) at (0,4)[opacity=1] {$L_0$};
				\node (11d) at (0,0.3)[opacity=0] {};
				
				\node (12) at (1,4)[opacity=1] {$L_1$};
				\node (12d) at (1,0.3)[opacity=0] {};
				
				\node (13) at (2,4)[opacity=1] {$L_2$};
				\node (13d) at (2,0.3)[opacity=0] {};

				\begin{scope}[on background layer]
					\fill[yellow, opacity=0.3] \convexpath{11, 11d}{11pt};
					\fill[yellow, opacity=0.3] \convexpath{12, 12d}{11pt};
					\fill[yellow, opacity=0.3] \convexpath{13, 13d}{11pt};
				\end{scope}
			\end{tikzpicture}
		};
		\node (122)  at (5, 1)
		{
			\begin{tikzpicture}[]
				\node (1) at (0,2) [greennode];
                
				\node (2) at (1,3) [rednode];
				\node (3) at (1,1) [cyannode];
                
				\node (4) at (2,3.5) [orangenode];
				\node (5) at (2,2.5) [greennode];
				\node (6) at (2,1.3) [yellownode];
                    \node (7) at (2,0.3) [magentanode];
				
				\draw (1) -- (2) -- (4);
                    \draw (2) -- (5);
                    \draw (1) -- (3) -- (6);
                    \draw (3) -- (7);

				\node (11) at (0,4)[opacity=1] {$L_0$};
				\node (11d) at (0,0.3)[opacity=0] {};
				
				\node (12) at (1,4)[opacity=1] {$L_1$};
                    \node (12b) at (1,2.2)[opacity=1] {};
				\node (12d) at (1,0.3)[opacity=0] {};
				
				\node (13) at (2,4)[opacity=1] {$L_2$};
				\node (13d) at (2,0.3)[opacity=0] {};

				\begin{scope}[on background layer]
					\fill[yellow, opacity=0.3] \convexpath{11, 1, 3, 12d, 11d}{11pt};
					\fill[yellow, opacity=0.3] \convexpath{12,  12b, 6, 13d, 6, 12b}{11pt};
					\fill[yellow, opacity=0.3] \convexpath{13, 5}{11pt};
				\end{scope}
			\end{tikzpicture}
		};

            \draw [arrow]  (111) -- (122)  node [cyanedgenode];
	\end{tikzpicture}
    }
    \caption{\textbf{$LBL$ as heuristic is not consistent:} The graph depicted on the left side has lower bound $LBL$ of $5$ (determined by level $i=2$). After one move of playing color 2 (blue), the lower bound $LBL$ of the resulting game state goes down to $3$ (determined by both, level $i=1$ and $i=2$). The value of a consistent heuristic may only decrease by $1$ after performing one move.}
    \label{fig:lbl_not_consistent}
\end{figure}

\end{document}